\documentclass{article}

\usepackage{PRIMEarxiv}

\usepackage[utf8]{inputenc} 
\usepackage[T1]{fontenc}    
\usepackage{hyperref}       
\usepackage{url}            
\usepackage{booktabs}       
\usepackage{amsfonts}       
\usepackage{nicefrac}       
\usepackage{microtype}      
\usepackage{lipsum}
\usepackage{fancyhdr}       
\usepackage{graphicx}       
\graphicspath{{media/}}     
\usepackage{hyperref}
\usepackage{url}
\usepackage{amsthm}
\usepackage{algorithm}
\usepackage{capt-of}
\usepackage{algorithmic}
\usepackage{graphicx}
\usepackage{xcolor}
\newtheorem{proposition}{Proposition}
\newtheorem{theorem}{Theorem}
\newtheorem{lemma}{Lemma}
\newtheorem{corollary}{Corollary}
\theoremstyle{definition}

\usepackage{booktabs}
\newcommand{\ours}{BELT}
\newcommand{\Wdown}{W^{\mathrm{down}}}
\newcommand{\Wup}{W^{\mathrm{up}}}
\newcommand{\yinjun}[1]{\textcolor{brown}{\textbf{[#1]}}}
\newcommand{\eat}[1]{}
\theoremstyle{remark} 

\usepackage{enumitem}
\usepackage{multirow}
\usepackage{multicol}
\usepackage{natbib}
\usepackage{amsmath}

\title{\ours: BEhavior-Aligned Low-Rank KV Cache Compression with Learned Top-k Rank Allocation}

\author{
  Yaoyang Liu \\
  Peking University \\
\And
  Shuwei Shen \\
  JD.com \\
\And
  Jincheng Tang \\
  Peking University \\
\And
  Jinguang Liu \\
  JD.com \\
\AND
  Jinbao Xue \\
  Tencent \\
\And
  Hailong Yang \\
  BeiHang University \\
\And
  Ke Zhang \\
  JD.com \\
\And
  Yinjun Wu \\
  Peking University \\
}

\begin{document}
\maketitle

\begin{abstract}
    Long-context inference makes the key-value (KV) cache a major memory
bottleneck for large language models. We propose \ours, a low-rank
KV cache compression framework. Its key-side objective targets
attention distributions rather than raw keys or pre-softmax logits,
yielding a quadratic surrogate with a closed-form low-rank solution.
A learned global rank allocator uses differentiable soft selection
during training and hard selection at deployment to meet a
user-specified component budget.
Experiments demonstrate improved model quality under aggressive
compression and increased decoding throughput at comparable accuracy.
Evaluated across three LLMs and multiple benchmarks, \ours\ achieves up to
70\% low-rank KV cache compression while outperforming the strongest
baseline by up to 23.3 points on LongBench, and up to $17.8\times$ overall
KV cache reduction when combined with quantization. With Triton-based GPU
kernels and a quantized cache, \ours\ delivers up to $2.7\times$ speedup
for the attention module and $3.3\times$ end-to-end decoding throughput.

\end{abstract}

\keywords{First keyword \and Second keyword \and More}

\section{Introduction}
Large language models (LLMs) have become ubiquitous in modern AI
applications, yet their growing scale---in both model size and context
length---poses significant inference challenges. Modern LLMs support
128K-token contexts or more
\citep{liu2026retrievalattention, stallone2024scaling, xu2025chatqa},
making the {\bf key-value (KV) cache} a critical inference bottleneck.
During autoregressive generation, the cache stores keys and values
for preceding tokens, and its memory footprint grows linearly with
sequence length, potentially exceeding available GPU memory
\citep{jiang2026towards}. Efficient KV cache management is therefore
essential, motivating approaches such as {\bf token eviction}
\citep{zhang2023h2o, liu2023scissorhands}, {\bf quantization}
\citep{hooper2024kvquant, lin2025qserve}, and
{\bf low-rank decomposition}
\citep{chang2024palu, bhatnagar2026starkv}.

Among these, \emph{low-rank decomposition} exploits hidden-dimension
redundancy to reduce KV cache memory. Prior work follows two broad
directions: adapting general-purpose decomposition methods to the
KV cache
\citep{wang2025svd, yuan2023asvd, abbasi2026zero, bai2025erc,
bai2025ressvd, zhou2026s}, and developing KV-specific methods tailored
to attention and caching dynamics
\citep{zhang2024lorc, chang2024palu, lee2025tale, mu2026sals,
ji2026echokv, chang2025xkv}. STAR-KV~\citep{bhatnagar2026starkv}
and A$^3$~\citep{wong2025a3} represent recent state-of-the-art advances
in adaptive rank control and analytical attention approximation,
respectively.

Despite this progress, two challenges still remain.
First, \emph{the compression criterion of the state-of-the-art does not reflect the effect of approximation errors on attention behavior}. Methods based on KV reconstruction do not explicitly account for how key errors interact with queries and the nonlinear softmax map.
As illustrated in Figure~\ref{fig:attention-case}, retaining dominant
singular directions of the KV tensors can distort attention and
change the generated answer.
A$^3$ moves beyond raw key reconstruction by targeting
\emph{pre-softmax attention logits}, which, however, 
is not the same as preserving attention probabilities or the resulting attention output.
{\em This motivates a key-compression criterion that explicitly accounts for post-softmax behavior}.

Second, \emph{allocating ranks across heterogeneous KV projections
under a prescribed global budget remains challenging}.
Uniform allocation ignores differences in compression sensitivity,
while sensitivity-based searches and learned threshold methods offer
different approaches to adaptive allocation. STAR-KV~\citep{bhatnagar2026starkv} learns block-specific thresholds
with a soft compression penalty, without directly enforcing a
prescribed global component budget. Our analysis further identifies these learned thresholds concentrate under certain conditions, potentially
limiting allocation effectiveness.
\begin{figure}[t]
\begin{center}
\includegraphics[width=\textwidth]{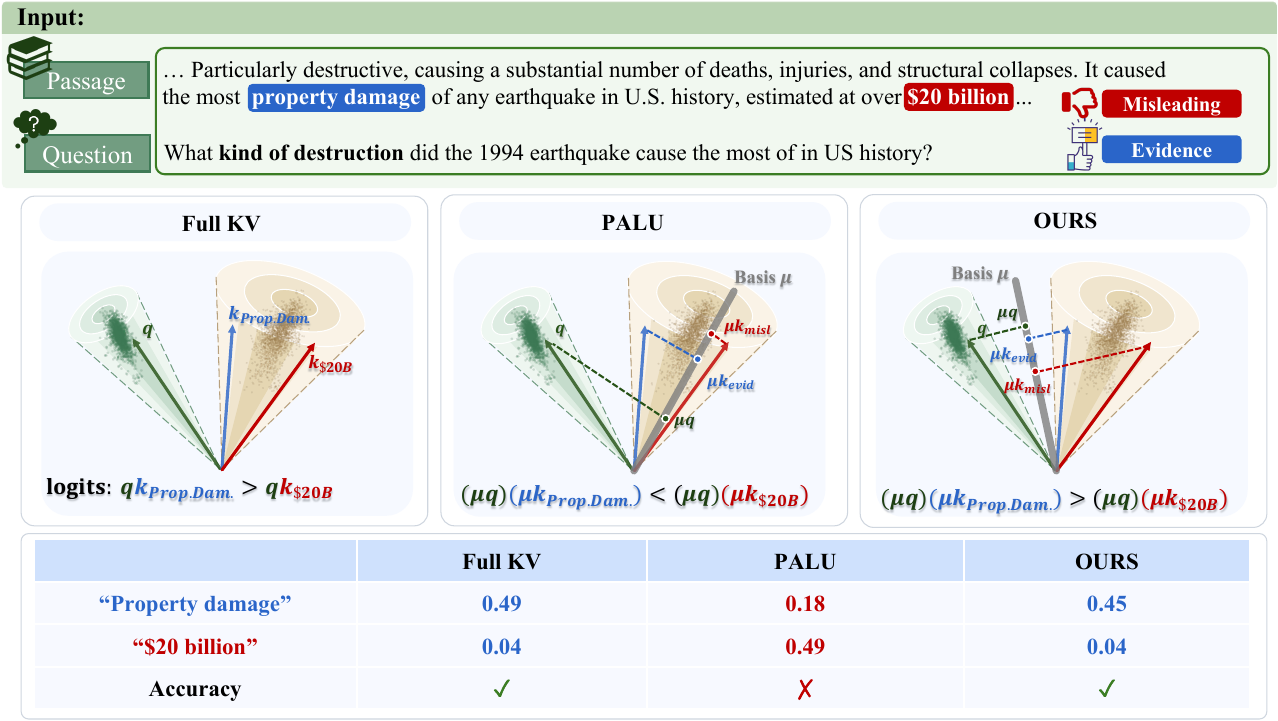}
\end{center}
\caption{\textbf{Left:} Full attention, where each query attends to all context tokens. \textbf{Middle:} State-of-the-art low-rank compression (e.g., vanilla SVD) retains only the dominant KV eigenvector \(\mu\), reducing the query--key product from \(q^\top k\) to \((q^\top \mu)(\mu\top k)\). Since \(\mu\) is derived solely from the input-context tokens and ignores subsequent queries, this distorts the attention distribution and can yield an incorrect answer. \textbf{Right:} The dominant eigen-vector derived by ours aligns the query and key decompositions, preserving the downstream attention output and yielding the correct answer.
}
\label{fig:attention-case}
\end{figure}

We address these challenges from two complementary perspectives.
\emph{First, we design compression objectives to preserve the attention
output of the uncompressed model}.
While our value-side construction is closely related to
A$^3$~\citep{wong2025a3}, our key-side objective targets
attention-distribution preservation rather than pre-softmax logit
reconstruction like A$^3$. By accounting for query-dependent softmax sensitivity
and RoPE interactions, \ours\ better preserves attention behavior
under aggressive compression, as demonstrated empirically.
\emph{Second, we learn ranks across KV blocks under an explicit global
budget}. 
Our allocator learns how to distribute ranks across attention heads
according to their compression sensitivity.
We use a differentiable relaxation of top-$B$ selection for end-to-end
training, followed by hard selection of $B$ components, where $B$ is
the user-specified budget.

Extensive experiments evaluate \ours\ across multiple models,
compression budgets, and tasks.
At 70\% KV cache compression on Qwen3-32B, \ours\ improves the
average score over the strongest evaluated baseline by
23.3 points on LongBench and 13.8 points on RULER.
We additionally evaluate perplexity, zero-shot accuracy, and the
contributions of the compression bases and rank allocation;
combined with 3-bit KV quantization, \ours\ reaches $17.8\times$ KV cache
compression within 2 points of the uncompressed zero-shot accuracy.
System-level measurements show up to $3.26\times$
higher decoding throughput and 87\% lower KV cache memory
than full-cache inference on LLaMA-2-7B with KV quantization.
These results demonstrate the practical benefits of attention-aware
compression and globally coordinated rank allocation.

In summary, our contributions are:
\begin{itemize}[noitemsep, topsep=0pt, left=0pt]
    \item We formulate an attention-output-motivated approach to
    low-rank KV compression, with a key-side metric derived from
    attention-distribution KL rather than pre-softmax logit
    reconstruction.

    \item We derive quadratic compression surrogates with exact
    weighted-SVD solutions, extend the construction to RoPE and
    GQA, and explicitly characterize their approximation errors.

    \item We introduce a learned global rank allocator that
    enforces a relaxed budget during training and an exact
    component-count budget at deployment, while preserving an
    ordered prefix within each KV block.

    \item We empirically evaluate model quality, approximation
    accuracy, rank allocation, and decoding efficiency under
    aggressive KV cache compression.
\end{itemize}

\section{Preliminaries}
\label{sec:prelim}


\subsection{Attention and KV Cache}
\label{sec:prelim:attn}
We consider a single attention head in a Transformer decoder layer~\citep{vaswani2017attention}.
Let \(X \in \mathbb{R}^{n \times d}\) be the stacked representations of the \(n\) preceding tokens stored in the KV cache. For any input token \(x \in \mathbb{R}^{1 \times d}\), each head computes its query, key, and value projections as
\begin{small}
    \begin{align*}
    q_x = xW_Q,\quad k_x = xW_K,\quad v_x = xW_V,        
    \end{align*}
\end{small}
where \(W_Q, W_K, W_V \in \mathbb{R}^{d \times d_h}\) and \(d_h\) is the per-head dimension.
During autoregressive generation, the queries, keys, and values of the \(n\) tokens are obtained from \(X\) as
\begin{small}
    \begin{align*}
    Q = X W_Q, \quad K = X W_K,\quad V = X W_V        
    \end{align*}
\end{small}
so the KV cache stores \(K, V \in \mathbb{R}^{n \times d_h}\), which is interacted with $q$ for attention calculation as follows:
\begin{small}
\begin{align}\label{eq: attention_out}
\begin{split}
&o_{x}(W_K,W_V) = a_x(W_K)\,V\,W_O \in \mathbb{R}^{1 \times d} = a_x(W_K)\,X\,W_V\,W_O \in \mathbb{R}^{1 \times d},\\
&\text{where }
a_x(W_K) = \operatorname{Softmax}\!\left(\frac{q_x\,K}{\sqrt{d_h}}\right) = \operatorname{Softmax}\!\left(\frac{q_x\,X W_K}{\sqrt{d_h}}\right) \in \mathbb{R}^{1 \times n}, \text{ and } W_O \in \mathbb{R}^{d_h \times d}
\end{split}
\end{align}    
\end{small}

In this formula, $a_x(W_K)$ is the attention probability vector, where we explicitly highlight its reliance on $W_K$. Similarly, we use $o_{x}(W_K,W_V)$ to denote the final attention output which relies on both $W_k$ and $W_V$.  In practice, Rotary Position Embedding (RoPE)~\citep{su2024roformer} is applied to queries and keys as a non-linear, position-dependent embedding before attention. More detailed description for attention mechanism with RoPE and how \ours\ is generalized to RoPE is Appendix \ref{app: rope_extension}. 

State-of-the-art large language models typically employ multiple attention heads per layer with two widely adopted strategies: Multi-head attention (MHA) runs \(h\) such heads in parallel, while grouped-query attention (GQA) shares each key--value pair among a group of query heads. Our framework aims to compress KV cache at individual heads, which thus applies to both. We start by assuming the use of MHA by default and discuss the extension of \ours\ to GQA in Appendix \ref{app:gqa}.

\subsection{Low-Rank KV Cache Compression}
\label{sec:prelim:lowrank}

Low-rank compression reduces the KV cache size by approximating each key/value projection matrix \( W_K \in \mathbb{R}^{d \times d_h}, W_V \in \mathbb{R}^{d \times d_h} \) with a rank-\( r \) factorization (\( r < d_h \)):
\begin{small}
    \begin{align*}
    W_K \approx \hat{W}_K = \Wdown_K \Wup_K, W_V \approx \hat{W}_V = \Wdown_V \Wup_V        
    \end{align*}
\end{small}
where \( \Wdown_K, \Wdown_V \in \mathbb{R}^{d \times r} \), \( \Wup_K, \Wup_V \in \mathbb{R}^{r \times d_h} \) and $\hat{W}_K, \hat{W}_V \in \mathbb{R}^{d \times d_h}$, which all have rank at most $r$. For an input token sequence \( X\), we project it using $\hat{W}_K$ and $\hat{W}_V$ as follows:
\begin{small}
    \begin{align*}
        X W_K \approx X \Wdown_K \Wup_K, X W_V \approx X \Wdown_V \Wup_V
    \end{align*}
\end{small}
where \( X \Wdown_K , X \Wdown_V \in \mathbb{R}^{n \times r} \) are the compressed KV cache. Instead of storing the full \( K, V \in \mathbb{R}^{n \times d_h} \), we cache only the compressed representations \( K' = X \Wdown_K \) and \( V' = X \Wdown_V \).
This reduces the per-token memory footprint from \( d_h \) to \( r \) during inference. Following prior studies, e.g., \cite{bhatnagar2026starkv, chang2024palu}, $\Wdown_K$, $\Wdown_V$, $\Wup_K$ and $\Wup_V$ are learned with a small calibration dataset, $\mathcal{X}$. 


\subsection{Inference with Compressed Cache}
\label{sec:prelim:absorption}

The overall inference process with low-rank decomposition remains largely the same as that with no compression. The only difference is at the attention computation phase, where low-rank decomposition requires that the input sequence $X$ is first projected down using $\Wdown_K$ and $\Wdown_V$, which is then cached throughout the inference phase, and projected back to $d_h$ dimensions using $\Wup_K$ and $Wup_V$ respectively.
Weight absorption~\citep{liu2024deepseekv2,chang2024palu} further accelerates this process by merging $\Wup_V$ into the subsequent projection for value tensors. Specifically, the attention output defined in \eqref{eq: attention_out} changes from \(a_x(W_K)\,V\,W_O\) to \(a_x(W_K)\,V'\,\Wup_V\,W_O\). Since both \(\Wup_V\) and \(W_O\) are fixed, precomputing \(W_O' = \Wup_V W_O \in \mathbb{R}^{r \times d}\) yields \(a_x(W_K)\,V'\,W_O'\), eliminating the need of multiplying projecting $V'$ back to $d_h$ dimensions at each step. In contrast, a similar absorption does not apply to the key tensor since \(\Wup_K\) cannot be absorbed with the RoPE computation due to its non-linearity nor commute it to merge it with other matrices like the query projection \(W_Q\). 

\eat{
Recovering full vectors from the compressed cache at each decoding step increases the computational complexity from.
Weight absorption~\citep{liu2024deepseekv2,chang2024palu} avoids
this by merging~$\Wup$ into an adjacent projection weight.
For values, replacing $V$ with the compressed cache
$V'\!\in\mathbb{R}^{n\times r}$ changes the per-head output
(Section~\ref{sec:prelim:attn}) from
$a\,V\,W_O$ to $(a\,V'\,\Wup)\,W_O$.
Because $\Wup$ and $W_O$ are fixed during inference,
precomputing $W_O' = \Wup\,W_O \in \mathbb{R}^{r\times d}$
yields $o = a\,V'\,W_O'$ without reconstructing full vectors. On the other hand, for keys, a similar absorption would merge the up-projection
$\Wup$ into the query projection.
However, Rotary Position
Embedding~(RoPE)~\citep{su2024roformer} applies a
position-dependent rotation that does not commute with the low-rank
projection, so keys must be reconstructed at decoding time.
}

\eat{
\subsection{Problem Formulation}
\label{sec:prelim:problem}

We formulate low-rank KV cache compression as two coupled
optimization problems, evaluated over calibration data.

\paragraph{Per-head compression.}
For each attention head, the projection
$W\!\in\mathbb{R}^{d\times d_h}$ is approximated by a rank-$r$
factorization $\hat{W} = \Wdown\Wup$
(Section~\ref{sec:prelim:lowrank}).
Let $\mathcal{L}(\hat{W})$ measure the deviation between the head
outputs under~$\hat{W}$ and~$W$, aggregated over calibration data.
The per-head problem is
\begin{small}
    \begin{equation}
  \hat{W}^* = \operatorname*{arg\,min}\nolimits_{\operatorname{rank}(\hat{W})\leq r}
  \;\mathcal{L}(\hat{W}).
  \label{eq:per_head}
\end{equation}
\end{small}
Section~\ref{sec:method:loss} defines~$\mathcal{L}$ for value and
key projections.

\paragraph{Cross-head rank allocation.}\yinjun{remember to update this when we need to consider the joint allocation of rank and quantization budget}
A model contains $N$ key and value projections across all heads and
layers, with a total rank budget~$B$ that directly determines the
memory footprint of the compressed cache.
Given per-head solutions for all ranks, the allocation problem is
\begin{equation}
  \{r_b^*\}_{b=1}^{N}
  = \operatorname*{arg\,min}\nolimits_{\substack{r_1,\dots,r_N \geq 1, \sum_{b} r_b = B}}
  \;\mathcal{L}_{\text{model}}\!\bigl(\{r_b\}\bigr),
  \label{eq:alloc_problem}
\end{equation}
where $\mathcal{L}_{\text{model}}$ measures the deviation of the
full model output under the rank assignment~$\{r_b\}$.
Residual connections couple all heads, so this objective cannot be
decomposed into independent per-head terms.
We solve~\eqref{eq:per_head} independently per head to obtain
compression bases, and then optimize~\eqref{eq:alloc_problem}
end to end to allocate ranks across heads.
}
\section{The~\ours~Framework}
\label{sec:method}

In this section, we first present low-rank decomposition within one head under a given rank budget in Section~\ref{sec:single_head_decompose}. We then generalize the method to multiple heads under a global budget in Section~\ref{sec:method:alloc}, and finally combine rank allocation with quantization in Section~\ref{sec:method:quant}.

\subsection{Low-rank decomposition within one head}\label{sec:single_head_decompose}

\subsubsection{Decomposition objective}
\label{sec:decomposition_objective}

Unlike methods that directly minimize key/value reconstruction error,
\ours\ seeks low-rank projections whose induced attention output remains close to that of the original model, which is formulated as the following loss function defined over every input $x$ from the calibration data $\mathcal{X}$: 
\begin{small}
\begin{equation*}
\begin{aligned}
J(\widehat W_K,\widehat W_V) = {\sum_x
\left\|
  o_{x}(W_K,W_V)
  -o_{x}(\widehat W_K,\widehat W_V)
\right\|_2^2}, \widehat W_K^*,\widehat W_V^*
=
\operatorname*{arg\,min}_{\substack{
  \operatorname{rank}(\widehat W_K)\le r_K,
  \operatorname{rank}(\widehat W_V)\le r_V}}
{\mathcal J(\widehat W_K,\widehat W_V)} .
\end{aligned}
\label{eq:joint_kv_objective}
\end{equation*}    
\end{small}



However, jointly solving $\widehat W_K^*,\widehat W_V^*$ in $J(\widehat W_K,\widehat W_V)$ is difficult due to their inter-dependency. Indeed, the above objective, $\mathcal J(\widehat W_K,\widehat W_V)$, can be relaxed to the summation of the following two objectives, which decouples the optimization of $\widehat W_K^*$,$\widehat W_V^*$:
\begin{small}
    \begin{align}
    \begin{split}
& \mathcal J(\widehat W_K,\widehat W_V)
\le
2\mathcal L_V(\widehat W_V)
+
4M^2 \cdot 
\mathcal L_{K}(\widehat W_K),\\
& \text{where } \mathcal L_V(\widehat W_V)
:= \sum\nolimits_{x}
\left\|
 a_{x}X(W_V-\widehat W_V)W_O
\right\|_2^2,\\
& \mathcal L_K(\widehat W_K)
:= \sum\nolimits_{x}
D_{\mathrm{KL}}\!\left(
 a_{x}(W_K)\,\middle\|\,a_{x}(\widehat{W}_K)
\right), M_{X,t}:=\max\nolimits_i\|(\widehat Z_{X,t})_{i,:}\|_2\le M.
\end{split}
    \end{align}
\end{small}

The detailed derivation of the above formula is delayed to Appendix \ref{objective_derivation}.

\eat{
\subsection{Per-Head Compression Objectives}
\label{sec:method:loss}


The value projection contributes to the head output linearly, whereas
the key projection influences the attention weights through the
softmax nonlinearity.
We therefore define separate objectives for the two projections.

\paragraph{Problem~1 (Value compression).}
Compressing the value weight changes the head output from
$a_t\,XW_V\,W_O$ to $a_t\,X\hat{W}\,W_O$
(Section~\ref{sec:prelim:attn}).
We seek the rank-$r$ approximation that minimizes the total
squared deviation of the head output across all calibration
queries:
\begin{equation}
  \hat{W}_V^*
  = \operatorname*{arg\,min}_{\operatorname{rank}(\hat{W})\leq r}\;
  \sum_{t=1}^{T}\bigl\lVert a_t\, X(W_V{-}\hat{W})\, W_O
  \bigr\rVert_2^2.
  \label{eq:v_opt}
\end{equation}
The attention weights~$a_t$ and the output projection~$W_O$ are
fixed during compression, so this objective is quadratic
in~$\hat{W}$.

\paragraph{Problem~2 (Key compression).}
Compressing the key weight changes the attention distribution.
The compressed attention weights become
$\hat{a}_t = \operatorname{Softmax}\!\bigl(q_t\,(X\hat{W})^\top
\!/\sqrt{d_h}\bigr)$,
where $q_t = x_t W_Q$.
We seek the rank-$r$ approximation that minimizes the total KL
divergence between the original and compressed attention
distributions:
\begin{equation}
  \hat{W}_K^*
  = \operatorname*{arg\,min}_{\operatorname{rank}(\hat{W})\leq r}\;
  \sum_{t=1}^{T}
  D_{\mathrm{KL}}\!\bigl(a_t \,\|\, \hat{a}_t\bigr).
  \label{eq:k_opt}
\end{equation}
Because~$\hat{W}$ enters through the softmax nonlinearity, this
objective is not quadratic.
}






\subsubsection{Solving $\mathcal{L}_V(\widehat W_V)$}
\label{sec:value_bsvd}

Indeed, we can derive a closed-form $\widehat W_V$ by solving $\mathcal{L}_V(\widehat W_V)$. To achieve this, we first transform $\mathcal{L}_V(\widehat W_V)$ into the following surrogate objective.

\begin{lemma}
\label{lemma:lv_trans_eps}
Let
$
G_V^0
=
\sum_x X_x^\top a_x^\top a_xX_x, 
H_V^0=W_OW_O^\top.
$
For $\epsilon>0$, let $
G_V=G_V^0+\epsilon I_d=L_V^\top L_V, 
H_V=H_V^0+\epsilon I_{d_h}=C_VC_V^\top
$
be their positive-definite regularizations and Cholesky
factorizations.
Define
\begin{small}
    \begin{align*}
    \mathcal L_V^\approx(\widehat W_V)
=
\left\|L_V(W_V-\widehat W_V)C_V\right\|_F^2.        
    \end{align*}
\end{small}
Then, with $E=W_V-\widehat W_V$, $
\mathcal L_V^\approx(\widehat W_V)
-
\mathcal L_V(\widehat W_V)
=
\Delta_{\epsilon,V}(E)\ge0,
$

where $\Delta_{\epsilon,V}(E)
=
\epsilon\sum_x\|a_xX_xE\|_2^2
+
\epsilon\|EW_O\|_F^2
+
\epsilon^2\|E\|_F^2.
$
In particular, for fixed $E$ and calibration statistics,
$\Delta_{\epsilon,V}(E)=O(\epsilon)$ as $\epsilon\to0$.
\end{lemma}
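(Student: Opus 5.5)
The plan is to rewrite both $\mathcal L_V$ and $\mathcal L_V^\approx$ as traces, expand the regularized Gram matrices, and read off the difference. Set $E=W_V-\widehat W_V$. First I would express $\mathcal L_V$ as a weighted Frobenius quantity. Since $\|u\|_2^2=\operatorname{tr}(u^\top u)$ for a row vector $u$,
\begin{small}
\begin{align*}
\mathcal L_V(\widehat W_V)=\sum_x \operatorname{tr}\!\left(W_O^\top E^\top X_x^\top a_x^\top a_x X_x E W_O\right)
=\operatorname{tr}\!\left(E^\top G_V^0 E\, H_V^0\right),
\end{align*}
\end{small}
using linearity of the trace and cyclicity to move $W_O$ around to form $H_V^0=W_OW_O^\top$.

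Next I would treat the surrogate the same way. From the Cholesky factorizations, $\|L_VEC_V\|_F^2=\operatorname{tr}(C_V^\top E^\top L_V^\top L_V E C_V)=\operatorname{tr}(E^\top G_V E H_V)$. Substituting $G_V=G_V^0+\epsilon I_d$ and $H_V=H_V^0+\epsilon I_{d_h}$ and expanding bilinearly gives four terms:
\begin{small}
\begin{align*}
\operatorname{tr}(E^\top G_V^0 E H_V^0)+\epsilon\operatorname{tr}(E^\top G_V^0 E)+\epsilon\operatorname{tr}(E^\top E H_V^0)+\epsilon^2\operatorname{tr}(E^\top E).
\end{align*}
\end{small}
The first term is $\mathcal L_V$. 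I would then identify the others. The second is $\sum_x\|a_xX_xE\|_2^2$, by the same row-vector trace identity as in the first step. The third is $\operatorname{tr}(W_O^\top E^\top E W_O)=\|EW_O\|_F^2$. The fourth is $\|E\|_F^2$. Together these give exactly $\Delta_{\epsilon,V}(E)$. Each term is a squared norm multiplied by $\epsilon>0$ or $\epsilon^2$, so $\Delta_{\epsilon,V}(E)\ge0$.

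For the asymptotic claim, I would fix $E$ and the calibration statistics. Then $\Delta_{\epsilon,V}(E)=\epsilon c_1+\epsilon^2 c_2$ with constants $c_1,c_2\ge0$, which is $O(\epsilon)$ as $\epsilon\to0$.

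I do not expect any step to be a real obstacle. The only points needing care are dimensional bookkeeping and cyclic-trace rearrangements. One detail is that $X_x$ denotes the context matrix for query $x$, so $G_V^0\in\mathbb R^{d\times d}$ and $E\in\mathbb R^{d\times d_h}$. The other is that $\epsilon I$ must be placed on the correct side: $G_V$ sits on the $d$-side and $H_V$ on the $d_h$-side. Positive definiteness of $G_V$ and $H_V$ is needed only for the Cholesky factors to exist, and it holds because $G_V^0$ and $H_V^0$ are positive semidefinite and $\epsilon>0$.
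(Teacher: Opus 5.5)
Your proposal is correct and follows essentially the same route as the paper: both write $\mathcal L_V=\operatorname{tr}(G_V^0EH_V^0E^\top)$ and $\|L_VEC_V\|_F^2=\operatorname{tr}(G_VEH_VE^\top)$ via the Cholesky factors, expand the $\epsilon$-shifted product into four terms, and identify the three correction terms as squared norms. The only difference is cosmetic: for the $O(\epsilon)$ claim the paper also records the uniform bound $\Delta_{\epsilon,V}(E)\le[\epsilon(\|G_V^0\|_{\mathrm{op}}+\|H_V^0\|_{\mathrm{op}})+\epsilon^2]\|E\|_F^2$, whereas you argue with fixed coefficients, which suffices for the statement as posed.
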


The proof of Lemma \ref{lemma:lv_trans_eps} is given in Appendix \ref{app: value_lemma}. Subsequently, by performing some essential linear algebric transformations and applying Eckart--Young--Mirsky theorem \cite{eckart1936approximation, mirsky1960symmetric}, 
The following theorem gives a closed-form global minimizer of the regularized value surrogate $\mathcal L_V^\approx$.

\begin{theorem}
\label{thm:value_bsvd}
let $L_VW_VC_V = \bar U_V\bar\Sigma_V\bar R_V^\top$ be a thin singular value decomposition, where $\bar\Sigma_V =
\operatorname{diag}
(\bar\sigma_{V,1},\ldots,\bar\sigma_{V,m}),
m=\min(d,d_h),
$
and
$\bar\sigma_{V,1}\ge\cdots\ge\bar\sigma_{V,m}\ge0$.
Let $\bar U_{V,1:r}$, $\bar\Sigma_{V,1:r}$, and $\bar R_{V,1:r}$ denote the
components associated with the largest $r$ singular values. Then one
global minimizer of $\min\nolimits_{\operatorname{rank}(\widehat W_V)\le r}
\mathcal L_V^{\approx}(\widehat W_V)$
is
\begin{small}
\begin{equation}
\widehat W_{V,r}^*
=
L_V^{-1}
\bar U_{V,r}\bar\Sigma_{V,r}\bar R_{V,r}^\top
C_V^{-1}.
\label{eq:value_bsvd_solution}
\end{equation}    
\end{small}

The corresponding projection matrix, 
$\Wdown_{V,1:r}
=
L_V^{-1}\bar U_{V,1:r}\bar\Sigma_{V,1:r}^{1/2}$, and $\Wup_{V,1:r}
=
\bar\Sigma_{V,1:r}^{1/2}\bar R_{V,1:r}^\top C_V^{-1}$.
\end{theorem}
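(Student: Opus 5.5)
The plan is to reduce the weighted problem to an unweighted Frobenius-norm low-rank approximation by an invertible change of variables, and then apply the Eckart--Young--Mirsky theorem.

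\textbf{Step 1: invertibility of the factors.} Since $\epsilon>0$, the matrices $G_V=G_V^0+\epsilon I_d$ and $H_V=H_V^0+\epsilon I_{d_h}$ are positive definite. Indeed, $G_V^0$ and $H_V^0=W_OW_O^\top$ are Gram-type sums and hence positive semidefinite. Therefore the Cholesky factors $L_V\in\mathbb{R}^{d\times d}$ and $C_V\in\mathbb{R}^{d_h\times d_h}$ are invertible.

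\textbf{Step 2: change of variables.} Define $Y=L_V\widehat W_VC_V$ and $A=L_VW_VC_V$. The map $\widehat W_V\mapsto Y$ is a linear bijection on $\mathbb{R}^{d\times d_h}$, with inverse $Y\mapsto L_V^{-1}YC_V^{-1}$. Because $L_V$ and $C_V$ are invertible, it preserves rank: $\operatorname{rank}(Y)=\operatorname{rank}(\widehat W_V)$. Hence
\[
\min_{\operatorname{rank}(\widehat W_V)\le r}\mathcal L_V^{\approx}(\widehat W_V)=\min_{\operatorname{rank}(Y)\le r}\|A-Y\|_F^2 .
\]
Moreover, $\widehat W_V$ is a minimizer of the left-hand problem exactly when $Y$ is a minimizer of the right-hand one.

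\textbf{Step 3: Eckart--Young--Mirsky.} Take the thin SVD $A=\bar U_V\bar\Sigma_V\bar R_V^\top$. The theorem gives a global minimizer of $\|A-Y\|_F^2$ over $\operatorname{rank}(Y)\le r$, namely the truncation $Y^*=\bar U_{V,1:r}\bar\Sigma_{V,1:r}\bar R_{V,1:r}^\top$. The minimum value is $\sum_{i>r}\bar\sigma_{V,i}^2$. If $r\ge m$, the truncation is simply $A$ itself and the claim is trivial. When $\bar\sigma_{V,r}=\bar\sigma_{V,r+1}$ the minimizer need not be unique, which is why the statement says ``one'' global minimizer. Mapping $Y^*$ back through $L_V^{-1}(\cdot)C_V^{-1}$ yields \eqref{eq:value_bsvd_solution}.

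\textbf{Step 4: factorization.} Split $\bar\Sigma_{V,1:r}=\bar\Sigma_{V,1:r}^{1/2}\bar\Sigma_{V,1:r}^{1/2}$, which is valid because the singular values are nonnegative. Then $\Wdown_{V,1:r}\Wup_{V,1:r}=\widehat W_{V,r}^*$, with $\Wdown_{V,1:r}\in\mathbb{R}^{d\times r}$ and $\Wup_{V,1:r}\in\mathbb{R}^{r\times d_h}$, which are the required shapes.

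\textbf{Main obstacle.} The argument is mostly routine. The only point needing care is the rank-preservation and bijectivity in Step~2, since it is what makes the transformed problem equivalent rather than merely a relaxation. It follows directly from the invertibility established by the $\epsilon$-regularization. Without $\epsilon$, $G_V^0$ or $H_V^0$ could be singular and this step would fail.
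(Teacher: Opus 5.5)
Your proposal is correct and follows the same route as the paper. The paper also uses the invertible, rank-preserving change of variables $Y=L_V\widehat W_VC_V$ to reduce the problem to unweighted truncated SVD, then applies Eckart--Young--Mirsky and maps back via $L_V^{-1}(\cdot)C_V^{-1}$; your explicit check that $\Wdown_{V,1:r}\Wup_{V,1:r}=\widehat W_{V,r}^*$ is a small step the paper leaves implicit.
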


The proof is provided in
Appendix~\ref{app: value_theorem}. The above decomposition via solving $\mathcal{L}_V(\widehat W_V)$ is closely related to A$^3$~\citep{wong2025a3}; we do not claim the weighted-SVD solution itself as a distinct contribution. Our principal distinction lies in the key-side decomposition described next.
\subsubsection{Solving $\mathcal{L}_K(\widehat W_K)$}
\label{sec:key_bsvd}

Unlike the value objective, $\mathcal L_K(\widehat W_K)$ contains the nonlinear softmax map. Hence, directly solving $\mathcal L_K(\widehat W_K)$ cannot yield a closed-form solution. To mitigate this issue, we first approximate $\mathcal{L}_K(\widehat{W}_K)$ by a quadratic objective using Taylor expansion \cite{rudin1976principles} and Kronecker decomposition \cite{wilkinson1979kronecker}, which only incurs a small approximation error. Before formally presenting how this approximation is performed in Theorem \ref{thm:lk_kron_rank1}, we first compute the following intermediate quantities, taking as input each query token \(x \in X\) and its corresponding attention probability distribution $a_x$ calculated using \eqref{eq: attention_out}:
\begin{small}
    \begin{align}\label{eq: lk_intermediate_res}
    J_x=\operatorname{diag}(a_x)-a_x^\top a_x,, A_x:=J_x\operatorname{diag}(1/a_x)J_x,
G_{K,x}^{\epsilon}:=\frac{1}{2d_h}X^\top A_xX+\epsilon I_d,
    \end{align}
\end{small}

\begin{theorem}\label{thm:lk_kron_rank1}
Let $M_K:=\sum_x q_xq_x^\top\otimes G_{K,x}^{\epsilon}
\in\mathbb R^{(d_hd)\times(d_hd)}$,
and approximate it using its Kronecker rank-1 approximation
$M_K\approx A_K\otimes B_K,A_K\in\mathbb R^{d_h\times d_h},\quad A_K\succeq0, B_K\in\mathbb R^{d\times d},\quad B_K\succeq0.$
For \(\epsilon>0\), let $B_K+\epsilon I_d=L_K^\top L_K,
A_K+\epsilon I_{d_h}=C_KC_K^\top$ be Cholesky decompositions. Then \(L_K\in\mathbb R^{d\times d}\) and \(C_K\in\mathbb R^{d_h\times d_h}\) are invertible. 
Define the surrogate objective
\begin{small}
\begin{align*}
\mathcal L_K^{\approx}(\widehat W_K)
:=
\left\|
L_K(W_K-\widehat W_K)C_K
\right\|_F^2.
\end{align*}
\end{small}
This is a quadratic function of \(E:=W_K-\widehat W_K\), which incurs an approximation error
\begin{small}
\begin{align*}
|\mathcal L_K(\widehat W_K)
-
\mathcal L_K^{\approx}(\widehat W_K)|
\leq
\underbrace{
|\operatorname{vec}(E)^\top
\left(
M_K-A_K\otimes B_K
\right)
\operatorname{vec}(E)|
}_{\text{Kronecker truncation error}}
+
\underbrace{
\mathcal E_{\mathrm{Taylor}}(E)
}_{\text{Taylor approximation error}}
+
O(\epsilon),
\end{align*}
\end{small}
where \(\mathcal E_{\mathrm{Taylor}}(E)\) represents the overall Taylor approximation error.
\end{theorem}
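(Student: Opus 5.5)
The plan is a three-step chain of approximations, one step per error term in the bound. Start with invertibility. Since $B_K\succeq0$ and $A_K\succeq0$, the matrices $B_K+\epsilon I_d$ and $A_K+\epsilon I_{d_h}$ are positive definite. Their Cholesky factors $L_K$ and $C_K$ therefore exist with strictly positive diagonals, so both are invertible. Quadraticity is also immediate: $\mathcal L_K^{\approx}$ is a squared Frobenius norm of a linear map applied to $E$.

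For the error bound, introduce two intermediate objectives and use the triangle inequality:
\[
|\mathcal L_K-\mathcal L_K^{\approx}|\le|\mathcal L_K-Q_M|+|Q_M-Q_{A\otimes B}|+|Q_{A\otimes B}-\mathcal L_K^{\approx}|,
\]
where $Q_M(E):=\operatorname{vec}(E)^\top M_K\operatorname{vec}(E)$ and $Q_{A\otimes B}(E):=\operatorname{vec}(E)^\top(A_K\otimes B_K)\operatorname{vec}(E)$. The middle term is exactly the stated Kronecker truncation error. For the last term, use $\operatorname{vec}(L_KEC_K)=(C_K^\top\otimes L_K)\operatorname{vec}(E)$. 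This gives
\[
\mathcal L_K^{\approx}=\operatorname{vec}(E)^\top\big((A_K+\epsilon I)\otimes(B_K+\epsilon I)\big)\operatorname{vec}(E).
\]
Expanding the Kronecker product, the difference from $Q_{A\otimes B}$ is
\[
\epsilon\,\operatorname{tr}(E^\top EA_K)+\epsilon\,\operatorname{tr}(E^\top B_KE)+\epsilon^2\|E\|_F^2,
\]
which is $O(\epsilon)$ for fixed $E$. This mirrors the bookkeeping in Lemma~\ref{lemma:lv_trans_eps}.

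The main step is the first term, relating the KL objective to $Q_M$ via Taylor expansion. Write the logits $z_x=q_xXW_K/\sqrt{d_h}$ and the perturbation $\delta_x=-q_xXE/\sqrt{d_h}$, so that $\hat a_x=\operatorname{Softmax}(z_x+\delta_x)$. The function $\delta\mapsto D_{\mathrm{KL}}(a\,\|\,\operatorname{Softmax}(z+\delta))$ vanishes at $\delta=0$ and has zero gradient there. Its Hessian at $0$ is the Fisher matrix $J_x=\operatorname{diag}(a_x)-a_x^\top a_x$. The paper instead uses $A_x=J_x\operatorname{diag}(1/a_x)J_x$, which comes from expanding $\hat a_x-a_x\approx\delta_xJ_x$ and using the $\chi^2$ approximation of KL, $D_{\mathrm{KL}}\approx\tfrac12\sum_i(\hat a_i-a_i)^2/a_i$. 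Since $J_x\mathbf 1=0$, one can check $J_x\operatorname{diag}(1/a_x)J_x=J_x$, so the two forms agree.

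Either way, the second-order term is $\tfrac12\delta_xA_x\delta_x^\top=\tfrac1{2d_h}\,q_xXEE^\top... $ more precisely $\tfrac1{2d_h}\,q_xE^\top X^\top A_xXEq_x^\top$. Using $\operatorname{tr}(E^\top GEqq^\top)=\operatorname{vec}(E)^\top(qq^\top\otimes G)\operatorname{vec}(E)$, with the orientation chosen to match the paper's vec convention, and summing over $x$ gives $\operatorname{vec}(E)^\top M_K\operatorname{vec}(E)$ up to the $\epsilon I_d$ term inside $G_{K,x}^{\epsilon}$. That term contributes $\epsilon\sum_x\|Eq_x^\top\|^2=O(\epsilon)$.

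All higher-order remainders are collected into $\mathcal E_{\mathrm{Taylor}}(E)$. These are the cubic term of the KL expansion in $\delta_x$, plus the error of the $\chi^2$ surrogate if that route is used. It can be bounded by $O(\sum_x\|\delta_x\|^3)$ through Taylor's theorem with an integral remainder, using the uniform boundedness of softmax derivatives. The main obstacle is doing this bookkeeping cleanly, in particular the dimension and vec-orientation conventions for $q_x$ and $E$. Since the theorem leaves $\mathcal E_{\mathrm{Taylor}}$ abstract, it suffices to define it as the remainder and note it is $o(\|E\|^2)$.
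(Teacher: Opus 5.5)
Your proposal is correct and follows essentially the paper's proof. Both use a logit-space second-order expansion of the KL with Hessian $J_x$, identified with $A_x$ via $J_x\mathbf 1=0$, and then the vec--Kronecker identity to obtain $\operatorname{vec}(E)^\top M_K\operatorname{vec}(E)$ minus the pre-fitting ridge $\epsilon\sum_x\|Eq_x\|_2^2$. The Kronecker truncation term and the post-fitting ridge expansion follow, and the paper combines them into one exact signed identity before taking absolute values, rather than using your three-term triangle inequality. The only additions in the paper are an explicit remainder bound, $|\mathcal R_x(E)|\le\frac{1}{6d_h^{3/2}}\omega(X_xEq_x)^3$, obtained by bounding the third derivative of $t\mapsto\log\sum_n a_{x,n}e^{th_{x,n}}$ as a third central moment. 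It also notes that treating the post-fitting ridge as $O(\epsilon)$ requires $A_K,B_K$, which depend on $\epsilon$ through $M_K$, to stay bounded.
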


The proof and explicit approximation error terms are provided in
Appendix~\ref{app: lk_kron_rank1_proof}, and
Algorithm~\ref{alg:lk_kron_rank1} in
Appendix~\ref{app: rank_1_approx_algo} details the construction of
$\mathcal L_K^{\approx}$.
Our empirical analysis in Section~\ref{sec: exp} verifies that both Taylor and Kronecker errors are consistently
small at the low-rank solutions returned by our method.
These results support the accuracy of
$\mathcal L_K^{\approx}$ as an approximation to $\mathcal L_K$
at those solutions under the evaluated compression settings.
They do not establish a uniform approximation guarantee over all
rank-constrained candidates.

Theorem \ref{thm:lk_kron_rank1} yields a weighted quadratic surrogate with the same
algebraic form as the one for $\mathcal{L}_V(\widehat{W}_V)$.
Applying the argument of Theorem \ref{thm:value_bsvd} therefore gives a closed-form minimizer of $\mathcal L_K^{\approx}$ (see Corollary \ref{thm:key_bsvd} in Appendix \ref{app: corollory_key}).

\subsection{Learned Rank Allocation across heads}
\label{sec:method:alloc}

An LLM contains key and value projections across many heads and layers, and their sensitivity to low-rank compression varies substantially. We therefore allocate a global singular component budget across these heads. In this section, we concatenate all $W_K$'s and $W_V$'s across all heads across all layers and index them using $i\in\{1,\ldots,N\}$. Plus, we use $m_i$ to denote the total numbers of singular components for each $i$, and $B$ is the total number of retained components across all $i$'s. We require $N\le B\le\sum_i m_i$ and retain at least one component for each $i$.

To address this problem, we propose to rank all singular components globally throughout training, as summarized in Algorithm~\ref{alg:alloc}. 


\paragraph{Budget-constrained relaxation.}
Each block $i$ has a learnable score $s_i$, and its $j$-th
singular component receives score $t_{i,j}=s_i-j$.
We reserve the first component of every block in both training
and deployment.
For a total budget $B$, the remaining budget is $b=B-N$.
During training, we use
\begin{small}
    \begin{align*}
    g_{i,1}=1,
\qquad
g_{i,j}
=
\sigma\left((t_{i,j}-\alpha)/{\tau}\right),
\quad j\ge2,        
    \end{align*}
\end{small}
where the shared cutoff $\alpha$ is solved using the Newton's method \cite{kelley2003solving} so that
$\sum\nolimits_i\sum\nolimits_{j=2}^{m_i}g_{i,j}=b$. 

We differentiate through this constraint using implicit
differentiation.
Thus training preserves an exact relaxed gate-mass budget,
rather than an exact matrix-rank budget. More training details are delayed to Appendix \ref{app:budget-gates}. 


\paragraph{Hard allocation.}
After training, we retain the first component of every block
and the $b$ remaining components with the largest scores
$t_{i,j}$, using deterministic tie-breaking.
The resulting ranks satisfy
\begin{small}
    \begin{align*}
    1\le r_i\le m_i,
\qquad
\sum\nolimits_i r_i=B.        
    \end{align*}
\end{small}



\begin{algorithm}[!h]
\small
\caption{Learned global rank allocation across KV blocks}
\label{alg:alloc}
\begin{algorithmic}[1]
\STATE {\bfseries Input:} budget $N\le B\le\sum_i m_i$,
calibration set $\mathcal X$, temperature $\tau>0$.
\STATE {\bfseries Output:} ranks $\{r_i\}_{i=1}^{N}$
with $\sum_i r_i=B$.
\STATE Compute and freeze all transformed SVD components
using Theorem~\ref{thm:value_bsvd} and
Corollary~\ref{thm:key_bsvd}.
\STATE Set $b=B-N$; initialize one score $s_i$ per block.
\label{line:alloc_init}
\WHILE{not converged}
    \STATE Set $t_{i,j}=s_i-j$ and reserve $g_{i,1}=1$.
    \STATE Solve the shared cutoff $\alpha$ so that
    $\sum_i\sum_{j=2}^{m_i}
    \sigma\!\left((t_{i,j}-\alpha)/\tau\right)=b.$ using the Newton's method.
    \label{line:solve_alpha}
    \STATE Set $g_{i,j}=\sigma((t_{i,j}-\alpha)/\tau)$
    for $j\ge2$ and gate the fixed transformed singular values.
    \STATE Compute $\mathcal L_{\mathrm{KD}}$ on $\mathcal X$;
    update $\{s_i\}$ by implicitly differentiating through $\alpha$.
\ENDWHILE
\STATE Recompute the scores; retain each block's first component
and the top-$b$ remaining components, breaking ties deterministically.
\label{line:alloc_hard}
\STATE Set $r_i$ to the number retained in block $i$.
\RETURN $\{r_i\}_{i=1}^{N}$.
\end{algorithmic}
\end{algorithm}

\subsection{Integration with Quantization}
\label{sec:method:quant}
To further compress the KV cache, \ours{} can be combined with existing
low-rank KV quantizers without modification. We integrate Palu~\citep{chang2024palu}
and STAR-KV~\citep{bhatnagar2026starkv}. Since the leading coordinates of the
compressed representation dominate the quantization range, both quantizers
apply an orthogonal transform---a norm-preserving change of basis \(R\) with
\(R^\top R=I\) that mixes coordinates to spread outliers---before
quantization. We fuse these transforms into the factors offline, adding no
inference cost. Arbitrary learned ranks are handled with orthogonal matrices,
and no retraining is required. Full details are in Appendix~\ref{app:quant}.

\section{Experiments}\label{sec: exp}

\subsection{Experiment Settings}
\paragraph{Models and Tasks.} 
We evaluate on four LLMs: LLaMA-2-7B~\cite{llama2}, LLaMA-3.1-8B-Instruct~\cite{llama3}, Qwen3-32B~\cite{yang2025qwen3}, and MPT-7B. They span both MHA (LLaMA-2-7B, MPT-7B) and GQA (LLaMA-3.1-8B-Instruct, Qwen3-32B), as well as RoPE (first three) and NoPE (MPT-7B). For accuracy, we measure perplexity on WikiText-2~\cite{wikitext-2} and C4~\cite{c4}, and zero-shot accuracy on six tasks via LM-Evaluation-Harness~\cite{eval-harness}. For long-context evaluation, we additionally test on LongBench~\cite{longbench-bench} and RULER~\cite{ruler-bench}, covering diverse document-level and structured tasks.

\paragraph{Baseline. } We compare \ours\ against two types of baselines for low-rank KV cache compression: (1) general low-rank decomposition methods for LLM compression, including SVD-LLM~\cite{wang2025svd}, ASVD~\cite{yuan2023asvd}, Zero Sum SVD~\cite{abbasi2026zero}, and ERC-SVD~\cite{bai2025erc}; and (2) KV-specific low-rank methods, including PALU~\cite{chang2024palu} and the state-of-the-art STAR-KV~\cite{bhatnagar2026starkv} and A3 \cite{wong2025a3}. Other baselines such as S2VD~\cite{zhou2026s} are excluded due to code unavailability.


\subsection{Accuracy Analysis}

\begin{figure}[h]
\centering
\includegraphics[width=\textwidth]{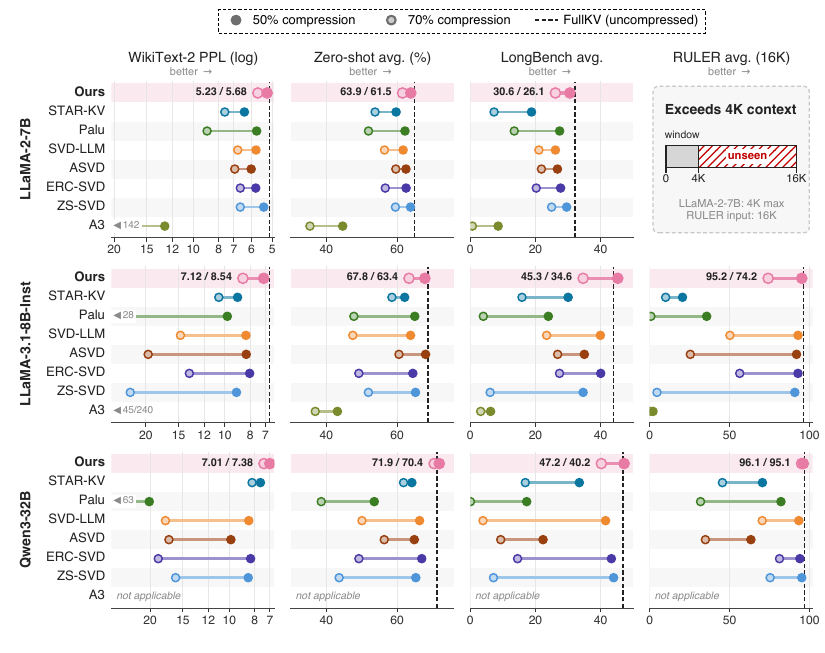}
\caption{
Main results at 50\% (dark) and 70\% (light) KV cache compression. Each row connects one method's two rates; the dashed line marks the uncompressed model. All axes point right toward better results; perplexity is reversed and logarithmic, with out-of-range values printed at the left end. Numbers beside \ours{} denote 50\%/70\% results. RULER uses 16K tokens, exceeding LLaMA-2-7B's 4K context. A3 is not applicable to Qwen3-32B, whose query/key normalization lies between the projections and RoPE. Per-task results are in Appendix~\ref{per-task-main}.
}
\label{fig:main_results}
\end{figure}

\paragraph{Perplexity and Zero-Shot Evaluation.} 
Figure~\ref{fig:main_results} shows that \ours{} maintains model quality under aggressive KV cache compression. On LLaMA-2-7B and LLaMA-3.1-8B-Instruct, \ours{} consistently achieves the best perplexity and zero-shot accuracy at both 50\% and 70\% compression, with only marginal degradation from the uncompressed model. \textbf{In contrast, baselines degrade sharply at 70\% compression, with several collapsing entirely.} \textbf{These results demonstrate that \ours{} preserves generation quality even at high compression rates.}

\paragraph{Long Context Evaluation.}
Figure~\ref{fig:main_results} shows that \ours{} achieves the best average score at both rates, surpassing even the uncompressed LLaMA-3.1-8B at 50\% compression. \textbf{Baselines competitive on LM-Eval-Harness collapse here, confirming that short-context metrics underestimate long-context damage.} \textbf{Retrieval-intensive tasks (Qasper, TriviaQA) are the most compression-sensitive and where \ours{} gains the most}, while summarization remains insensitive---consistent with our motivation that query--key matching dominates retrieval.

\paragraph{Comparison with Fine-tuning.}
\ours{} constructs compressed KV projections without fine-tuning
model weights, whereas the methods in Table~\ref{tab:finetune}
additionally update weights on LLaMA-3.1-8B-Instruct.
{\bf Fine-tuning constantly improves the results of all methods, but baselines still fall behind
\ours{}.}
Direct FT trains low-rank key and value projection matrix from a
random orthogonal initialization.
Despite using the same objective and training data as
\ours{}+FT, it achieves substantially lower accuracy, which can potentially be attributed to the local minimum issue.


\begin{table*}[h]
\centering
\caption{
Fine-tuning the low-rank projection matrix on LLaMA-3.1-8B-Instruct. 
Bold marks the best result at each compression rate.
}
\label{tab:finetune}
\small
\resizebox{\textwidth}{!}{
\begin{tabular}{lcccccccccc}
\toprule
\multirow{2}{*}{\textbf{Method}} & \multirow{2}{*}{\textbf{Comp (\%)}} & \multicolumn{2}{c}{\textbf{Perplexity} $\downarrow$} & \multicolumn{7}{c}{\textbf{Zero-Shot Accuracy (\%)} $\uparrow$} \\
\cmidrule(lr){3-4} \cmidrule(lr){5-11}
 & & Wiki2 & C4 & OBQA & PIQA & ARC-e & ARC-c & Hella & Wino & Avg. \\
\midrule
\multirow{2}{*}{ZS-SVD} & 50 & 7.72 & 14.70 & 43.20 & 77.53 & 77.90 & 49.66 & 74.66 & \textbf{72.61} & 65.93 \\
 & 70 & 9.34 & 22.51 & 35.60 & 71.98 & 65.36 & 38.99 & 64.90 & 65.51 & 57.06 \\
\cmidrule(lr){1-11}
\multirow{2}{*}{STAR-KV} & 50 & 8.31 & 14.82 & 40.20 & 76.66 & 76.01 & 47.61 & 71.65 & 66.06 & 63.03 \\
 & 70 & 9.73 & 18.64 & 37.20 & 74.16 & 70.24 & 41.89 & 65.71 & 61.64 & 58.48 \\
\cmidrule(lr){1-11}
\multirow{2}{*}{Direct FT} & 50 & 8.66 & 17.56 & 37.20 & 75.95 & 72.43 & 43.34 & 68.56 & 59.67 & 59.53 \\
 & 70 & 10.65 & 27.99 & 33.60 & 71.11 & 59.30 & 31.14 & 58.94 & 53.99 & 51.35 \\
\cmidrule(lr){1-11}
\multirow{2}{*}{Ours+FT} & 50 & \textbf{6.96} & \textbf{12.63} & \textbf{44.20} & \textbf{79.87} & \textbf{80.01} & \textbf{54.95} & \textbf{78.24} & 72.45 & \textbf{68.29} \\
 & 70 & \textbf{7.75} & \textbf{15.11} & \textbf{42.80} & \textbf{77.31} & \textbf{75.34} & \textbf{49.40} & \textbf{75.00} & \textbf{69.69} & \textbf{64.92} \\
\bottomrule
\end{tabular}
}
\end{table*}

\eat{
\paragraph{Comparison with Fine-tuning.} \ours{} keeps every model weight frozen. In Table~\ref{tab:main_results}, STAR-KV runs only its first phase, which trains the model jointly with its rank allocation, and ZS-SVD applies a single truncation. Table~\ref{tab:finetune} compares methods that update the weights further on LLaMA-3.1-8B-Instruct. ZS-SVD runs its six-stage recipe, which pulls the truncated key and value weights toward the original ones between stages. STAR-KV runs its full recipe, including the recovery phase that distills the whole compressed model once the ranks are fixed, with its original training length of 8192 tokens. Direct FT starts from a random orthogonal basis at uniform ranks and trains the low-rank key and value factors by distillation from the original model, and \ours{}+FT trains the factors of an \ours{} checkpoint with the same objective and unchanged ranks, i.e., about 168M and 100M parameters at 50\% and 70\% compression. Both use the rank-allocation data of \ours{} and a warmup-stable-decay schedule with early stopping on held-out sequences, and their learning rates, $10^{-4}$ and $3\times10^{-6}$, are chosen on a logarithmic grid from $10^{-6}$ to $3\times10^{-4}$ by the validation distillation loss at 50\% compression. \ours{} without any weight update already attains lower perplexity and higher average accuracy than every fine-tuned baseline at both compression rates: at 70\% compression its average accuracy of 63.37\% (Table~\ref{tab:main_results}) exceeds the 58.48\% of STAR-KV, the best of them, by 4.89 points, and its C4 perplexity is 17.51 against 18.64. The multi-stage correction raises ZS-SVD from 65.20\% and 51.75\% to 65.93\% and 57.06\%, and the full recipe lowers the WikiText-2 perplexity of STAR-KV from 8.95 and 10.54 to 8.31 and 9.73, but both remain behind \ours{}. Direct FT falls furthest behind, at 59.53\% and 51.35\%, although it uses the same objective, data and schedule as \ours{}+FT; its gap to \ours{}+FT combines the effects of the bases and of the rank allocation, and at 70\% compression it reaches the limit of 1000 updates while its validation loss is still decreasing. Starting from the bases and ranks of \ours{} instead, fine-tuning yields the best results at both compression rates, with a C4 perplexity of 12.63 and 15.11 and an average accuracy of 68.29\% and 64.92\%, so \ours{} also provides a strong initialization when weight updates are allowed.

We include other accuracy analysis results, including the results of Qwen3-32B and MPT-7B integration of \ours\ and quantization in Appendix \ref{app: add_acc}.

}

\subsection{System-level Performance Analysis}
We measure decoding speed on LLaMA-2-7B in FP16 on a single NVIDIA RTX PRO 6000 (96\,GB). The baseline is PyTorch SDPA over the full cache, which dispatches to FlashAttention-2~\citep{dao2023flashattention2}. Because \ours{} at 70\% compression matches or exceeds the quality of STAR-KV, Palu, and ERC-SVD at 50\% (Table~\ref{tab:main_results}), we compare \ours{} at 70\% against these baselines at 50\%. More details about these configurations are given in Appendix \ref{app:system_configure}.

\paragraph{Attention Latency.} Figure~\ref{fig:attn_speedup} reports speedups at roughly matched accuracy, where \ours{}'s better accuracy--speed trade-off yields faster performance at the same recall. \ours{} consistently outperforms SDPA and all baselines, with its advantage growing at longer contexts, while STAR-Q further boosts the speedup. STAR-KV and Palu remain nearly flat below 1.5$\times$, and \ours{} at 70\% compression already surpasses all baselines at 50\%, whereas ERC-SVD is even slower than the uncompressed baseline. {\bf Further runtime breakdown in Appendix~\ref{app:system} shows that attention computation dominates the overhead and that \ours{} substantially accelerates it.}



\paragraph{Generation Throughput.} Table~\ref{tab:throughput} reports full-model decoding throughput without prefill, using the largest power-of-two batch that fits in memory. By storing the \ours{} cache in a small fraction of the FP16 size, STAR-Q enables substantially larger batch sizes than the baselines, yielding a large throughput gain over SDPA, whereas STAR-KV improves throughput only modestly. Even without STAR-Q, \ours{} still achieves considerable speedups.


\begin{figure}[t]
\centering
\begin{minipage}[t]{0.505\textwidth}
\vspace{0pt}
\centering
\includegraphics[width=\linewidth]{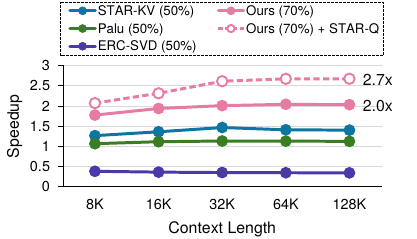}
\caption{Speedup of the attention module over PyTorch SDPA for one decoding step of LLaMA-2-7B at batch size 16, measured over all 32 layers; layers left uncompressed run SDPA. Baselines use 50\% compression and Ours uses 70\%; the dashed line adds STAR-Q.}
\label{fig:attn_speedup}
\end{minipage}\hfill
\begin{minipage}[t]{0.475\textwidth}
\vspace{0pt}
\centering
\captionof{table}{Quantization results on Qwen3-32B: average KV bit-width, KV cache compression ratio, and average zero-shot accuracy over the six commonsense tasks.}
\label{tab:quantization}
\small
\setlength{\tabcolsep}{4pt}
\begin{tabular}{l|c|c|c}
\toprule
\textbf{Method} & \textbf{Bit} & \textbf{Comp.} & \textbf{Avg. (\%)} \\
\midrule
Qwen3-32B & 16 & 1$\times$ & 71.29 \\
\midrule
KIVI-2-gs32-r0 & 2 & 8.0$\times$ & 67.90 \\
\midrule
Palu (50\%) & 4 & 8.0$\times$ & 52.02 \\
Palu (50\%) & 3 & 10.7$\times$ & 48.43 \\
\midrule
STAR-KV (50\%) & 3.3 & 9.7$\times$ & 63.30 \\
\midrule
Ours (70\%) & 16 & 3.3$\times$ & 70.45 \\
Ours (70\%) + Palu-Q & 4 & 13.3$\times$ & 70.12 \\
Ours (70\%) + Palu-Q & 3 & 17.8$\times$ & 69.36 \\
Ours (70\%) + STAR-Q & 3.2 & 16.7$\times$ & 70.28 \\
\bottomrule
\end{tabular}
\end{minipage}
\end{figure}

\begin{table}[b]
\centering
\caption{End-to-end generation throughput (tokens/s). The gray text in brackets denotes batch size. Baselines use 50\% KV cache compression and \ours{} uses 70\% with STAR-KV.}
\label{tab:throughput}
\small
\setlength{\tabcolsep}{4pt}
\begin{tabular}{lcccccc}
\toprule
\textbf{Context Len.} & \textbf{PyTorch SDPA} & \textbf{STAR-KV} & \textbf{Palu} & \textbf{ERC-SVD} & \textbf{\ours} & \textbf{Gain (\ours)} \\
\midrule
1K  & 2091 \textcolor{gray}{(128)} & 3092 \textcolor{gray}{(256)} & 2485 \textcolor{gray}{(256)} & 855 \textcolor{gray}{(256)} & 4859 \textcolor{gray}{(1024)} & 2.32$\times$ \\
2K  & 1044 \textcolor{gray}{(64)} & 1681 \textcolor{gray}{(128)} & 1340 \textcolor{gray}{(128)} & 456 \textcolor{gray}{(128)} & 2935 \textcolor{gray}{(512)} & 2.81$\times$ \\
4K  & 530 \textcolor{gray}{(32)} & 867 \textcolor{gray}{(64)} & 688 \textcolor{gray}{(64)} & 234 \textcolor{gray}{(64)} & 1642 \textcolor{gray}{(256)} & 3.10$\times$ \\
8K  & 271 \textcolor{gray}{(16)} & 437 \textcolor{gray}{(32)} & 343 \textcolor{gray}{(32)} & 117 \textcolor{gray}{(32)} & 869 \textcolor{gray}{(128)} & 3.21$\times$ \\
16K & 137 \textcolor{gray}{(8)} & 211 \textcolor{gray}{(16)} & 174 \textcolor{gray}{(16)} & 58 \textcolor{gray}{(16)} & 447 \textcolor{gray}{(64)} & 3.26$\times$ \\
\bottomrule
\end{tabular}
\end{table}

\subsection{Quantization.}
Table~\ref{tab:quantization} reports quantization results on Qwen3-32B. Applying Palu-Q and STAR-Q to \ours{} at 70\% compression yields 13.3$\times$ and 16.7$\times$ compression at 70.12\% and 70.28\% accuracy, nearly matching the unquantized model, with 3-bit Palu-Q further raising compression to 17.8$\times$ at 69.36\%. In contrast, Palu and STAR-KV reach only 48.43\% and 63.30\% at lower compression (10.7$\times$ and 9.7$\times$), and KIVI-2 attains 67.90\% at 8$\times$. The accuracy advantage of \ours{} under low-rank compression thus carries over to the quantized cache.

Other experiments, e.g., ablation studies and verifying the assumption of our theoretical analysis, are included in Appendix \ref{app: add_res}.
\section{Related Work}

\paragraph{Low-rank decomposition for general LLM compression.}
A broad line of work decomposes projection weights to reduce model memory and computation, including SVD-LLM~\citep{wang2025svd}, ASVD~\citep{yuan2023asvd}, Zero Sum SVD~\citep{abbasi2026zero}, ERC-SVD~\citep{bai2025erc}, and S2VD~\citep{zhou2026s}. These methods motivate data-weighted factorizations, but their layer- or weight-reconstruction objectives do not directly reflect attention behavior under KV caching. \ours instead derives its weighting from attention outputs and attention-distribution curvature.

\paragraph{General KV-cache compression.}
KV-cache compression has also been studied along the token and precision axes: token-selection methods retain a subset of cached tokens \citep{zhang2023h2o,liu2023scissorhands,ge2024model,li2024snapkv,cai2024pyramidkv,oren2024transformers}, while quantization methods reduce cache precision \citep{zhao2024atom,liu2024kivi,lin2025qserve,su2025rotatekv,hooper2024kvquant,he2024zipcache}. These are complementary to low-rank compression, which reduces the hidden dimension rather than the sequence dimension or bit-width. Our joint allocator further treats rank and bit-width as choices under a shared memory budget.

\section{conclusion}
We revisited low-rank KV cache compression and addressed two limitations of existing methods: objectives that minimize KV reconstruction error rather than preserve attention outputs, and rank allocation that collapses to uniform thresholds. We reformulated compression to align with generation outputs, derived a closed-form solution via an approximate objective, and proposed a global rank allocation mechanism with a hard budget that preserves the truly important components across heads. Extensive experiments show consistent accuracy and efficiency gains under aggressive compression, with up to $3.26\times$ faster decoding and 87\% lower memory at comparable accuracy.

\section*{Acknowledgments}
This was was supported in part by......

\bibliographystyle{unsrt}  
\bibliography{references}  

\newpage
\appendix

\section{Derivation of $\mathcal L_V(\widehat W_V)$ and
$\mathcal L_K(\widehat W_K)$}
\label{objective_derivation}

We derive separate value and key criteria from the single-head
attention-output error using the notation of
Section~\ref{sec:prelim:attn}.
For each calibration query $x$, $X$ denotes its corresponding
context matrix; we suppress this dependence to match the
main-text notation.
Calibration representations and queries are held fixed, and
all attention distributions are restricted to their common
finite, unmasked support.
All objectives below are summed over calibration queries,
consistently with Equation~\ref{eq:joint_kv_objective}.

For brevity, write
\[
a_x=a_x(W_K),
\qquad
\widehat a_x=a_x(\widehat W_K).
\]
Using Equation~\ref{eq: attention_out}, adding and subtracting
$a_xX\widehat W_VW_O$ gives the exact decomposition
\begin{equation}
\begin{aligned}
&o_x(W_K,W_V)-o_x(\widehat W_K,\widehat W_V)
\\
&\quad=
\underbrace{
a_xX(W_V-\widehat W_V)W_O
}_{\text{value perturbation}}
+
\underbrace{
(a_x-\widehat a_x)X\widehat W_VW_O
}_{\text{key perturbation}}.
\end{aligned}
\label{eq:joint_error_decomposition}
\end{equation}

Applying Young's inequality to these two terms and summing
over queries yields, for every $\xi>0$,
\begin{equation}
\mathcal J(\widehat W_K,\widehat W_V)
\le
(1+\xi)\mathcal L_V(\widehat W_V)
+
(1+\xi^{-1})
\mathcal L_{K,\mathrm{out}}(\widehat W_K;\widehat W_V),
\label{eq:joint_young_bound}
\end{equation}
where
\begin{align}
\mathcal L_V(\widehat W_V)
&=
\sum_x
\left\|
a_xX(W_V-\widehat W_V)W_O
\right\|_2^2,
\label{eq:value_objective}\\
\mathcal L_{K,\mathrm{out}}(\widehat W_K;\widehat W_V)
&=
\sum_x
\left\|
(a_x-\widehat a_x)X\widehat W_VW_O
\right\|_2^2.
\label{eq:key_output_objective}
\end{align}
Thus $\mathcal L_V$ is exactly the output error caused by
compressing the value projection while keeping the original
attention distributions fixed.
The key-output term remains coupled to $\widehat W_V$.

To obtain a value-independent key criterion, define
\begin{equation}
\mathcal L_K(\widehat W_K)
=
\sum_x
D_{\mathrm{KL}}
\left(
a_x(W_K)\,\middle\|\,a_x(\widehat W_K)
\right).
\label{eq:key_objective}
\end{equation}
For each query, let
\[
M_x(\widehat W_V)
=
\max_{1\le i\le n}
\|(X\widehat W_VW_O)_{i,:}\|_2.
\]
The triangle inequality gives
\[
\left\|
(a_x-\widehat a_x)X\widehat W_VW_O
\right\|_2
\le
M_x(\widehat W_V)\|a_x-\widehat a_x\|_1.
\]
Pinsker's inequality, with natural logarithms, then implies
\[
\left\|
(a_x-\widehat a_x)X\widehat W_VW_O
\right\|_2^2
\le
2M_x(\widehat W_V)^2
D_{\mathrm{KL}}(a_x\,\|\,\widehat a_x).
\]
Summing over queries yields the candidate-dependent bound
\begin{equation}
\mathcal L_{K,\mathrm{out}}(\widehat W_K;\widehat W_V)
\le
2\sum_x
M_x(\widehat W_V)^2
D_{\mathrm{KL}}(a_x\,\|\,\widehat a_x).
\label{eq:candidate-dependent-key-bound}
\end{equation}

Suppose a common finite constant $M$ satisfies
\[
M_x(\widehat W_V)\le M
\]
for every calibration query and every value approximation
in the considered candidate class.
Then
\begin{equation}
\mathcal L_{K,\mathrm{out}}(\widehat W_K;\widehat W_V)
\le
2M^2\mathcal L_K(\widehat W_K).
\label{eq:pinsker_key_bound}
\end{equation}
Taking $\xi=1$ in
Equation~\ref{eq:joint_young_bound} gives
\begin{equation}
\mathcal J(\widehat W_K,\widehat W_V)
\le
2\mathcal L_V(\widehat W_V)
+
4M^2\mathcal L_K(\widehat W_K).
\label{eq:joint_separate_bound}
\end{equation}

\paragraph{Scope of the bound.}
Equation~\ref{eq:joint_separate_bound} requires the same
finite $M$ throughout the considered candidate class;
a rank constraint alone does not ensure this condition.
Without a uniform norm bound,
Equation~\ref{eq:candidate-dependent-key-bound} still holds,
but its coefficients depend on $\widehat W_V$.
In that setting, $\mathcal L_K$ is an
upper-bound-motivated, distribution-preserving surrogate,
rather than a value-independent upper bound on the
unrestricted joint problem.

The value objective has an exact quadratic representation,
and its positive-definite regularization admits the
weighted-SVD solution in Theorem~\ref{thm:value_bsvd}.
For the nonlinear key objective, we derive its local
quadratic term, apply pre-fitting regularization and a
Kronecker approximation, and solve the resulting
regularized surrogate in closed form.
These solutions exactly minimize their respective
surrogates, not necessarily the original joint objective.
The analysis concerns attention-output errors on fixed
calibration inputs, rather than a guarantee of final
autoregressive generation preservation.

\section{Theoretical analysis without RoPE}

\label{app:value_proofs}

\subsection{Theoretical analysis for $\mathcal L^\approx_V(\widehat W_V)$}\label{app: value_analysis}

\subsubsection{Proof of Lemma~\ref{lemma:lv_trans_eps}}
\label{app: value_lemma}

\begin{proof}
Let
\[
E=W_V-\widehat W_V,
\qquad
u_x=a_xX_x.
\]
Then
\[
G_V^0=\sum_xu_x^\top u_x,
\qquad
H_V^0=W_OW_O^\top
\]
are positive semidefinite.
Adding $\epsilon I$ with $\epsilon>0$ makes both matrices
positive definite, so their Cholesky factors $L_V$ and $C_V$
exist and are invertible.

\paragraph{Aggregated quadratic form.}
Using the trace identity for a squared Euclidean norm and
cyclicity of the trace,
\begin{align*}
\mathcal L_V(\widehat W_V)
&=
\sum_x\|u_xEW_O\|_2^2
\\
&=
\sum_x
\operatorname{tr}
\left(u_xEH_V^0E^\top u_x^\top\right)
\\
&=
\sum_x
\operatorname{tr}
\left(u_x^\top u_xEH_V^0E^\top\right)
\\
&=
\operatorname{tr}
\left(G_V^0EH_V^0E^\top\right).
\end{align*}

\paragraph{Regularized weighted Frobenius form.}
By the Cholesky factorizations,
\begin{align*}
\mathcal L_V^\approx(\widehat W_V)
&=
\|L_VEC_V\|_F^2
\\
&=
\operatorname{tr}
\left(L_VEC_VC_V^\top E^\top L_V^\top\right)
\\
&=
\operatorname{tr}
\left(G_VEH_VE^\top\right)
\\
&=
\operatorname{tr}
\left(
(G_V^0+\epsilon I_d)
E
(H_V^0+\epsilon I_{d_h})
E^\top
\right).
\end{align*}
Expanding yields
\begin{align*}
\mathcal L_V^\approx(\widehat W_V)
&=
\operatorname{tr}(G_V^0EH_V^0E^\top)
+
\epsilon\operatorname{tr}(G_V^0EE^\top)
\\
&\quad+
\epsilon\operatorname{tr}(EH_V^0E^\top)
+
\epsilon^2\|E\|_F^2.
\end{align*}
The first term is $\mathcal L_V(\widehat W_V)$.
Furthermore,
\[
\operatorname{tr}(G_V^0EE^\top)
=
\sum_x\|u_xE\|_2^2
=
\sum_x\|a_xX_xE\|_2^2,
\]
and
\[
\operatorname{tr}(EH_V^0E^\top)
=
\|EW_O\|_F^2.
\]
Therefore,
\[
\mathcal L_V^\approx(\widehat W_V)
=
\mathcal L_V(\widehat W_V)
+
\Delta_{\epsilon,V}(E),
\]
with $\Delta_{\epsilon,V}(E) = \epsilon\sum_x\|a_xX_xE\|_2^2
+
\epsilon\|EW_O\|_F^2
+
\epsilon^2\|E\|_F^2.$
Every term in this correction is nonnegative.

Finally,
\begin{equation}
0\le\Delta_{\epsilon,V}(E)
\le
\left[
\epsilon\left(
\|G_V^0\|_{\mathrm{op}}
+
\|H_V^0\|_{\mathrm{op}}
\right)
+
\epsilon^2
\right]\|E\|_F^2.
\label{eq:value-regularization-bound}
\end{equation}
For fixed $E$ and calibration statistics, this is
$O(\epsilon)$ as $\epsilon\to0$, completing the proof.
\end{proof}

\subsubsection{Proof of Theorem~\ref{thm:value_bsvd}}\label{app: value_theorem}

\begin{proof}
By Lemma~\ref{lemma:lv_trans_eps}, the rank-constrained value problem is
\[
\min_{\operatorname{rank}(\widehat W_V)\le r}
\left\|
L_VW_VC_V-L_V\widehat W_VC_V
\right\|_F^2.
\]
Define
\[
M_V:=L_VW_VC_V,
\qquad
Y:=L_V\widehat W_VC_V.
\]
Since $L_V$ and $C_V$ are invertible,
$\operatorname{rank}(Y)=\operatorname{rank}(\widehat W_V)$, and the
transformation is bijective. The problem is therefore equivalent to
\[
\min_{\operatorname{rank}(Y)\le r}\|M_V-Y\|_F^2.
\]
By the Eckart--Young--Mirsky theorem \cite{eckart1936approximation, mirsky1960symmetric}, one global minimizer is
\[
Y_r^*
=
\bar U_{V,r}\bar\Sigma_{V,r}\bar R_{V,r}^\top,
\]
and the minimum value is
$\sum_{j=r+1}^{m}\bar\sigma_{V,j}^2$. Applying the inverse
transformation gives
\[
\widehat W_{V,r}^*
=
L_V^{-1}
\bar U_{V,r}\bar\Sigma_{V,r}\bar R_{V,r}^\top
C_V^{-1}.
\]
\end{proof}

\subsection{Theoretical analysis for $\mathcal L^\approx_K(\widehat W_K)$}\label{app: key_analysis}

\subsubsection{Proof of Theorem \ref{thm:lk_kron_rank1}}\label{app: lk_kron_rank1_proof}

Before proving Theorem \ref{thm:lk_kron_rank1}, we prove the following lemma for the objective $\mathcal{L}_K(W_K)$ evaluated on individual input token, i.e., $\mathcal{L}_{K,x}(\widehat{W}_K) = D_{\mathrm{KL}}\!\left(
 a_{x}(W_K)\,\middle\|\,a_{x}(\widehat{W}_K)
\right)$ such that $\mathcal{L}_K(\widehat{W}_K) = \sum_x \mathcal{L}_{K,x}(\widehat{W}_K)$.

\begin{lemma}[Per-query key-side quadratic approximation]
\label{lemma:lk_frob_per_token}
For a query $x$, let $J_x,A_x,G^\epsilon_{K,x}$ be defined
in \eqref{eq: lk_intermediate_res}, and write
\[
G_{K,x}=G^\epsilon_{K,x}-\epsilon I_d,
\qquad
H_{K,x}=q_xq_x^\top,
\qquad
H^\epsilon_{K,x}=H_{K,x}+\epsilon I_{d_h}.
\]
Let
\[
G^\epsilon_{K,x}=L_{K,x}^\top L_{K,x},
\qquad
H^\epsilon_{K,x}=C_{K,x}C_{K,x}^\top.
\] be the Choleskey decomposition of $G^\epsilon_{K,x}$ and $H^\epsilon_{K,x}$. 
For $E=W_K-\widehat W_K$, define
\[
\mathcal L^\approx_{K,x}(\widehat W_K)
=
\|L_{K,x}EC_{K,x}\|_F^2.
\]
Then
\begin{equation}
\mathcal L_{K,x}(\widehat W_K)
-
\mathcal L^\approx_{K,x}(\widehat W_K)
=
\mathcal R_x(E)-\Delta_{\epsilon,x}(E),
\label{eq:per-query-key-error-identity}
\end{equation}
where
\[
|\mathcal R_x(E)|
\le
\frac{1}{6d_h^{3/2}}\omega(X_xEq_x)^3,
\qquad
\omega(v):=\max_n v_n-\min_n v_n,
\]
and
\[
\Delta_{\epsilon,x}(E)
=
\epsilon\operatorname{tr}(G_{K,x}EE^\top)
+
\epsilon\|Eq_x\|_2^2
+
\epsilon^2\|E\|_F^2
\ge0.
\]
Consequently,
\begin{equation}
\left|
\mathcal L_{K,x}(\widehat W_K)
-
\mathcal L^\approx_{K,x}(\widehat W_K)
\right|
\le
\frac{1}{6d_h^{3/2}}\omega(X_xEq_x)^3
+
\Delta_{\epsilon,x}(E).
\label{eq:per-query-key-error-bound}
\end{equation}
For the one-sided regularization used in Kronecker fitting,
\begin{equation}
\mathcal L_{K,x}(\widehat W_K)
=
q_x^\top E^\top G^\epsilon_{K,x}Eq_x
+
\mathcal R_x(E)
-
\epsilon\|Eq_x\|_2^2.
\label{eq: l_k_other_form}
\end{equation}
\end{lemma}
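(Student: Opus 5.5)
The plan is to reduce the per-query KL to a one-dimensional log-sum-exp expansion along the logit perturbation, and then to account separately for the $\epsilon$-regularization by expanding the Frobenius form exactly, as in the proof of Lemma~\ref{lemma:lv_trans_eps}.

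\textbf{Step 1: reduce the KL to a log-sum-exp difference.} Write the original logits as $z=X_xW_Kq_x/\sqrt{d_h}$ and the compressed logits as $z+\delta$ with $\delta=-X_xEq_x/\sqrt{d_h}$, so that $a_x=\operatorname{Softmax}(z)$ and $\widehat a_x=\operatorname{Softmax}(z+\delta)$. Let $\phi(t)=\operatorname{lse}(z+t\delta)$, where $\operatorname{lse}$ is log-sum-exp. Since $\log\widehat a_{x,i}=z_i+\delta_i-\phi(1)$, we get the exact identity
\[
\mathcal L_{K,x}(\widehat W_K)=\phi(1)-\phi(0)-\phi'(0).
\]
The derivatives of $\phi$ are the cumulants of the random variable $\delta_I$, where the index $I$ is drawn from $\operatorname{Softmax}(z+t\delta)$. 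In particular:
\begin{itemize}
\item $\phi'(0)=a_x\delta$;
\item $\phi''(0)=\delta^\top J_x\delta$, where $J_x=\operatorname{diag}(a_x)-a_x^\top a_x$;
\item $\phi'''(t)$ is the third central moment of $\delta_I$ under $\operatorname{Softmax}(z+t\delta)$.
\end{itemize}

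\textbf{Step 2: extract the quadratic term and bound the remainder.} Taylor's theorem with Lagrange remainder gives
\[
\mathcal L_{K,x}=\tfrac12\,\delta^\top J_x\delta+\tfrac16\,\phi'''(\theta),\qquad \theta\in(0,1).
\]
Define $\mathcal R_x(E):=\tfrac16\phi'''(\theta)$. To match the statement, observe that $A_x=J_x$. Indeed, $J_x\operatorname{diag}(1/a_x)=I-\mathbf 1a_x$ and $J_x\mathbf 1=0$, so $A_x=J_x(I-\mathbf 1a_x)=J_x$. Hence
\[
\tfrac12\,\delta^\top J_x\delta=\tfrac{1}{2d_h}\,q_x^\top E^\top X_x^\top A_xX_xEq_x=q_x^\top E^\top G_{K,x}Eq_x .
\]
For the remainder, the variable $\delta_I$ takes values in an interval of length $\omega(\delta)$, so $|\delta_I-\mu|\le\omega(\delta)$ for its mean $\mu$. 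Therefore the third central moment is at most $\omega(\delta)^3$ in absolute value, uniformly in $t$. Using $\omega(\delta)=d_h^{-1/2}\omega(X_xEq_x)$, this gives the stated bound on $|\mathcal R_x|$. This is the step that needs the most care: the bound must be uniform along the whole path, and the sign and scaling of $\delta$ must be tracked. Since $\omega$ is invariant to sign, the sign of $\delta$ does not matter.

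\textbf{Step 3: expand the regularized Frobenius form.} By the two Cholesky factorizations,
\[
\|L_{K,x}EC_{K,x}\|_F^2=\operatorname{tr}\big(G^\epsilon_{K,x}\,E\,H^\epsilon_{K,x}E^\top\big).
\]
Substituting $G^\epsilon_{K,x}=G_{K,x}+\epsilon I$ and $H^\epsilon_{K,x}=q_xq_x^\top+\epsilon I$ and expanding the four products gives
\[
q_x^\top E^\top G_{K,x}Eq_x+\epsilon\|Eq_x\|_2^2+\epsilon\operatorname{tr}(G_{K,x}EE^\top)+\epsilon^2\|E\|_F^2,
\]
which equals $q_x^\top E^\top G_{K,x}Eq_x+\Delta_{\epsilon,x}(E)$. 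Each term of $\Delta_{\epsilon,x}$ is nonnegative because $G_{K,x}\succeq0$, since $J_x$ is a covariance matrix.

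\textbf{Step 4: assemble the conclusions.} Subtracting Step 3 from Step 2 yields \eqref{eq:per-query-key-error-identity}. The triangle inequality then gives \eqref{eq:per-query-key-error-bound}. Finally, the one-sided form \eqref{eq: l_k_other_form} follows from the identity $q_x^\top E^\top G^\epsilon_{K,x}Eq_x=q_x^\top E^\top G_{K,x}Eq_x+\epsilon\|Eq_x\|_2^2$ combined with Step 2.
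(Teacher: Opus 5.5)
Your proof is correct and follows essentially the same route as the paper: the one-dimensional log-sum-exp expansion of the softmax KL (your Lagrange remainder $\tfrac16\phi'''(\theta)$ replaces the paper's integral remainder $\tfrac12\int_0^1(1-t)^2F_x'''(t)\,dt$, and both equal $\mathcal L_{K,x}-\tfrac12\delta^\top J_x\delta$), the same $\omega(\delta)^3$ bound on the third central moment, the identity $A_x=J_x$, and the exact expansion of the regularized trace form. One small slip: the intermediate identity should read $\operatorname{diag}(1/a_x)J_x=I-\mathbf 1a_x$, since $J_x\operatorname{diag}(1/a_x)$ is its transpose; the correct form is what your next step $A_x=J_x(I-\mathbf 1a_x)=J_x$ actually uses, so the conclusion is unaffected.
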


\begin{proof}
All attention quantities are restricted to the finite,
unmasked support.
Let
\[
h_x
=
z_x(\widehat W_K)^\top-z_x(W_K)^\top
=
-\frac{1}{\sqrt{d_h}}X_xEq_x
\]
be the logit-perturbation column vector, and define
\[
F_x(t)
=
\log\left(\sum_n a_{x,n}e^{t h_{x,n}}\right),
\qquad t\in[0,1].
\]
The exact softmax KL identity gives
\[
\mathcal L_{K,x}(\widehat W_K)
=
F_x(1)-F_x(0)-F_x'(0),
\]
where $F_x(0)=0$ and $F_x'(0)=a_xh_x$.

Define the tilted probabilities and their mean by
\[
a_{x,n}^{(t)}
=
\frac{a_{x,n}e^{t h_{x,n}}}
     {\sum_j a_{x,j}e^{t h_{x,j}}},
\qquad
\mu_x(t)=\sum_n a_{x,n}^{(t)}h_{x,n}.
\]
Differentiation yields
\[
F_x''(0)=h_x^\top J_xh_x,
\qquad
F_x'''(t)
=
\sum_n a_{x,n}^{(t)}
\bigl(h_{x,n}-\mu_x(t)\bigr)^3.
\]
Taylor's theorem with integral remainder therefore gives
\begin{equation}
\mathcal L_{K,x}(\widehat W_K)
=
\frac12h_x^\top J_xh_x+\mathcal R_x(E),
\qquad
\mathcal R_x(E)
=
\frac12\int_0^1(1-t)^2F_x'''(t)\,dt.
\label{eq:per-query-logit-expansion}
\end{equation}
Since $\mu_x(t)$ lies between the minimum and maximum
entries of $h_x$,
\[
|F_x'''(t)|
\le
\sum_n a_{x,n}^{(t)}
|h_{x,n}-\mu_x(t)|^3
\le
\omega(h_x)^3.
\]
Hence
\[
|\mathcal R_x(E)|
\le
\frac16\omega(h_x)^3
=
\frac{1}{6d_h^{3/2}}\omega(X_xEq_x)^3.
\]

To connect this expansion to the previously defined metric,
write $D_x=\operatorname{diag}(a_x)$.
Because $D_x^{-1}a_x^\top=\boldsymbol 1$ and
$J_x\boldsymbol 1=0$,
\[
A_x
=
J_xD_x^{-1}J_x
=
J_xD_x^{-1}(D_x-a_x^\top a_x)
=
J_x.
\]
Substituting $h_x$ into
\eqref{eq:per-query-logit-expansion} therefore yields
\begin{align}
\mathcal L_{K,x}(\widehat W_K)
&=
q_x^\top E^\top G_{K,x}Eq_x+\mathcal R_x(E)
\nonumber\\
&=
\operatorname{tr}(G_{K,x}EH_{K,x}E^\top)
+\mathcal R_x(E).
\label{eq: l_k_inter_0}
\end{align}

By the Cholesky factorizations and cyclicity of the trace,
\begin{align*}
\mathcal L^\approx_{K,x}(\widehat W_K)
&=
\operatorname{tr}
\left(G^\epsilon_{K,x}EH^\epsilon_{K,x}E^\top\right)
\\
&=
\operatorname{tr}(G_{K,x}EH_{K,x}E^\top)
+
\epsilon\operatorname{tr}(G_{K,x}EE^\top)
\\
&\quad+
\epsilon\|Eq_x\|_2^2
+
\epsilon^2\|E\|_F^2.
\end{align*}
The additional terms equal $\Delta_{\epsilon,x}(E)$
and are nonnegative.
Subtracting this expression from \eqref{eq: l_k_inter_0}
proves \eqref{eq:per-query-key-error-identity};
the triangle inequality gives
\eqref{eq:per-query-key-error-bound}.

Finally,
\[
q_x^\top E^\top G^\epsilon_{K,x}Eq_x
=
q_x^\top E^\top G_{K,x}Eq_x
+
\epsilon\|Eq_x\|_2^2,
\]
which, together with \eqref{eq: l_k_inter_0}, proves
\eqref{eq: l_k_other_form}.
\end{proof}

\begin{lemma}[Kronecker--vectorization identity]
\label{lemma:kronecker_vec}
Let \(A\in\mathbb R^{m\times m}\) and \(B\in\mathbb R^{n\times n}\) be symmetric matrices, and let \(E\in\mathbb R^{n\times m}\). Then
\[
\boxed{
\operatorname{vec}(E)^\top(A\otimes B)\operatorname{vec}(E)
=
\operatorname{tr}\!\left(B E A E^\top\right).
}
\]
In particular, for any vector \(q\in\mathbb R^{m}\) and any symmetric matrix \(G\in\mathbb R^{n\times n}\),
\[
\boxed{
q^\top E^\top G E q
=
\operatorname{vec}(E)^\top(qq^\top\otimes G)\operatorname{vec}(E).
}
\]
\end{lemma}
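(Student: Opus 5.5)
The plan is to verify the first boxed identity by writing both sides in coordinates, using the column-stacking convention for $\operatorname{vec}$. This is the convention under which $\operatorname{vec}(BEA)=(A^\top\otimes B)\operatorname{vec}(E)$, and it is the one implicitly used in Theorem~\ref{thm:lk_kron_rank1}, where the first Kronecker factor acts on the $d_h$ (column) index and the second on the $d$ (row) index. The second boxed identity will then follow as a special case.

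First, I recall the standard identity $\operatorname{vec}(BEC)=(C^\top\otimes B)\operatorname{vec}(E)$ for conformable matrices. It can be checked column by column: the $j$-th column of $BEC$ is $\sum_k C_{kj}\,BE_{:,k}$, which is exactly the $j$-th block of $(C^\top\otimes B)\operatorname{vec}(E)$. Taking $C=A$ and using the symmetry $A^\top=A$ gives $(A\otimes B)\operatorname{vec}(E)=\operatorname{vec}(BEA)$.

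Second, I use the inner-product identity $\operatorname{vec}(P)^\top\operatorname{vec}(Q)=\operatorname{tr}(P^\top Q)$. This yields
\[
\operatorname{vec}(E)^\top(A\otimes B)\operatorname{vec}(E)=\operatorname{tr}(E^\top BEA),
\]
and cyclicity of the trace turns the right-hand side into $\operatorname{tr}(BEAE^\top)$. As a cross-check, I can expand directly in coordinates: both sides equal $\sum_{i,j,k,l}E_{ij}B_{ik}E_{kl}A_{lj}$.

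For the particular case, I set $A=qq^\top$, which is symmetric, and $B=G$. The first identity then gives $\operatorname{tr}(GEqq^\top E^\top)$. By cyclicity this equals $\operatorname{tr}(q^\top E^\top GEq)=q^\top E^\top GEq$, since the last expression is a scalar.

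The only real obstacle is bookkeeping. I must fix the $\operatorname{vec}$ convention so that the Kronecker factor ordering matches the statement; with row-stacking the factors would swap. I must also track where symmetry is used, namely only to drop the transpose on $A$, since $B$'s symmetry is not even needed.
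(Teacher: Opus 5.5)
Your proposal is correct and follows essentially the same route as the paper: the identity $\operatorname{vec}(BEA)=(A^\top\otimes B)\operatorname{vec}(E)$ with $A=A^\top$, then $\operatorname{vec}(P)^\top\operatorname{vec}(Q)=\operatorname{tr}(P^\top Q)$, cyclicity of the trace, and finally the specialization $A=qq^\top$, $B=G$. Your remarks on fixing the column-stacking convention and on the symmetry of $B$ being unnecessary are accurate refinements that the paper's proof leaves implicit.
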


\begin{proof}
We prove the general identity first.

\paragraph{Step 1: Vectorization identity.}
For matrices \(P\), \(X\), \(Q\) of compatible dimensions,
\[
\operatorname{vec}(PXQ)=(Q^\top\otimes P)\operatorname{vec}(X).
\]
Applying this with \(P=B\), \(X=E\), \(Q=A\), and using \(A=A^\top\) (since \(A\) is symmetric),
\[
(A\otimes B)\operatorname{vec}(E)
=
\operatorname{vec}(B E A).
\]

\paragraph{Step 2: Trace inner product.}
For any matrices \(P,Q\) of the same size,
\[
\operatorname{vec}(P)^\top\operatorname{vec}(Q)=\operatorname{tr}(P^\top Q).
\]
Applying this with \(P=E\) and \(Q=B E A\),
\[
\operatorname{vec}(E)^\top\operatorname{vec}(B E A)
=
\operatorname{tr}\!\left(E^\top B E A\right).
\]
Combining with Step 1,
\[
\operatorname{vec}(E)^\top(A\otimes B)\operatorname{vec}(E)
=
\operatorname{tr}\!\left(E^\top B E A\right).
\]

\paragraph{Step 3: Cyclicity of the trace.}
Using cyclicity of the trace,
\[
\operatorname{tr}\!\left(E^\top B E A\right)
=
\operatorname{tr}\!\left(B E A E^\top\right).
\]
Therefore
\[
\operatorname{vec}(E)^\top(A\otimes B)\operatorname{vec}(E)
=
\operatorname{tr}\!\left(B E A E^\top\right).
\]

\paragraph{Corollary: per-token identity.}
Set \(A=qq^\top\) and \(B=G\). Since \(qq^\top\) and \(G\) are symmetric, the general identity gives
\[
\operatorname{vec}(E)^\top(qq^\top\otimes G)\operatorname{vec}(E)
=
\operatorname{tr}\!\left(G E qq^\top E^\top\right).
\]
On the other hand, \(q^\top E^\top G E q\) is a scalar, so it equals its own trace:
\[
q^\top E^\top G E q
=
\operatorname{tr}\!\left(q^\top E^\top G E q\right)
=
\operatorname{tr}\!\left(E^\top G E qq^\top\right)
=
\operatorname{tr}\!\left(G E qq^\top E^\top\right),
\]
where the last equality follows from cyclicity of the trace. Therefore
\[
q^\top E^\top G E q
=
\operatorname{vec}(E)^\top(qq^\top\otimes G)\operatorname{vec}(E).
\]
\end{proof}

Then we formally provide the proof of Theorem \ref{thm:lk_kron_rank1} below:

\begin{proof}
Let $E=W_K-\widehat W_K$ and $e=\operatorname{vec}(E)$.
By \eqref{eq: l_k_other_form},
\[
\mathcal L_{K,x}(\widehat W_K)
=
q_x^\top E^\top G^\epsilon_{K,x}Eq_x
+
\mathcal R_x(E)
-
\epsilon\|Eq_x\|_2^2.
\]
Applying Lemma~\ref{lemma:kronecker_vec} and summing over
calibration queries gives
\begin{equation}
\mathcal L_K(\widehat W_K)
=
e^\top M_Ke
+
\sum_x\mathcal R_x(E)
-
\Delta_{\epsilon,\mathrm{pre}}(E),
\label{eq:key-prefitting-identity}
\end{equation}
where
\[
M_K=\sum_x(q_xq_x^\top)\otimes G^\epsilon_{K,x},
\qquad
\Delta_{\epsilon,\mathrm{pre}}(E)
=
\epsilon\sum_x\|Eq_x\|_2^2.
\]
Lemma~\ref{lemma:lk_frob_per_token} also gives
\[
\left|\sum_x\mathcal R_x(E)\right|
\le
\frac{1}{6d_h^{3/2}}
\sum_x\omega(X_xEq_x)^3.
\]

Let $A_K,B_K$ be the fixed positive-semidefinite
Kronecker factors specified in the theorem.
Their post-fitting regularizations are positive definite,
so $L_K$ and $C_K$ exist and are invertible.
Using their Cholesky factorizations and
Lemma~\ref{lemma:kronecker_vec},
\begin{align*}
\mathcal L_K^\approx(\widehat W_K)
&=
\|L_KEC_K\|_F^2
\\
&=
\operatorname{tr}
\left(
(B_K+\epsilon I_d)
E
(A_K+\epsilon I_{d_h})
E^\top
\right)
\\
&=
e^\top(A_K\otimes B_K)e
+
\Delta_{\epsilon,\mathrm{post}}(E),
\end{align*}
where
\[
\Delta_{\epsilon,\mathrm{post}}(E)
=
\epsilon\operatorname{tr}(B_KEE^\top)
+
\epsilon\operatorname{tr}(EA_KE^\top)
+
\epsilon^2\|E\|_F^2.
\]
Both regularization contributions are nonnegative.
Writing
\[
\Delta_{\epsilon,K}(E)
=
\Delta_{\epsilon,\mathrm{pre}}(E)
+
\Delta_{\epsilon,\mathrm{post}}(E),
\]
and subtracting the surrogate from
\eqref{eq:key-prefitting-identity} yields the exact
signed decomposition
\begin{equation}
\begin{aligned}
\mathcal L_K(\widehat W_K)
-
\mathcal L_K^\approx(\widehat W_K)
&=
e^\top(M_K-A_K\otimes B_K)e
\\
&\quad+
\sum_x\mathcal R_x(E)
-
\Delta_{\epsilon,K}(E).
\end{aligned}
\label{eq:key-exact-error-decomposition}
\end{equation}
Taking absolute values gives
\begin{equation}
\begin{aligned}
\left|
\mathcal L_K(\widehat W_K)
-
\mathcal L_K^\approx(\widehat W_K)
\right|
&\le
\left|e^\top(M_K-A_K\otimes B_K)e\right|
\\
&\quad+
\frac{1}{6d_h^{3/2}}
\sum_x\omega(X_xEq_x)^3
+
\Delta_{\epsilon,K}(E).
\end{aligned}
\label{eq:key-total-error-bound}
\end{equation}

The three terms in \eqref{eq:key-total-error-bound}
correspond to distinct approximation sources.
The first,
$\left|e^\top(M_K-A_K\otimes B_K)e\right|$,
is the absolute Kronecker approximation error evaluated
along the residual $E$.
The second is an explicit upper bound on the total
Taylor expansion error
$\left|\sum_x\mathcal R_x(E)\right|$.
The third, $\Delta_{\epsilon,K}(E)$, accounts for both
the pre-fitting ridge and the post-fitting factor
regularization. In particular,
\[
\Delta_{\epsilon,K}(E)
\le
\epsilon\sum_x\|Eq_x\|_2^2
+
\left[
\epsilon\bigl(\|A_K\|_{\mathrm{op}}+\|B_K\|_{\mathrm{op}}\bigr)
+\epsilon^2
\right]\|E\|_F^2.
\]
For fixed $E$ and calibration data, this term is
$O(\epsilon)$ as $\epsilon\to0$, provided the fitted
factors remain bounded in operator norm.
Without this boundedness condition, the explicit
regularization term should be retained rather than
abbreviated as $O(\epsilon)$.

\end{proof}

\subsubsection{Corollary for decomposing key tensor}\label{app: corollory_key}
\begin{corollary}   
\label{thm:key_bsvd}
let $L_KW_KC_K = \bar U_K\bar\Sigma_K\bar R_K^\top$ be a thin singular value decomposition, where $\bar\Sigma_K =
\operatorname{diag}
(\bar\sigma_{K,1},\ldots,\bar\sigma_{K,m}),
m=\min(d,d_h),
$
and
$\bar\sigma_{K,1}\ge\cdots\ge\bar\sigma_{K,m}\ge0$.
Let $\bar U_{K,1:r}$, $\bar\Sigma_{K,1:r}$, and $\bar R_{K,1:r}$ denote the
components associated with the largest $r$ singular values. Then one
global minimizer of $\min\nolimits_{\operatorname{rank}(\widehat W_K)\le r} \mathcal L^{\approx}_K(\widehat W_K)$
is
\begin{small}
\begin{equation}
\widehat W_{K,1:r}^*
=
L_K^{-1}
\bar U_{K,1:r}\bar\Sigma_{K,1:r}\bar R_{K,1:r}^\top
C_K^{-1}.
\label{eq:value_bsvd_solution}
\end{equation}
\end{small}

The corresponding projection matrix, $\Wdown_{K,1:r}
=
L_V^{-1}\bar U_{K,1:r}\bar\Sigma_{K,1:r}^{1/2}$, and $\Wup_{K,1:r}
=
\bar\Sigma_{K,1:r}^{1/2}\bar R_{K,1:r}^\top C_K^{-1}.
$
\end{corollary}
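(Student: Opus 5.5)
The plan is to repeat the argument of Theorem~\ref{thm:value_bsvd} essentially verbatim, since Theorem~\ref{thm:lk_kron_rank1} has already put $\mathcal L_K^\approx$ into the same weighted Frobenius form as $\mathcal L_V^\approx$. The first step is to record the facts from Theorem~\ref{thm:lk_kron_rank1} that we need. With $B_K+\epsilon I_d=L_K^\top L_K$ and $A_K+\epsilon I_{d_h}=C_KC_K^\top$, both factors $L_K\in\mathbb R^{d\times d}$ and $C_K\in\mathbb R^{d_h\times d_h}$ are invertible, because the regularized matrices are positive definite. The objective is $\mathcal L_K^\approx(\widehat W_K)=\|L_KW_KC_K-L_K\widehat W_KC_K\|_F^2$.

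Next I change variables. Set $M:=L_KW_KC_K$ and $Y:=L_K\widehat W_KC_K$. The map $\widehat W_K\mapsto Y$ is a linear bijection of $\mathbb R^{d\times d_h}$ onto itself, with inverse $Y\mapsto L_K^{-1}YC_K^{-1}$. Multiplication by invertible matrices preserves rank, so $\operatorname{rank}(Y)=\operatorname{rank}(\widehat W_K)$. The feasible set $\{\operatorname{rank}(\widehat W_K)\le r\}$ is therefore mapped exactly onto $\{\operatorname{rank}(Y)\le r\}$. It follows that the constrained problem is equivalent to $\min_{\operatorname{rank}(Y)\le r}\|M-Y\|_F^2$, in the sense that minimizers correspond under this bijection.

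I then apply the Eckart--Young--Mirsky theorem to the thin SVD $M=\bar U_K\bar\Sigma_K\bar R_K^\top$. It gives the global minimizer $Y_r^*=\bar U_{K,1:r}\bar\Sigma_{K,1:r}\bar R_{K,1:r}^\top$, with optimal value $\sum_{j>r}\bar\sigma_{K,j}^2$. Mapping back yields the stated $\widehat W_{K,1:r}^*=L_K^{-1}Y_r^*C_K^{-1}$. When singular values are tied there may be other minimizers; this is why the statement says "one global minimizer."

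Finally, for the factorization I split $\bar\Sigma_{K,1:r}=\bar\Sigma_{K,1:r}^{1/2}\bar\Sigma_{K,1:r}^{1/2}$. This gives $\Wdown=L_K^{-1}\bar U_{K,1:r}\bar\Sigma_{K,1:r}^{1/2}$ and $\Wup=\bar\Sigma_{K,1:r}^{1/2}\bar R_{K,1:r}^\top C_K^{-1}$, and multiplying them recovers $\widehat W_{K,1:r}^*$. The statement writes $L_V^{-1}$ in the down-projection. I will note that this must be read as $L_K^{-1}$, because only with $L_K^{-1}$ does the product reproduce the minimizer.

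No step is a real obstacle. The only point needing care is the invertibility of $L_K$ and $C_K$, which relies on the regularization $\epsilon>0$ and the positive semidefiniteness of $A_K$ and $B_K$. Theorem~\ref{thm:lk_kron_rank1} supplies both, so that is where the proof draws on the earlier result.
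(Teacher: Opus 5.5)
Your proposal is correct and follows the paper's own route. The paper proves the corollary by reusing the argument of Theorem~\ref{thm:value_bsvd}: the invertible change of variables $Y=L_K\widehat W_KC_K$, rank preservation, Eckart--Young--Mirsky, and mapping back (spelled out again as Step~5 of Lemma~\ref{lemma:lk_kron_rank1_rope}). You are also right that the $L_V^{-1}$ in the stated down-projection is a typo for $L_K^{-1}$, since only then does the product of the two factors equal the minimizer.
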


\subsubsection{Overall algorithm for computing $\mathcal{L}_K^{\approx}(W_K)$}\label{app: rank_1_approx_algo}
\begin{algorithm}[t]
\caption{Rank-1 Kronecker-factored low-rank approximation of \(W_K\)}
\label{alg:lk_kron_rank1}
\begin{algorithmic}[1]
\STATE \textbf{Input:} KV cache \(X\), query vectors \(\{q_x\}\), current key matrix \(W_K\), rank \(r\), regularization \(\epsilon\).
\FOR{each query token \(x\)}
    \STATE Compute attention distribution: \(p_x\leftarrow\operatorname{Softmax}\!\bigl(\tfrac{1}{\sqrt{d_h}}XW_Kq_x\bigr)\).
    \STATE Compute softmax Jacobian: \(J_x\leftarrow\operatorname{diag}(p_x)-p_xp_x^\top\).
    \STATE Compute Fisher weight: \(A_x\leftarrow J_x\operatorname{diag}(1/p_x)J_x\).
    \STATE Compute per-token left weight: \(G_{K,x}^{\epsilon}\leftarrow\tfrac{1}{2d_h}X^\top A_xX+\epsilon I_d\).
\ENDFOR
\STATE Form exact Kronecker quadratic form: \(M_K\leftarrow\sum_x q_xq_x^\top\otimes G_{K,x}^{\epsilon}\).
\STATE Compute rank-1 Kronecker approximation \(M_K\approx A_K\otimes B_K\) via alternating least squares.
\STATE Regularize: \(B_K^{\epsilon}\leftarrow B_K+\epsilon I_d\), \(A_K^{\epsilon}\leftarrow A_K+\epsilon I_{d_h}\).
\STATE Cholesky: \(B_K^{\epsilon}=L_K^\top L_K\), \(A_K^{\epsilon}=C_KC_K^\top\).
\STATE Compute SVD \(L_KW_KC_K=U\Sigma V^\top\).
\STATE \textbf{Return} \(\widehat W_K^\star\leftarrow L_K^{-1}U_r\Sigma_rV_r^\top C_K^{-1}\).
\end{algorithmic}
\end{algorithm}
Algorithm~\ref{alg:lk_kron_rank1} summarizes the rank-1 Kronecker-factored low-rank approximation of the key projection matrix \(W_K\). Given the KV cache \(X\), the query vectors \(\{q_x\}\), and the current key matrix \(W_K\), the algorithm aims to find a rank-\(r\) approximation \(\widehat W_K\) that minimizes a quadratic surrogate of the token-summed KL divergence between the original and approximated attention distributions. The algorithm proceeds in three main stages. First, for each query token \(x\), it computes the attention distribution \(p_x\), the softmax Jacobian \(J_x\), the Fisher weight \(A_x\), and the regularized per-token left weight \(G_{K,x}^{\epsilon}\). Second, it assembles the exact Kronecker quadratic form \(M_K = \sum_x q_x q_x^\top \otimes G_{K,x}^{\epsilon}\) and approximates it by a rank-1 Kronecker product \(A_K \otimes B_K\) using alternating least squares. Third, it regularizes and Cholesky-factorizes \(A_K\) and \(B_K\) to obtain \(L_K\) and \(C_K\), and then solves the resulting weighted low-rank approximation problem in closed form via truncated SVD of \(L_K W_K C_K\). The output is the optimal rank-\(r\) key matrix \(\widehat W_K^\star\). The alternating least squares step avoids forming the full \((d_h d) \times (d_h d)\) matrix \(M_K\), making the algorithm scalable.

\section{Extension to Grouped-Query Attention}
\label{app:gqa}

Grouped-query attention (GQA) shares one key projection and one
value projection among multiple query heads.
Consequently, compression must assign a single pair of low-rank
projections to each KV group, while accounting for the behavior
of all query heads in that group.

We first consider one KV group with query-head index set
$\mathcal H$ and size $H=|\mathcal H|$.
The group shares
\[
W_K,W_V\in\mathbb R^{d\times d_h}.
\]
Each query head $h\in\mathcal H$ has its own query
$q_{x,h}\in\mathbb R^{d_h}$ and output-projection block
$O_h\in\mathbb R^{d_h\times d}$.
For calibration query $x$, let
$X_x\in\mathbb R^{n_x\times d}$ contain its causal context
representations, and let
$p_{x,h}(W_K)\in\mathbb R^{n_x}$ denote the corresponding
attention probability column vector.
All attention distributions are restricted to the finite,
unmasked support.

The contribution of this group to the layer output is
\begin{equation}
o_x(W_K,W_V)
=
\sum_{h\in\mathcal H}
p_{x,h}(W_K)^\top X_xW_VO_h.
\label{eq:gqa-group-output}
\end{equation}
All derivations below condition on the calibration inputs
and query vectors.

\subsection{Output-Error Decomposition}
\label{app:gqa-output-bound}

Let
\[
E_V=W_V-\widehat W_V,
\qquad
p_{x,h}=p_{x,h}(W_K),
\qquad
\widehat p_{x,h}=p_{x,h}(\widehat W_K).
\]
Adding and subtracting the output obtained with original
attention and compressed values yields
\begin{align}
o_x(W_K,W_V)-o_x(\widehat W_K,\widehat W_V)
&=
\underbrace{
\sum_{h\in\mathcal H}
p_{x,h}^\top X_xE_VO_h
}_{v_x}
\nonumber\\
&\quad+
\underbrace{
\sum_{h\in\mathcal H}
(p_{x,h}-\widehat p_{x,h})^\top
X_x\widehat W_VO_h
}_{k_x}.
\label{eq:gqa-output-decomposition}
\end{align}
Define
\begin{align}
\mathcal J_{\mathrm{GQA}}
&=
\sum_x
\|o_x(W_K,W_V)
-o_x(\widehat W_K,\widehat W_V)\|_2^2,
\\
\mathcal L_{V,\mathrm{GQA}}(\widehat W_V)
&=
\sum_x\|v_x\|_2^2,
\label{eq:gqa-value-objective}
\\
\mathcal L_{K,\mathrm{GQA}}(\widehat W_K)
&=
\sum_x\sum_{h\in\mathcal H}
D_{\mathrm{KL}}(p_{x,h}\,\|\,\widehat p_{x,h}).
\label{eq:gqa-key-objective}
\end{align}

Suppose that, on the considered candidate class,
\[
\max_{x,h,n}
\|(X_x\widehat W_VO_h)_{n,:}\|_2
\le M
\]
for a common finite constant $M$.
Then
\begin{equation}
\mathcal J_{\mathrm{GQA}}
\le
2\mathcal L_{V,\mathrm{GQA}}
+
4HM^2\mathcal L_{K,\mathrm{GQA}}.
\label{eq:gqa-output-bound}
\end{equation}

To verify this, write
\[
k_{x,h}
=
(p_{x,h}-\widehat p_{x,h})^\top
X_x\widehat W_VO_h.
\]
The row-norm bound and Pinsker's inequality imply
\[
\|k_{x,h}\|_2^2
\le
M^2\|p_{x,h}-\widehat p_{x,h}\|_1^2
\le
2M^2D_{\mathrm{KL}}(p_{x,h}\,\|\,\widehat p_{x,h}).
\]
Using
$\|\sum_h k_{x,h}\|_2^2\le H\sum_h\|k_{x,h}\|_2^2$
and
$\|v_x+k_x\|_2^2\le2\|v_x\|_2^2+2\|k_x\|_2^2$
gives \eqref{eq:gqa-output-bound}.

As in the single-head case, this is a conditional upper bound:
a rank constraint alone does not provide the required uniform
row-norm bound.

\subsection{Shared-Key Compression}
\label{app:gqa-keys}

The shared key projection affects every query head in the group.
Accordingly, the key objective
\eqref{eq:gqa-key-objective} aggregates their attention KL
divergences before constructing the low-rank approximation.

Let
\[
E_K=W_K-\widehat W_K,
\qquad
e_K=\operatorname{vec}(E_K).
\]
For each query-head pair $(x,h)$, define
\[
J_{x,h}
=
\operatorname{diag}(p_{x,h})-p_{x,h}p_{x,h}^\top.
\]
With RoPE, define the logit-perturbation map
\[
(\mathcal B_{x,h})_{n,:}
=
\operatorname{vec}
\left(
X_{x,n}^\top q_{x,h}^\top R_{n-m_x}
\right)^\top.
\]
The corresponding logit perturbation is
\[
\Delta z_{x,h}
=
-\frac{1}{\sqrt{d_h}}\mathcal B_{x,h}e_K.
\]
Applying the logit-space KL expansion separately to each
query head gives
\begin{equation}
\mathcal L_{K,\mathrm{GQA}}(\widehat W_K)
=
e_K^\top M_{K,\mathrm{GQA}}e_K
+
\sum_{x,h}\mathcal R_{x,h},
\label{eq:gqa-key-expansion}
\end{equation}
where
\begin{equation}
M_{K,\mathrm{GQA}}
=
\frac{1}{2d_h}
\sum_x\sum_{h\in\mathcal H}
\mathcal B_{x,h}^\top J_{x,h}\mathcal B_{x,h}
\succeq0.
\label{eq:gqa-key-curvature}
\end{equation}
Using $\omega(v)=\max_n v_n-\min_n v_n$, the remainder obeys
\begin{equation}
|\mathcal R_{x,h}|
\le
\frac{1}{6d_h^{3/2}}
\omega(\mathcal B_{x,h}e_K)^3.
\label{eq:gqa-key-remainder}
\end{equation}

Without RoPE,
$\mathcal B_{x,h}=q_{x,h}^\top\otimes X_x$, and therefore
\[
M_{K,\mathrm{GQA}}
=
\frac{1}{2d_h}
\sum_{x,h}
(q_{x,h}q_{x,h}^\top)
\otimes
(X_x^\top J_{x,h}X_x).
\]
Thus the shared-key extension consists of aggregating the
curvature of all associated query heads.
The Kronecker fitting and low-rank optimization are performed
once for the shared key projection, rather than independently
for each query head.

\subsection{Shared-Value Compression}
\label{app:gqa-values}

Sharing values requires additional care because the group output
sums the contributions of multiple query heads before its error
is measured.
Define
\[
u_{x,h}=p_{x,h}^\top X_x\in\mathbb R^{1\times d},
\qquad
e_V=\operatorname{vec}(E_V).
\]
Then
\[
v_x=\sum_{h\in\mathcal H}u_{x,h}E_VO_h.
\]
By column vectorization,
\[
v_x^\top=T_xe_V,
\qquad
T_x
=
\sum_{h\in\mathcal H}
(O_h^\top\otimes u_{x,h})
\in\mathbb R^{d\times dd_h}.
\]
Consequently, the exact shared-value objective is
\begin{equation}
\mathcal L_{V,\mathrm{GQA}}(\widehat W_V)
=
e_V^\top M_{V,\mathrm{GQA}}e_V,
\qquad
M_{V,\mathrm{GQA}}
=
\sum_x T_x^\top T_x
\succeq0.
\label{eq:gqa-value-curvature}
\end{equation}
Expanding the metric gives
\begin{equation}
M_{V,\mathrm{GQA}}
=
\sum_x
\sum_{h,k\in\mathcal H}
(O_hO_k^\top)
\otimes
(u_{x,h}^\top u_{x,k}).
\label{eq:gqa-value-cross-head}
\end{equation}
The terms with $h\ne k$ capture cross-head interactions in the
summed group output.

In general, \eqref{eq:gqa-value-curvature} is not a single
Kronecker product.
Therefore, the single-head weighted-SVD theorem does not apply
unchanged to the exact shared-value objective.
A tractable extension approximates
$M_{V,\mathrm{GQA}}$ by a positive-semidefinite Kronecker
product before applying weighted SVD.

For comparison, a headwise surrogate that discards output
cross terms would be
\[
\mathcal L_{V,\mathrm{head}}
=
\sum_{x,h}\|u_{x,h}E_VO_h\|_2^2,
\]
with metric
\[
M_{V,\mathrm{head}}
=
\sum_{x,h}
(O_hO_h^\top)\otimes(u_{x,h}^\top u_{x,h}).
\]
It satisfies
\[
\mathcal L_{V,\mathrm{GQA}}
\le H\mathcal L_{V,\mathrm{head}},
\]
but remains a sum of Kronecker products in general.
Thus even this alternative does not automatically reduce to
one exact two-sided weighted Frobenius objective.

When $H=1$, the exact value metric reduces to
\[
M_{V,\mathrm{GQA}}
=
(O_1O_1^\top)\otimes
\left(\sum_x u_{x,1}^\top u_{x,1}\right),
\]
recovering the single-head formulation.

\subsection{Common Kronecker Surrogate and Closed-Form Solution}
\label{app:gqa-svd}

For $S\in\{K,V\}$, let $M_{S,\mathrm{GQA}}$ denote the
corresponding metric above.
Fit fixed positive-semidefinite factors
\[
M_{S,\mathrm{GQA}}
\approx A_S\otimes B_S,
\qquad
A_S\in\mathbb R^{d_h\times d_h},
\quad
B_S\in\mathbb R^{d\times d}.
\]
For $\epsilon>0$, factor
\[
B_S+\epsilon I_d=L_S^\top L_S,
\qquad
A_S+\epsilon I_{d_h}=C_SC_S^\top.
\]
The surrogate is
\[
\widehat{\mathcal L}_{S,\mathrm{GQA}}(\widehat W_S)
=
\|L_S(W_S-\widehat W_S)C_S\|_F^2.
\]
If
\[
L_SW_SC_S=U_S\Sigma_SV_S^\top,
\]
then a global rank-$r_S$ minimizer is
\begin{equation}
\widehat W_{S,r_S}
=
L_S^{-1}
U_{S,r_S}\Sigma_{S,r_S}V_{S,r_S}^\top
C_S^{-1}.
\label{eq:gqa-svd-solution}
\end{equation}
This follows because the invertible change of variables
$Y=L_S\widehat W_SC_S$ preserves rank, reducing the problem
to ordinary truncated SVD.

For
$E_S=W_S-\widehat W_S$ and
$e_S=\operatorname{vec}(E_S)$, define
\[
\Delta_{\epsilon,S}(E_S)
=
\epsilon\operatorname{tr}(B_SE_SE_S^\top)
+
\epsilon\operatorname{tr}(E_SA_SE_S^\top)
+
\epsilon^2\|E_S\|_F^2.
\]
The value approximation satisfies
\begin{equation}
\begin{aligned}
\left|
\mathcal L_{V,\mathrm{GQA}}
-
\widehat{\mathcal L}_{V,\mathrm{GQA}}
\right|
&\le
\left|
e_V^\top
(M_{V,\mathrm{GQA}}-A_V\otimes B_V)e_V
\right|
\\
&\quad+\Delta_{\epsilon,V}(E_V).
\end{aligned}
\label{eq:gqa-value-error-bound}
\end{equation}
The key approximation satisfies
\begin{equation}
\begin{aligned}
\left|
\mathcal L_{K,\mathrm{GQA}}
-
\widehat{\mathcal L}_{K,\mathrm{GQA}}
\right|
&\le
\left|
e_K^\top
(M_{K,\mathrm{GQA}}-A_K\otimes B_K)e_K
\right|
\\
&\quad+
\frac{1}{6d_h^{3/2}}
\sum_{x,h}\omega(\mathcal B_{x,h}e_K)^3
+
\Delta_{\epsilon,K}(E_K).
\end{aligned}
\label{eq:gqa-key-error-bound}
\end{equation}
Thus the GQA value construction incurs Kronecker and
regularization errors, while the key construction additionally
incurs the KL Taylor remainder.
Equation~\eqref{eq:gqa-svd-solution} exactly minimizes the
constructed surrogate, not necessarily the original grouped
objective.

\subsection{Multiple Groups and Cache Accounting}
\label{app:gqa-budget}

The construction is applied separately to each KV group.
If a layer has $G$ groups and their output errors are
$\Delta o_{x,1},\ldots,\Delta o_{x,G}$, then
\[
\left\|\sum_{g=1}^{G}\Delta o_{x,g}\right\|_2^2
\le
G\sum_{g=1}^{G}\|\Delta o_{x,g}\|_2^2.
\]
Independent groupwise optimization therefore need not exactly
minimize the total layer-output error, whose expansion also
contains cross-group terms.

A shared KV group stores only one compressed key representation
and one compressed value representation per token.
If their ranks are $r_{K,g}$ and $r_{V,g}$, its cache stores
$r_{K,g}+r_{V,g}$ latent coordinates per token, regardless
of the number of query heads in the group.
Accordingly, at a common precision and context length, the
global rank budget is imposed over KV groups:
\[
\sum_{\ell,g}
\left(r_{K,\ell,g}+r_{V,\ell,g}\right)
\le B_{\mathrm{cache}},
\]
rather than counting the shared projections once per query head.

\section{Extension to RoPE}\label{app: rope_extension}

\subsection{RoPE-Aware Attention and KV Cache}
\label{sec:prelim:attn}

Rotary Position Embedding (RoPE)~\citep{su2024roformer} injects position information by rotating the query and key vectors before the attention dot product. Let \(m_x\) denote the position of the query token \(x\), and let \(n\) index the positions of the keys stored in the KV cache. For each position \(m\), let \(R_m\in\mathbb R^{d_h\times d_h}\) be the orthogonal rotation matrix associated with position \(m\). For a query row vector \(q_x=xW_Q\in\mathbb R^{1\times d_h}\) and a key row vector \(k_n=X_nW_K\in\mathbb R^{1\times d_h}\), where \(X_n\in\mathbb R^{1\times d}\) is the \(n\)-th row of \(X\), RoPE applies
\[
\tilde q_x = q_x R_{m_x}^\top,\qquad \tilde k_n = k_n R_n^\top.
\]
The attention logit between query \(x\) and key \(n\) is therefore
\[
z_{x,n}(W_K)
=
\frac{\tilde q_x \tilde k_n^\top}{\sqrt{d_h}}
=
\frac{q_x R_{m_x}^\top R_n W_K^\top X_n^\top}{\sqrt{d_h}}
=
\frac{q_x R_{n-m_x} W_K^\top X_n^\top}{\sqrt{d_h}},
\]
where the relative rotation is
\[
R_{n-m_x}:=R_{m_x}^\top R_n.
\]
The RoPE attention probability vector is
\[
p_x(W_K)
=
\operatorname{Softmax}\bigl(z_{x,1}(W_K),\dots,z_{x,n}(W_K)\bigr)
\in\mathbb R^{1\times n}.
\]
The attention output is
\[
o_x(W_K,W_V)=p_x(W_K) X W_V W_O \in\mathbb R^{1\times d},
\]
with \(p_x(W_K)\) computed with RoPE. When RoPE is disabled, \(R_{n-m_x}=I\), and \(p_x(W_K)\) reduces to the NoPE attention vector \(a_x(W_K)\).

For the key-side analysis, under \(W_K\to\widehat W_K=W_K-E\), the logit perturbation is
\[
\Delta z_{x,n}
=
-\frac{1}{\sqrt{d_h}} q_x R_{n-m_x} E^\top X_n^\top.
\]
Collecting over \(n\),
\[
\Delta z_x
=
-\frac{1}{\sqrt{d_h}} B_x \operatorname{vec}(E),
\]
where \(B_x\in\mathbb R^{n\times dd_h}\) has rows
\[
(B_x)_{n,:}
=
\operatorname{vec}\!\left(X_n^\top q_x R_{n-m_x}\right)^\top.
\]
When RoPE is disabled, \(R_{n-m_x}=I\), and the above notation reduces to the NoPE case with
\[
z_{x,n}(W_K)=\frac{1}{\sqrt{d_h}}q_xW_K^\top X_n^\top,
\qquad
B_x \text{ replaced by } (q_x\otimes X_n^\top)^\top.
\]

\subsection{Extension of Lemma~\ref{lemma:lv_trans_eps}
and Theorem~\ref{thm:value_bsvd}}

Let $a_x^{\mathrm{RoPE}}=a_x^{\mathrm{RoPE}}(W_K)$ denote
the original attention probability row vector computed with
RoPE. As in Appendix~\ref{objective_derivation}, $X$ denotes
the context associated with query $x$, with this dependence
suppressed.
The attention output is
\[
o_x(W_K,W_V)
=
a_x^{\mathrm{RoPE}}XW_VW_O.
\]
RoPE acts on queries and keys, but not on the value or output
projections. Holding the original attention distributions
fixed, the value objective is therefore
\[
\mathcal L_V(\widehat W_V)
=
\sum_x
\left\|
a_x^{\mathrm{RoPE}}X(W_V-\widehat W_V)W_O
\right\|_2^2.
\]

Writing $E=W_V-\widehat W_V$, define the aggregated statistics
\[
G_V^0
=
\sum_x
X^\top(a_x^{\mathrm{RoPE}})^\top a_x^{\mathrm{RoPE}}X,
\qquad
H_V^0=W_OW_O^\top.
\]
The same trace calculation as in
Lemma~\ref{lemma:lv_trans_eps} gives
\[
\mathcal L_V(\widehat W_V)
=
\operatorname{tr}(G_V^0EH_V^0E^\top).
\]
For $\epsilon>0$, factor
\[
G_V=G_V^0+\epsilon I_d=L_V^\top L_V,
\qquad
H_V=H_V^0+\epsilon I_{d_h}=C_VC_V^\top.
\]
The regularized surrogate satisfies
\[
\mathcal L_V^\approx(\widehat W_V)
=
\|L_VEC_V\|_F^2
=
\mathcal L_V(\widehat W_V)+\Delta_{\epsilon,V}(E),
\]
where
\[
\Delta_{\epsilon,V}(E)
=
\epsilon\sum_x\|a_x^{\mathrm{RoPE}}XE\|_2^2
+
\epsilon\|EW_O\|_F^2
+
\epsilon^2\|E\|_F^2
\ge0.
\]
Thus Lemma~\ref{lemma:lv_trans_eps} holds with the
RoPE-enabled attention distributions used to compute $G_V^0$.
For fixed $W_O$ and $\epsilon$, $H_V$ and its factor $C_V$
are unchanged.

Since $L_V$ and $C_V$ are invertible,
Theorem~\ref{thm:value_bsvd} applies to the regularized
surrogate without further modification.
If
\[
L_VW_VC_V
=
\bar U_V\bar\Sigma_V\bar R_V^\top,
\]
then a global minimizer over
$\operatorname{rank}(\widehat W_V)\le r$ is
\[
\widehat W_{V,r}^*
=
L_V^{-1}
\bar U_{V,1:r}\bar\Sigma_{V,1:r}
\bar R_{V,1:r}^\top
C_V^{-1},
\]
with factors
\[
\Wdown_{V,1:r}
=
L_V^{-1}\bar U_{V,1:r}\bar\Sigma_{V,1:r}^{1/2},
\qquad
\Wup_{V,1:r}
=
\bar\Sigma_{V,1:r}^{1/2}
\bar R_{V,1:r}^\top C_V^{-1}.
\]

Even when keys are also compressed,
$a_x^{\mathrm{RoPE}}$ in this value objective remains the
original attention distribution.
The additional output error caused by changing attention
is accounted for separately by the key-output term in
Appendix~\ref{objective_derivation}.
Accordingly, the solution above exactly minimizes the
regularized fixed-attention value surrogate, not the joint
key--value objective.

\subsection{Extension of Theorem \ref{thm:lk_kron_rank1}}

\begin{lemma}[Rank-1 Kronecker weighted Frobenius form with RoPE]
\label{lemma:lk_kron_rank1_rope}
For each calibration query $x$ at position $m_x$, let
$q_x\in\mathbb R^{d_h}$ be its unrotated query column vector,
and let $X_x\in\mathbb R^{n_x\times d}$ contain the context
representations on its finite, unmasked attention support.
Write $X_{x,n}$ for the $n$-th row of $X_x$, and define
\[
z_{x,n}(W_K)
=
\frac{1}{\sqrt{d_h}}
q_x^\top R_{n-m_x}W_K^\top X_{x,n}^\top,
\qquad
p_x=\operatorname{softmax}(z_x(W_K)).
\]
Here $z_x$ and $p_x$ are column vectors and $p_{x,n}>0$.
Define
\[
J_x=\operatorname{diag}(p_x)-p_xp_x^\top,
\qquad
(B_x)_{n,:}
=
\operatorname{vec}
\left(X_{x,n}^\top q_x^\top R_{n-m_x}\right)^\top,
\]
where $\operatorname{vec}$ stacks matrix columns.

The exact, unregularized second-order quadratic metric is
\begin{equation}
M_K^0
=
\frac{1}{2d_h}\sum_x B_x^\top J_xB_x
=
\frac{1}{2d_h}\sum_x
B_x^\top J_x\operatorname{diag}(1/p_x)J_xB_x.
\label{eq:rope-key-metric}
\end{equation}
For $\epsilon>0$, define the pre-fitting regularized metric
\begin{equation}
P_K
=
\sum_x(q_xq_x^\top)\otimes I_d,
\qquad
M_K^\epsilon=M_K^0+\epsilon P_K.
\label{eq:rope-prefitting-metric}
\end{equation}
This regularizer is defined in the unrotated query coordinates.

Let $A_K\succeq0$ and $B_K\succeq0$ be fixed factors of a
rank-one Kronecker approximation
\[
M_K^\epsilon\approx A_K\otimes B_K,
\]
with dimensions $d_h\times d_h$ and $d\times d$, respectively.
The factors may depend on $\epsilon$ through the fitting target.
Define the post-fitting regularizations
\[
A_K^\epsilon=A_K+\epsilon I_{d_h},
\qquad
B_K^\epsilon=B_K+\epsilon I_d,
\]
and factor
\[
B_K^\epsilon=L_K^\top L_K,
\qquad
A_K^\epsilon=C_KC_K^\top.
\]
Then the surrogate
\[
\widehat{\mathcal L}_K(\widehat W_K)
=
\left\|L_K(W_K-\widehat W_K)C_K\right\|_F^2
\]
has a global minimizer over
$\operatorname{rank}(\widehat W_K)\le r$ given by
\begin{equation}
\widehat W_K^\star
=
L_K^{-1}U_r\Sigma_rV_r^\top C_K^{-1},
\label{eq:rope-key-svd-solution}
\end{equation}
where $L_KW_KC_K=U\Sigma V^\top$ and
$U_r,\Sigma_r,V_r$ are its top-$r$ components.

For $E=W_K-\widehat W_K$, $e=\operatorname{vec}(E)$, and
\[
\mathcal L_{K,x}(\widehat W_K)
=
D_{\mathrm{KL}}
\left(
p_x\,\|\,\operatorname{softmax}(z_x(\widehat W_K))
\right),
\]
the approximation error satisfies
\begin{equation}
\begin{aligned}
\left|
\sum_x\mathcal L_{K,x}(\widehat W_K)
-
\widehat{\mathcal L}_K(\widehat W_K)
\right|
&\le
\underbrace{
\left|e^\top(M_K^\epsilon-A_K\otimes B_K)e\right|
}_{\text{Kronecker approximation}}
\\
&\quad+
\underbrace{
\frac{1}{6d_h^{3/2}}
\sum_x\omega(B_xe)^3
}_{\text{Taylor remainder bound}}
+
\underbrace{
\Delta_\epsilon(E)
}_{\text{total regularization}},
\end{aligned}
\label{eq:rope-key-error-bound}
\end{equation}
where
\[
\omega(v)=\max_n v_n-\min_n v_n
\]
and
\begin{align}
\Delta_\epsilon(E)
&=
\Delta_{\epsilon,\mathrm{pre}}(E)
+
\Delta_{\epsilon,\mathrm{post}}(E),
\label{eq:rope-key-regularization-error}
\\
\Delta_{\epsilon,\mathrm{pre}}(E)
&=
\epsilon e^\top P_Ke
=
\epsilon\sum_x\|Eq_x\|_2^2,
\label{eq:rope-prefitting-error}
\\
\Delta_{\epsilon,\mathrm{post}}(E)
&=
\epsilon\operatorname{tr}(B_KEE^\top)
+
\epsilon\operatorname{tr}(EA_KE^\top)
+
\epsilon^2\|E\|_F^2.
\label{eq:rope-postfitting-error}
\end{align}
Both regularization contributions are nonnegative.

When RoPE is disabled, $R_{n-m_x}=I$ and
$B_x=q_x^\top\otimes X_x$.
Consequently,
\begin{equation}
M_K^\epsilon
=
\sum_x(q_xq_x^\top)\otimes
\left(
\frac{1}{2d_h}X_x^\top J_xX_x+\epsilon I_d
\right),
\label{eq:rope-prefitting-nope-reduction}
\end{equation}
recovering the non-RoPE construction with a pre-fitting
ridge in the per-query left metric.
\end{lemma}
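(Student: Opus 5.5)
The plan is to repeat the non-RoPE argument of Lemma~\ref{lemma:lk_frob_per_token} and Theorem~\ref{thm:lk_kron_rank1}, with the per-query Kronecker structure $q_x^\top\otimes X_x$ replaced by the general logit-perturbation map $B_x$.

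\textbf{Step 1 (per-query expansion).} Fix a query $x$. The logit perturbation is exactly linear in $e$: $h_x:=z_x(\widehat W_K)-z_x(W_K)=-\frac{1}{\sqrt{d_h}}B_xe$. This holds because $q_x^\top R_{n-m_x}E^\top X_{x,n}^\top=\operatorname{vec}(X_{x,n}^\top q_x^\top R_{n-m_x})^\top\operatorname{vec}(E)$ by the trace--vec identity. Define $F_x(t)=\log\sum_n p_{x,n}e^{th_{x,n}}$ as before. The KL identity $\mathcal L_{K,x}=F_x(1)-F_x(0)-F_x'(0)$ only uses the softmax structure, not RoPE. Taylor's theorem with integral remainder then gives
\[
\mathcal L_{K,x}=\tfrac12 h_x^\top J_xh_x+\mathcal R_x,\qquad |\mathcal R_x|\le \tfrac16\omega(h_x)^3=\tfrac{1}{6d_h^{3/2}}\omega(B_xe)^3,
\]
where $\omega$ is invariant under sign change. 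Since $\frac12 h_x^\top J_xh_x=\frac{1}{2d_h}e^\top B_x^\top J_xB_xe$, summing over $x$ yields $e^\top M_K^0e$. The second expression in \eqref{eq:rope-key-metric} follows from $J_x\operatorname{diag}(1/p_x)J_x=J_x$, which uses $\operatorname{diag}(1/p_x)p_x=\mathbf 1$ and $J_x\mathbf 1=0$, exactly as in Lemma~\ref{lemma:lk_frob_per_token}.

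\textbf{Step 2 (regularization and Kronecker bookkeeping).} By Lemma~\ref{lemma:kronecker_vec} with $A=q_xq_x^\top$ and $B=I_d$, we have $e^\top P_Ke=\sum_x\|Eq_x\|_2^2$. This gives
\[
\sum_x\mathcal L_{K,x}=e^\top M_K^\epsilon e+\sum_x\mathcal R_x-\Delta_{\epsilon,\mathrm{pre}}(E).
\]
On the surrogate side, Cholesky and Lemma~\ref{lemma:kronecker_vec} give
\[
\widehat{\mathcal L}_K=\operatorname{tr}(B_K^\epsilon EA_K^\epsilon E^\top)=e^\top(A_K\otimes B_K)e+\Delta_{\epsilon,\mathrm{post}}(E),
\]
obtained by expanding the products and using $\operatorname{tr}(B_KEE^\top)\ge0$ and $\operatorname{tr}(EA_KE^\top)\ge0$ since $A_K,B_K\succeq0$. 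Subtracting gives the exact signed decomposition $e^\top(M_K^\epsilon-A_K\otimes B_K)e+\sum_x\mathcal R_x-\Delta_\epsilon(E)$. The triangle inequality then yields \eqref{eq:rope-key-error-bound}.

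\textbf{Step 3 (minimizer and NoPE reduction).} $L_K$ and $C_K$ are invertible because $B_K^\epsilon$ and $A_K^\epsilon$ are positive definite. The substitution $Y=L_K\widehat W_KC_K$ is therefore a rank-preserving bijection. Eckart--Young--Mirsky, used verbatim as in the proof of Theorem~\ref{thm:value_bsvd}, gives \eqref{eq:rope-key-svd-solution}.

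For the NoPE case, set $R=I$. Then $(B_x)_{n,:}=\operatorname{vec}(X_{x,n}^\top q_x^\top)^\top=(q_x\otimes X_{x,n}^\top)^\top$, so $B_x=q_x^\top\otimes X_x$. The mixed-product rule gives $B_x^\top J_xB_x=(q_xq_x^\top)\otimes(X_x^\top J_xX_x)$. Adding $\epsilon P_K$ and combining Kronecker terms over the common left factor $q_xq_x^\top$ gives \eqref{eq:rope-prefitting-nope-reduction}.

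\textbf{Main obstacle.} The delicate point is index and orientation consistency. Specifically, one must check that the row definition of $B_x$ with column-stacking $\operatorname{vec}$ matches $\operatorname{vec}(E)$ for $E\in\mathbb R^{d\times d_h}$. This requires verifying $\operatorname{tr}(R^\top q\,X_n E)$-type identities and the sign and transposition of $R_{n-m_x}$. The same bookkeeping must hold in the NoPE reduction $B_x=q_x^\top\otimes X_x$. I would verify these carefully with $\operatorname{vec}(ab^\top)=b\otimes a$ before doing anything else; everything else mirrors the earlier proofs.
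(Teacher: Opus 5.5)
Your proposal is correct and follows essentially the same route as the paper's proof. It writes the logit perturbation exactly as $h_x=-\frac{1}{\sqrt{d_h}}B_xe$ via the vec identity, then applies the log-partition Taylor expansion with the $\frac16\omega(h_x)^3$ remainder together with $J_x\operatorname{diag}(1/p_x)J_x=J_x$, and does the pre/post-ridge bookkeeping that yields the exact signed error decomposition. It then uses Eckart--Young--Mirsky after the rank-preserving change of variables, and obtains the NoPE case by the mixed-product reduction $B_x=q_x^\top\otimes X_x$; your orientation checks via $\operatorname{vec}(ab^\top)=b\otimes a$ are right.
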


\begin{proof}
Below is the proof:
\paragraph{Step 1: RoPE-aware logit perturbation.}
For $E=W_K-\widehat W_K$, define
\[
h_{x,n}
=
z_{x,n}(\widehat W_K)-z_{x,n}(W_K)
=
-\frac{1}{\sqrt{d_h}}
q_x^\top R_{n-m_x}E^\top X_{x,n}^\top.
\]
Set $u=X_{x,n}^\top\in\mathbb R^d$ and
$v=R_{n-m_x}^\top q_x\in\mathbb R^{d_h}$.
The column-vectorization identity gives
\[
u^\top Ev
=
\operatorname{vec}(E)^\top(v\otimes u)
=
\operatorname{vec}(E)^\top\operatorname{vec}(uv^\top).
\]
Therefore,
\[
q_x^\top R_{n-m_x}E^\top X_{x,n}^\top
=
e^\top
\operatorname{vec}
\left(X_{x,n}^\top q_x^\top R_{n-m_x}\right).
\]
Stacking over the attention support yields
\begin{equation}
h_x=-\frac{1}{\sqrt{d_h}}B_xe.
\label{eq:rope-logit-perturbation}
\end{equation}

\paragraph{Step 2: Direct KL expansion in logit space.}
For a fixed query $x$, define
\[
F_x(t)
=
\log\left(
\sum_{n=1}^{n_x}p_{x,n}e^{t h_{x,n}}
\right),
\qquad t\in[0,1].
\]
The exact softmax KL identity gives
\begin{equation}
\mathcal L_{K,x}(\widehat W_K)
=
F_x(1)-F_x(0)-F_x'(0).
\label{eq:rope-kl-logit-identity}
\end{equation}
Here $F_x(0)=0$ and $F_x'(0)=p_x^\top h_x$.

Define
\[
p_{x,n}^{(t)}
=
\frac{p_{x,n}e^{t h_{x,n}}}
     {\sum_j p_{x,j}e^{t h_{x,j}}},
\qquad
\mu_x(t)=\sum_n p_{x,n}^{(t)}h_{x,n}.
\]
Differentiation yields
\[
F_x''(0)=h_x^\top J_xh_x,
\qquad
F_x'''(t)
=
\sum_n
p_{x,n}^{(t)}
\bigl(h_{x,n}-\mu_x(t)\bigr)^3.
\]
Taylor's theorem with integral remainder therefore gives
\begin{equation}
\mathcal L_{K,x}(\widehat W_K)
=
\frac12h_x^\top J_xh_x+\mathcal R_x,
\qquad
\mathcal R_x
=
\frac12\int_0^1(1-t)^2F_x'''(t)\,dt.
\label{eq:rope-kl-taylor}
\end{equation}
Since $\mu_x(t)$ lies between the minimum and maximum
entries of $h_x$,
\[
|F_x'''(t)|
\le
\sum_n p_{x,n}^{(t)}
|h_{x,n}-\mu_x(t)|^3
\le
\omega(h_x)^3.
\]
It follows that
\begin{equation}
|\mathcal R_x|
\le
\frac16\omega(h_x)^3
=
\frac{1}{6d_h^{3/2}}\omega(B_xe)^3.
\label{eq:rope-explicit-taylor-bound}
\end{equation}

\paragraph{Step 3: Exact metric and pre-fitting ridge.}
Substituting \eqref{eq:rope-logit-perturbation} into
\eqref{eq:rope-kl-taylor} gives
\[
\mathcal L_{K,x}(\widehat W_K)
=
\frac{1}{2d_h}e^\top B_x^\top J_xB_xe+\mathcal R_x.
\]
To verify the alternative expression for $M_K^0$, let
$D_x=\operatorname{diag}(p_x)$.
Since $D_x^{-1}p_x=\boldsymbol 1$ and
$J_x\boldsymbol 1=0$,
\[
J_xD_x^{-1}J_x
=
J_xD_x^{-1}(D_x-p_xp_x^\top)
=
J_x-J_x\boldsymbol 1p_x^\top
=
J_x.
\]
Thus the two expressions in
\eqref{eq:rope-key-metric} coincide, and
\begin{equation}
\sum_x\mathcal L_{K,x}(\widehat W_K)
=
e^\top M_K^0e+\sum_x\mathcal R_x.
\label{eq:rope-total-kl-identity}
\end{equation}

The pre-fitting regularization contributes
\[
\epsilon e^\top P_Ke
=
\epsilon\sum_x
e^\top\bigl((q_xq_x^\top)\otimes I_d\bigr)e
=
\epsilon\sum_x\|Eq_x\|_2^2.
\]
Hence
\begin{equation}
\sum_x\mathcal L_{K,x}(\widehat W_K)
=
e^\top M_K^\epsilon e
+
\sum_x\mathcal R_x
-
\Delta_{\epsilon,\mathrm{pre}}(E).
\label{eq:rope-prefitting-kl-identity}
\end{equation}
The subtraction is essential: the pre-fitting ridge is
added to the fitting target, not to the original KL loss.

\paragraph{Step 4: Kronecker approximation and post-fitting ridge.}
The vectorization identity gives
\[
e^\top(A_K\otimes B_K)e
=
\operatorname{tr}(B_KEA_KE^\top).
\]
Using the Cholesky factorizations,
\begin{align*}
\widehat{\mathcal L}_K(\widehat W_K)
&=
\operatorname{tr}
\left(
(B_K+\epsilon I_d)
E
(A_K+\epsilon I_{d_h})
E^\top
\right)
\\
&=
e^\top(A_K\otimes B_K)e
+
\Delta_{\epsilon,\mathrm{post}}(E).
\end{align*}
The pre-fitting correction is nonnegative because
$P_K\succeq0$, and the post-fitting correction is
nonnegative because $A_K,B_K\succeq0$.

Combining this equality with
\eqref{eq:rope-prefitting-kl-identity} yields the exact
signed error decomposition
\begin{equation}
\begin{aligned}
\sum_x\mathcal L_{K,x}(\widehat W_K)
-
\widehat{\mathcal L}_K(\widehat W_K)
&=
e^\top(M_K^\epsilon-A_K\otimes B_K)e
\\
&\quad+
\sum_x\mathcal R_x
-
\Delta_\epsilon(E).
\end{aligned}
\label{eq:rope-exact-error-decomposition}
\end{equation}
Taking absolute values and applying
\eqref{eq:rope-explicit-taylor-bound} proves
\eqref{eq:rope-key-error-bound}.

When RoPE is disabled,
\[
B_x=q_x^\top\otimes X_x,
\qquad
B_x^\top J_xB_x
=
(q_xq_x^\top)\otimes(X_x^\top J_xX_x).
\]
Adding $\epsilon P_K$ therefore gives
\eqref{eq:rope-prefitting-nope-reduction}.

\paragraph{Step 5: Closed-form surrogate minimizer.}
Since $A_K+\epsilon I_{d_h}$ and $B_K+\epsilon I_d$
are positive definite, $L_K$ and $C_K$ are invertible.
Define
\[
T_K=L_KW_KC_K,
\qquad
Y=L_K\widehat W_KC_K.
\]
The map $\widehat W_K\mapsto Y$ is bijective and
preserves rank. Consequently,
\[
\min_{\operatorname{rank}(\widehat W_K)\le r}
\widehat{\mathcal L}_K(\widehat W_K)
=
\min_{\operatorname{rank}(Y)\le r}
\|T_K-Y\|_F^2.
\]
By the Eckart--Young--Mirsky theorem, a global minimizer is
$Y^\star=U_r\Sigma_rV_r^\top$.
Applying the inverse transformation proves
\eqref{eq:rope-key-svd-solution}.
\end{proof}

\eat{
\begin{lemma}[Rank-1 Kronecker weighted Frobenius form with RoPE]
\label{lemma:lk_kron_rank1_rope}
Let \(M_K\) be the exact RoPE-aware quadratic form matrix
\[
M_K
:=
\frac{1}{2d_h}
\sum_x
B_x^\top J_x\operatorname{diag}(1/p_x)J_xB_x
\in\mathbb R^{dd_h\times dd_h},
\]
where for each query token \(x\) at position \(m_x\),
\[
(B_x)_{n,:}
=
\operatorname{vec}\!\left(X_n^\top q_x^\top R_{n-m_x}\right)^\top
\in\mathbb R^{1\times dd_h},
\qquad
J_x=\operatorname{diag}(p_x)-p_xp_x^\top.
\]
Approximate
\[
M_K\approx A_K\otimes B_K,
\qquad
A_K\in\mathbb R^{d_h\times d_h},\quad A_K\succeq0,
\qquad
B_K\in\mathbb R^{d\times d},\quad B_K\succeq0.
\]
For \(\epsilon>0\), define
\[
B_K^{\epsilon}:=B_K+\epsilon I_d\succ0,
\qquad
A_K^{\epsilon}:=A_K+\epsilon I_{d_h}\succ0,
\]
and let their Cholesky decompositions be
\[
B_K^{\epsilon}=L_K^\top L_K,
\qquad
A_K^{\epsilon}=C_KC_K^\top.
\]
Since \(B_K^{\epsilon}\succ0\) and \(A_K^{\epsilon}\succ0\), the factors \(L_K\in\mathbb R^{d\times d}\) and \(C_K\in\mathbb R^{d_h\times d_h}\) are invertible, so \(L_K^\dagger=L_K^{-1}\) and \(C_K^\dagger=C_K^{-1}\).
Then the surrogate objective
\[
\boxed{
\widehat{\mathcal L}_K(\widehat W_K)
:=
\left\|
L_K(W_K-\widehat W_K)C_K
\right\|_F^2
}
\]
has the closed-form minimizer over \(\operatorname{rank}(\widehat W_K)\le r\)
\[
\boxed{
\widehat W_K^\star
=
L_K^{-1}\,U_r\Sigma_rV_r^\top\,C_K^{-1},
}
\]
where \(L_KW_KC_K=U\Sigma V^\top\) is the SVD and \(U_r,\Sigma_r,V_r\) are its top-\(r\) components. The approximation error satisfies
\[
\boxed{
\begin{aligned}
\left|
\sum_x \mathcal L_{K,x}(\widehat W_K)
-
\widehat{\mathcal L}_K(\widehat W_K)
\right|
&\le
|\operatorname{vec}(E)^\top
\left(
M_K-A_K\otimes B_K
\right)
\operatorname{vec}(E)|
+
O\!\left(\sum_x\|\delta_x\|_2^3\right)
\\
&\quad+
O\!\left(\sum_x\frac{1}{d_h^{3/2}}\|XEq_x\|_2^3\right)
+
O(\epsilon),
\end{aligned}
}
\]
where \(E:=W_K-\widehat W_K\). When RoPE is disabled, \(R_{n-m_x}=I\), \(B_x^\top B_x=q_xq_x^\top\otimes X^\top X\), and the lemma reduces to the RoPE-free version.
\end{lemma}

\begin{proof}
For each query token \(x\) at position \(m_x\), the logit for key position \(n\) is
\[
z_{x,n}(W_K)
=
\frac{1}{\sqrt{d_h}}
q_x^\top R_{n-m_x} W_K^\top X_n^\top .
\]
Define \(E= W_K - \widehat W_K\), the logit perturbation before softmax is
\[
\Delta z_{x,n}
=
-\frac{1}{\sqrt{d_h}}
q_x^\top R_{n-m_x} E^\top X_n^\top .
\]
To collect the \(n\) scalar perturbations into a single vector, write each scalar as an inner product with \(\operatorname{vec}(E)\). Using the vectorization identity
\[
a^\top E b = \operatorname{vec}(E)^\top (b\otimes a),
\]
with \(a=R_{n-m_x}^\top q_x\in\mathbb R^{d_h}\) and \(b=X_n^\top\in\mathbb R^{d}\), we obtain
\[
q_x^\top R_{n-m_x} E^\top X_n^\top
=
\operatorname{vec}(E)^\top
\bigl((R_{n-m_x}^\top q_x)\otimes X_n^\top\bigr)
=
\operatorname{vec}(E)^\top
\operatorname{vec}\!\left(X_n^\top q_x^\top R_{n-m_x}\right).
\]
Stacking these row vectors for \(n=1,\dots,n\), we obtain
\[
\Delta z_x
=
-\frac{1}{\sqrt{d_h}} B_x \operatorname{vec}(E),
\qquad
(B_x)_{n,:}
=
\operatorname{vec}\!\left(X_n^\top q_x^\top R_{n-m_x}\right)^\top .
\]

The softmax perturbation is
\[
\delta_x
=
J_x \Delta z_x
+
R_{\mathrm{SM},x},
\qquad
J_x=\operatorname{diag}(p_x)-p_xp_x^\top ,
\]
where \(R_{\mathrm{SM},x}\) is the exact softmax remainder
\[
R_{\mathrm{SM},x}
=
\int_0^1 (1-t)\,R_{g(t)}(\Delta z_x^\top)\,dt,
\qquad
g(t)=\operatorname{softmax}\!\left(z_x(W_K)+t\,\Delta z_x^\top\right),
\]
with \(\|R_{\mathrm{SM},x}\|_2=O(\|\Delta z_x\|_2^2)=O(d_h^{-1}\|XEq_x\|_2^2)\).

The second-order Taylor expansion of the KL divergence gives
\[
D_{\mathrm{KL}}(p_x\|p_x+\delta_x)
=
\frac12 \delta_x^\top \operatorname{diag}(1/p_x)\delta_x
+
R_{\mathrm{KL},x}(\delta_x),
\]
where \(R_{\mathrm{KL},x}(\delta_x)=O(\|\delta_x\|_2^3)\). Substituting the softmax expansion and collecting the quadratic term,
\[
\mathcal L_{K,x}(\widehat W_K)
=
\frac{1}{2d_h}
\operatorname{vec}(E)^\top
B_x^\top J_x \operatorname{diag}(1/p_x) J_x B_x
\operatorname{vec}(E)
+
O\!\left(\|\delta_x\|_2^3\right)
+
O\!\left(\frac{1}{d_h^{3/2}}\|XEq_x\|_2^3\right).
\]
The cubic term \(O(d_h^{-3/2}\|XEq_x\|_2^3)\) comes from the cross term between the linear softmax term and the softmax remainder \(R_{\mathrm{SM},x}\); the quartic term is dominated by it for \(\|XEq_x\|_2=O(\sqrt{d_h})\).

Define the per-token RoPE-aware quadratic form matrix
\[
M_{K,x}
:=
\frac{1}{2d_h}
B_x^\top J_x \operatorname{diag}(1/p_x) J_x B_x
\in\mathbb R^{dd_h\times dd_h}.
\]
Then
\[
\mathcal L_{K,x}(\widehat W_K)
=
\operatorname{vec}(E)^\top M_{K,x}\operatorname{vec}(E)
+
O\!\left(\|\delta_x\|_2^3\right)
+
O\!\left(\frac{1}{d_h^{3/2}}\|XEq_x\|_2^3\right).
\]
Summing over \(x\),
\begin{align}\label{eq: sum_per_token}
\sum_x \mathcal L_{K,x}(\widehat W_K)
=
\operatorname{vec}(E)^\top M_K\operatorname{vec}(E)
+
O\!\left(\sum_x\|\delta_x\|_2^3\right)
+
O\!\left(\sum_x\frac{1}{d_h^{3/2}}\|XEq_x\|_2^3\right),
\end{align}
where
\[
M_K:=\sum_x M_{K,x}.
\]
When RoPE is disabled, \(R_{n-m_x}=I\), and
\[
(B_x)_{n,:}
=
(q_x\otimes X_n^\top)^\top,
\qquad
B_x^\top B_x
=
q_xq_x^\top\otimes X^\top X,
\]
so \(M_{K,x}\) reduces to \(H_{K,x}\otimes G_{K,x}\), recovering the RoPE-free form.

Approximate \(M_K\) by the rank-1 Kronecker product \(A_K\otimes B_K\), with \(A_K\succeq0\), \(B_K\succeq0\). For \(\epsilon>0\), define
\[
B_K^{\epsilon}:=B_K+\epsilon I_d\succ0,
\qquad
A_K^{\epsilon}:=A_K+\epsilon I_{d_h}\succ0,
\]
and let
\[
B_K^{\epsilon}=L_K^\top L_K,
\qquad
A_K^{\epsilon}=C_KC_K^\top
\]
be Cholesky decompositions. Since both regularized matrices are positive definite, \(L_K\in\mathbb R^{d\times d}\) and \(C_K\in\mathbb R^{d_h\times d_h}\) are square and invertible.

The quadratic form of the surrogate is
\[
\operatorname{vec}(E)^\top(A_K\otimes B_K)\operatorname{vec}(E)
=
\operatorname{tr}\!\left(B_KE A_KE^\top\right).
\]
By the regularization identity,
\[
\left\|L_KE C_K\right\|_F^2
=
\operatorname{tr}\!\left(B_K^{\epsilon}E A_K^{\epsilon}E^\top\right)
=
\operatorname{tr}\!\left(B_KE A_KE^\top\right)
+
O(\epsilon),
\]
so
\[
\operatorname{vec}(E)^\top(A_K\otimes B_K)\operatorname{vec}(E)
=
\left\|L_KE C_K\right\|_F^2
+
O(\epsilon).
\]

Instead of bounding the truncation error by the operator norm, we keep the exact quadratic form. Substituting the approximation into \eqref{eq: sum_per_token},
\begin{align*}
&\sum_x \mathcal L_{K,x}(\widehat W_K)
=
\operatorname{vec}(E)^\top(A_K\otimes B_K)\operatorname{vec}(E)
+
|\operatorname{vec}(E)^\top(M_K-A_K\otimes B_K)\operatorname{vec}(E)|
\\
& +
O\!\left(\sum_x\|\delta_x\|_2^3\right)
+
O\!\left(\sum_x\frac{1}{d_h^{3/2}}\|XEq_x\|_2^3\right).    
\end{align*}
Using the regularization identity and rearranging,
\[
\boxed{
\begin{aligned}
\left|
\sum_x \mathcal L_{K,x}(\widehat W_K)
-
\widehat{\mathcal L}_K(\widehat W_K)
\right|
&\le
|\operatorname{vec}(E)^\top
\left(
M_K-A_K\otimes B_K
\right)
\operatorname{vec}(E)|
+
O\!\left(\sum_x\|\delta_x\|_2^3\right)
\\
&\quad+
O\!\left(\sum_x\frac{1}{d_h^{3/2}}\|XEq_x\|_2^3\right)
+
O(\epsilon).
\end{aligned}
}
\]
The first term is the exact Kronecker truncation error, kept as a quadratic form in \(\operatorname{vec}(E)\) rather than bounded by the operator norm. The second term is the KL Taylor remainder. The third term is the softmax linearization error. The fourth term is the positive-definite regularization error.

Minimize
\[
\left\|
L_K(W_K-\widehat W_K)C_K
\right\|_F^2
\]
over matrices \(\widehat W_K\) with \(\operatorname{rank}(\widehat W_K)\le r\). Let
\[
B:=L_KW_KC_K=U\Sigma V^\top
\]
be the SVD. The best rank-\(r\) approximation of \(B\) is \(B_r=U_r\Sigma_rV_r^\top\), where \(U_r,\Sigma_r,V_r\) are the top-\(r\) components. Since \(L_K\) and \(C_K\) are invertible, the optimal solution is
\[
\widehat W_K^\star
=
L_K^{-1}\,U_r\Sigma_rV_r^\top\,C_K^{-1}.
\]
\end{proof}
}

\section{Mathematical Comparison with A$^3$}
\label{app:comparison-a3}

A$^3$~\citep{wong2025a3} and BELT both use analytical
low-rank approximation, but differ in their treatment of
key compression.
A$^3$ approximates the attention logits before softmax,
whereas BELT derives its key metric from preservation of
the attention distributions after softmax.
The distinction lies in the approximation objective and
its induced metric, rather than in the truncated-SVD
operation itself.
Our value-side construction is closely related to the
attention-aware weighted approximation in A$^3$; we
therefore focus this comparison on keys.

\subsection{Notation and the Pre-Softmax Attention Operator}

We use the projection convention of
Section~\ref{sec:prelim:attn}:
$W_Q,W_K\in\mathbb R^{d\times d_h}$ and token
representations are row vectors.
Without RoPE, the logits for query $x$ and context $X$ are
\[
z_x(W_K)
=
\frac{1}{\sqrt{d_h}}xW_QW_K^\top X^\top.
\]
Thus the bilinear attention operator in representation
space is
\[
P=W_QW_K^\top\in\mathbb R^{d\times d}.
\]
Under the alternative convention in which projection
matrices act on column-vector representations,
the same operator is written as $W_Q^\top W_K$.
The apparent transpose difference is therefore purely
notational.

To describe covariance-weighted logit reconstruction,
let $Y$ stack calibration query representations and
let $X$ stack context representations.
The corresponding logit matrix is
\[
Z=\frac{1}{\sqrt{d_h}}YPX^\top.
\]

\subsection{A$^3$: Covariance-Weighted Logit Approximation}
\label{app:a3-logit-objective}

The pre-softmax reconstruction criterion underlying
the A$^3$ key construction can be written as
\begin{equation}
\mathcal L_{\mathrm{logit}}(\widehat P)
=
\frac{1}{d_h}
\|Y(P-\widehat P)X^\top\|_F^2.
\label{eq:a3-logit-objective}
\end{equation}
Let
\[
G_Q=Y^\top Y,
\qquad
G_X=X^\top X,
\]
and write covariance factorizations
\[
G_Q=U_QU_Q^\top,
\qquad
G_X=U_XU_X^\top.
\]
Then trace cyclicity gives
\begin{align}
\mathcal L_{\mathrm{logit}}(\widehat P)
&=
\frac{1}{d_h}
\operatorname{tr}
\left(
G_Q(P-\widehat P)
G_X(P-\widehat P)^\top
\right)
\nonumber\\
&=
\frac{1}{d_h}
\|U_Q^\top(P-\widehat P)U_X\|_F^2.
\label{eq:a3-whitened-logit-objective}
\end{align}

When the covariance factors are invertible, minimizing
\eqref{eq:a3-whitened-logit-objective} over
$\operatorname{rank}(\widehat P)\le r$ reduces to
truncated SVD of
\[
U_Q^\top P U_X
=
U_Q^\top W_QW_K^\top U_X.
\]
In particular,
\[
\widehat P_r
=
U_Q^{-\top}
\left(U_Q^\top P U_X\right)_r
U_X^{-1}.
\]
When query and context statistics coincide,
$U_Q=U_X$, giving the same-covariance transformed
operator.
With column-oriented projection matrices, this becomes
the expression $U_X^\top W_Q^\top W_KU_X$.

The covariance weighting accounts for the calibration
distribution of token representations.
However, the reconstruction criterion remains Euclidean
in logit space: for a fixed query, its error contribution
is the sum of squared perturbations of the logits.
It does not explicitly distinguish logit perturbations
according to their effect on the resulting softmax
distribution.

The formula above describes the unregularized
covariance-weighted objective.
If positive-definite covariance regularization is used,
the SVD instead exactly solves the corresponding
regularized weighted objective.

\subsection{Expressing Logit Reconstruction in Key-Weight Space}

For a direct comparison with BELT, hold $W_Q$ fixed and
write
\[
E=W_K-\widehat W_K,
\qquad e=\operatorname{vec}(E).
\]
The logit-perturbation column vector for query $x$ is
\[
h_x
=
z_x(\widehat W_K)^\top-z_x(W_K)^\top
=
-\frac{1}{\sqrt{d_h}}XEq_x^\top.
\]
Here $q_x=xW_Q$ retains the row-vector convention of
the preliminaries, and $X$ denotes the context of the
current query.

The resulting logit-reconstruction objective is
\begin{equation}
\mathcal L_{\mathrm{logit}}(\widehat W_K)
=
\sum_x\|h_x\|_2^2
=
e^\top M_{\mathrm{logit}}e,
\label{eq:a3-key-space-logit-objective}
\end{equation}
where
\begin{equation}
M_{\mathrm{logit}}
=
\frac{1}{d_h}
\sum_x
(q_x^\top q_x)\otimes(X^\top X).
\label{eq:a3-key-space-logit-metric}
\end{equation}
If all queries share the same context matrix, this
metric factorizes exactly:
\[
M_{\mathrm{logit}}
=
\frac{1}{d_h}
\left(\sum_x q_x^\top q_x\right)
\otimes(X^\top X).
\]
This separability explains why covariance whitening
naturally leads to a weighted-SVD solution for logit
reconstruction.

For query-dependent causal supports, $X$ varies with
$x$, and the metric is generally a sum of Kronecker
products rather than one product.
Thus a single covariance-factor construction corresponds
to the particular calibration weighting and support
treatment used in that construction.

The bilinear-operator formulation
\eqref{eq:a3-logit-objective} and the fixed-query
key-weight formulation
\eqref{eq:a3-key-space-logit-objective} should not be
identified without specifying the feasible
parameterization.
The former optimizes a rank-constrained operator
$\widehat P$, whereas the latter restricts that operator
to $W_Q\widehat W_K^\top$ with fixed $W_Q$.
Our comparison of error metrics does not require the two
parameterizations to be identical.

\subsection{BELT: Attention-Distribution Curvature}
\label{app:belt-attention-curvature}

BELT instead starts from
\begin{equation}
\mathcal L_K(\widehat W_K)
=
\sum_x
D_{\mathrm{KL}}
\left(
a_x(W_K)\,\middle\|\,a_x(\widehat W_K)
\right).
\label{eq:belt-comparison-kl-objective}
\end{equation}
For the original attention row vector $a_x$, let
\[
J_x=\operatorname{diag}(a_x)-a_x^\top a_x.
\]
The direct logit-space expansion gives
\begin{equation}
\mathcal L_K(\widehat W_K)
=
\frac12\sum_x h_x^\top J_xh_x
+
\sum_x\mathcal R_x(E),
\label{eq:belt-comparison-kl-expansion}
\end{equation}
with the explicit remainder bound derived in
Lemma~\ref{lemma:lk_frob_per_token}.

The unregularized quadratic metric is therefore
\begin{equation}
M_K^0
=
\frac{1}{2d_h}
\sum_x
(q_x^\top q_x)\otimes(X^\top J_xX).
\label{eq:belt-comparison-key-metric}
\end{equation}
Comparing \eqref{eq:a3-key-space-logit-metric} and
\eqref{eq:belt-comparison-key-metric}, the essential
difference is
\[
\boxed{
X^\top X
\quad\longrightarrow\quad
X^\top J_xX.
}
\]
Apart from the overall factor $1/2$, logit reconstruction
uses the identity metric on logit perturbations, whereas
BELT uses the query-dependent softmax curvature $J_x$.

The weighting can also be expressed as
\begin{equation}
X^\top J_xX
=
\sum_n a_{x,n}X_n^\top X_n
-
(a_xX)^\top(a_xX).
\label{eq:belt-attention-context-covariance}
\end{equation}
Thus BELT uses an attention-weighted, centered covariance
of context representations.
This differs from the uncentered context second moment
$X^\top X$ used by Euclidean logit reconstruction.

\subsection{What the Different Metrics Preserve}
\label{app:a3-belt-metric-interpretation}

\paragraph{Uniform logit shifts.}
Softmax is invariant to adding the same scalar to every
logit:
\[
\operatorname{softmax}(z+c\boldsymbol 1)
=
\operatorname{softmax}(z).
\]
For $h_x=c\boldsymbol 1$, logit reconstruction incurs
\[
\|h_x\|_2^2=n_xc^2,
\]
whereas
\[
h_x^\top J_xh_x=0.
\]
In this case the original attention KL is also exactly
zero, not merely zero to second order.
BELT's unregularized curvature therefore respects a
symmetry of the attention distribution that raw logit
reconstruction does not.

\paragraph{Attention-weighted variation.}
For an arbitrary perturbation,
\begin{equation}
h_x^\top J_xh_x
=
\sum_n a_{x,n}h_{x,n}^2
-
\left(\sum_n a_{x,n}h_{x,n}\right)^2.
\label{eq:belt-variance-interpretation}
\end{equation}
Equivalently,
\begin{equation}
h_x^\top J_xh_x
=
\frac12\sum_{n,m}
a_{x,n}a_{x,m}(h_{x,n}-h_{x,m})^2.
\label{eq:belt-pairwise-interpretation}
\end{equation}
The metric therefore measures changes in relative logits,
weighted by the original attention probabilities,
rather than absolute logit discrepancies.

For example, with two attended positions and
$a_x=(p,1-p)$,
\[
h_x^\top J_xh_x
=
p(1-p)(h_{x,1}-h_{x,2})^2,
\]
while Euclidean logit reconstruction penalizes
$h_{x,1}^2+h_{x,2}^2$.
The two criteria thus need not select the same low-rank
subspace.

\paragraph{Uniform-attention special case.}
If $a_x=\boldsymbol 1^\top/n_x$, then
\[
J_x
=
\frac{1}{n_x}
\left(
I-\frac{1}{n_x}\boldsymbol 1\boldsymbol 1^\top
\right).
\]
Hence BELT's local objective becomes
\[
\frac12h_x^\top J_xh_x
=
\frac{1}{2n_x}
\left\|
h_x-\frac{\boldsymbol 1^\top h_x}{n_x}
\boldsymbol 1
\right\|_2^2.
\]
In this special case, it is proportional to centered
logit reconstruction.
For nonuniform attention, however, the weighting is
generally anisotropic and query dependent.

\paragraph{Local sensitivity and its limitation.}
The attention-dependent metric can assign little
quadratic cost to directions that are locally
insensitive under the original attention distribution.
This does not imply that arbitrarily large perturbations
in those directions are harmless.
For example, a sufficiently large change to a
low-probability logit can substantially alter attention.
The Taylor remainder is therefore essential when
assessing the accuracy of the quadratic approximation.

Pre-fitting regularization and subsequent Kronecker
approximation can also modify the null directions and
relative weights of the exact curvature.
The interpretations above describe $M_K^0$; the
implemented surrogate approximates this metric with the
explicit error contributions analyzed in our theorems.

\subsection{RoPE and Position-Dependent Key Sensitivity}
\label{app:a3-belt-rope}

Without RoPE, a single bilinear operator
$P=W_QW_K^\top$ maps query and context representations
to logits.
With RoPE, the corresponding operator depends on the
relative position:
\[
z_{x,n}(W_K)
=
\frac{1}{\sqrt{d_h}}
xW_QR_{n-m_x}W_K^\top X_n^\top.
\]
In general, there is no single position-independent
matrix $P$ representing all these interactions.

Let $B_x$ be the RoPE-aware perturbation map defined in
Lemma~\ref{lemma:lk_kron_rank1_rope}, so that
\[
h_x=-\frac{1}{\sqrt{d_h}}B_xe.
\]
A RoPE-aware Euclidean logit objective would have metric
\begin{equation}
M_{\mathrm{logit}}^{\mathrm{RoPE}}
=
\frac{1}{d_h}\sum_x B_x^\top B_x,
\label{eq:rope-logit-comparison-metric}
\end{equation}
whereas BELT's exact local KL metric is
\begin{equation}
M_K^{0,\mathrm{RoPE}}
=
\frac{1}{2d_h}\sum_x B_x^\top J_xB_x.
\label{eq:rope-kl-comparison-metric}
\end{equation}
Thus the distinction between identity weighting and
softmax-curvature weighting persists even when both
criteria account for positional rotations.

Equation~\eqref{eq:rope-logit-comparison-metric} describes
a RoPE-aware logit-reconstruction criterion for comparison;
it is not an assertion that A$^3$ implements this extension.
BELT explicitly incorporates the relative rotations into
the perturbation map used to construct its key metric.

\subsection{Tractability and the Scope of Optimality}
\label{app:a3-belt-optimality}

A$^3$ obtains an analytical approximation by applying SVD
to a covariance-weighted pre-softmax operator.
BELT's query-dependent attention curvature is generally
not separable into a single left and right weighting
matrix.
To obtain a tractable weighted-SVD problem, BELT first
adds the specified pre-fitting ridge,
\[
M_K^\epsilon
=
M_K^0
+
\epsilon\sum_x(q_x^\top q_x)\otimes I_d,
\]
and fits
\[
M_K^\epsilon\approx A_K\otimes B_K.
\]
After regularizing the fitted factors, the resulting
surrogate is
\[
\mathcal L_K^\approx(\widehat W_K)
=
\|L_K(W_K-\widehat W_K)C_K\|_F^2.
\]
Its global rank-$r$ minimizer is
\[
\widehat W_{K,r}
=
L_K^{-1}(L_KW_KC_K)_rC_K^{-1}.
\]

Both methods therefore use standard weighted low-rank
approximation once their respective metrics are fixed.
The distinction is that A$^3$ targets pre-softmax
reconstruction, while BELT constructs its metric from
post-softmax distribution preservation and then
approximates that metric to recover a tractable SVD
solution.
The Kronecker fitting stage is separate from the
closed-form SVD stage.

BELT does not establish that its solution globally
minimizes the original attention KL, nor that it
universally dominates logit reconstruction.
Its approximation involves Taylor, Kronecker, and
regularization errors, which must be assessed at the
returned solutions.
Moreover, attention KL does not directly incorporate
the value vectors or downstream network sensitivity;
its connection to attention-output preservation is
through the conditional bound in
Appendix~\ref{objective_derivation}.

\section{Additional method details}

\subsection{Budget-Constrained Gates and Their Gradients}
\label{app:budget-gates}

Let
\[
\mathcal F=\{(i,j):1\le i\le N,\ 2\le j\le m_i\}
\]
be the set of components eligible for allocation after
reserving the first component of every block.
Write
\[
M_{\mathrm{free}}=|\mathcal F|,
\qquad b=B-N,
\qquad t_{i,j}=s_i-j.
\]
For an interior budget $0<b<M_{\mathrm{free}}$, define
$\alpha(\boldsymbol s)$ through
\begin{equation}
F(\alpha,\boldsymbol s)
=
\sum_{(i,j)\in\mathcal F}
\sigma\left(\frac{s_i-j-\alpha}{\tau}\right)
-b
=
0.
\label{eq:allocation-cutoff}
\end{equation}

\paragraph{Existence and uniqueness.}
For finite scores and $\tau>0$, $F$ is continuous and strictly
decreasing in $\alpha$.
Furthermore,
\[
\lim_{\alpha\to-\infty}F(\alpha,\boldsymbol s)
=
M_{\mathrm{free}}-b>0,
\qquad
\lim_{\alpha\to+\infty}F(\alpha,\boldsymbol s)
=
-b<0.
\]
Therefore, \eqref{eq:allocation-cutoff} has a unique finite
solution.

\paragraph{Implicit differentiation.}
For $(i,j)\in\mathcal F$, let
\[
g_{i,j}
=
\sigma\left(\frac{s_i-j-\alpha}{\tau}\right),
\qquad
w_{i,j}
=
\frac{g_{i,j}(1-g_{i,j})}{\tau},
\]
and define
\[
W_i=\sum_{j=2}^{m_i}w_{i,j},
\qquad
W=\sum_i W_i>0.
\]
Holding $\alpha$ fixed,
\[
\frac{\partial F}{\partial s_k}=W_k,
\qquad
\frac{\partial F}{\partial\alpha}=-W.
\]
The implicit function theorem gives
\begin{equation}
\frac{\partial\alpha}{\partial s_k}
=
-\frac{\partial F/\partial s_k}
        {\partial F/\partial\alpha}
=
\frac{W_k}{W}.
\label{eq:cutoff-gradient}
\end{equation}
Consequently,
\begin{equation}
\frac{\partial g_{i,j}}{\partial s_k}
=
w_{i,j}
\left(
\mathbf 1_{\{i=k\}}-\frac{W_k}{W}
\right).
\label{eq:gate-gradient}
\end{equation}
Summing over the free components yields
\[
\sum_{(i,j)\in\mathcal F}
\frac{\partial g_{i,j}}{\partial s_k}
=
W_k-\frac{W_k}{W}W=0,
\]
consistent with the constant relaxed budget.

If $\ell_{i,j}=\partial\mathcal L_{\mathrm{KD}}/
\partial g_{i,j}$ and the scores affect the loss only through
the gates, the score gradient is
\begin{equation}
\frac{\partial\mathcal L_{\mathrm{KD}}}{\partial s_k}
=
\sum_{j=2}^{m_k}\ell_{k,j}w_{k,j}
-
\frac{W_k}{W}
\sum_{(i,j)\in\mathcal F}\ell_{i,j}w_{i,j}.
\label{eq:allocation-score-gradient}
\end{equation}
The reserved gates $g_{i,1}=1$ have zero derivatives.

\paragraph{Numerical solution.}
The forward cutoff can be computed using bisection or
safeguarded Newton iteration.
The backward pass uses \eqref{eq:cutoff-gradient}--%
\eqref{eq:allocation-score-gradient}, rather than treating
the cutoff as constant.
In finite precision, the relaxed budget is satisfied up to
the stated root-solving tolerance.
Very small temperatures can cause sigmoid saturation and
an ill-conditioned cutoff derivative; the implementation
should monitor the residual and numerical stability.

\paragraph{Endpoint budgets.}
When $b=0$, all free gates are set to zero.
When $b=M_{\mathrm{free}}$, all free gates are set to one.
These cases require no cutoff solve or allocation training.

\paragraph{Hard allocation and prefix preservation.}
For deployment, retain the first component of every block
and select exactly $b$ elements of $\mathcal F$ with the
largest scores, breaking ties deterministically.
For two free components in the same block with $j<k$,
\[
t_{i,j}=s_i-j>s_i-k=t_{i,k}.
\]
Thus selecting component $k$ necessarily selects all
preceding free components.
Together with the reserved first component, each block
retains a prefix, and the final allocation has exactly $B$
components.

The relaxed gate-mass constraint does not imply a hard
rank constraint during training.
Nor does exact budget satisfaction establish optimality of
the learned hard allocation.
Its quality and the soft-to-hard discrepancy are evaluated
empirically.

\subsection{Integration with Quantization}
\label{app:quant}

To further enhance compression beyond low-rank decomposition, we quantize the compressed KV cache. \ours{} determines only the factors and the rank of each block, and block \(i\) still caches the compressed representation \(Z_i=X\Wdown_i\in\mathbb{R}^{n\times r_i}\) as in prior low-rank methods. Hence existing quantizers for low-rank KV caches apply to \ours{} without modification. We combine \ours{} with the quantizers of Palu~\citep{chang2024palu} and STAR-KV~\citep{bhatnagar2026starkv}, denoted Palu-Q and STAR-Q.

Since the \(j\)-th column of \(\Wdown_i\) is scaled by the square root of the \(j\)-th largest singular value (Theorem~\ref{thm:value_bsvd} and Corollary~\ref{thm:key_bsvd}), the leading coordinates of \(Z_i\) are much larger than the others and dominate the per-token quantization range. Both quantizers therefore apply an orthogonal rotation that spreads these outliers across coordinates~\citep{ashkboos2024quarot}, followed by per-token asymmetric quantization. With \(\mathcal Q_\beta\) denoting \(\beta\)-bit quantization, Palu-Q and STAR-Q cache
\begin{small}
    \begin{align}\label{eq: latent_quant}
    \hat Z_i = \mathcal Q_\beta(Z_iR_i)
    \quad\text{and}\quad
    \hat Z_i = \bigl[\,\mathcal Q_{\beta_{\mathrm{out}}}(Z_i^{\mathrm{out}}R_i^{\mathrm{out}}) \;\; \mathcal Q_{\beta_{\mathrm{in}}}(Z_i^{\mathrm{in}}R_i^{\mathrm{in}})\,\bigr],
    \end{align}
\end{small}
respectively, where \(R_i\), \(R_i^{\mathrm{out}}\), and \(R_i^{\mathrm{in}}\) are orthogonal matrices. STAR-Q quantizes the leading \(20\%\) coordinates \(Z_i^{\mathrm{out}}\) with \(\beta_{\mathrm{out}}=4\) bits and the rest \(Z_i^{\mathrm{in}}\) with \(\beta_{\mathrm{in}}=3\) bits, which requires the leading coordinates to be the dominant ones. \ours{} satisfies this requirement because every hard allocation retains a prefix of the transformed SVD of each block (Section~\ref{sec:method:alloc}). The rotations are fused into the factors offline, i.e., \(\Wdown_i\leftarrow\Wdown_iR_i\) and \(\Wup_i\leftarrow R_i^\top\Wup_i\) with a block-diagonal \(R_i\) for STAR-Q, and thus add no computation at inference. As the learned ranks are arbitrary integers, we use the normalized Walsh--Hadamard matrix when the rotated width is a power of two and a fixed random orthogonal matrix otherwise. As a rank can be as small as one, some blocks are single-signed, so we do not clip the zero point of \(\mathcal Q_\beta\) to \([0,2^\beta-1]\). 

\section{Additional experimental settings}
\paragraph{Configurations for System-level Performance Analysis}\label{app:system_configure}
All low-rank methods share STAR-KV's Triton kernels~\citep{bhatnagar2026starkv}, which we generalize to per-head, grouped, and joint layouts: keys are rebuilt with RoPE on chip, attention weights are applied directly to the compressed values, and packed codes are dequantized on chip with STAR-Q. Since latency depends only on tensor shapes, each cache holds random values in the layout and ranks of the corresponding checkpoint.

\section{Additional experimental results}\label{app: add_res}

\subsection{Additional system-level analysis results}\label{app:system}
\begin{figure}[h]
\centering
\includegraphics[width=\textwidth]{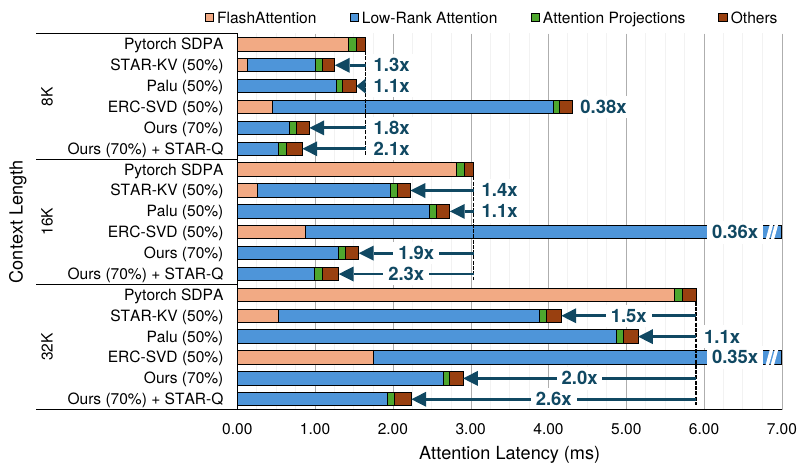}
\caption{Latency breakdown of the attention module in Figure~\ref{fig:attn_speedup}, averaged over all 32 layers. Arrows give the speedup over SDPA; the ERC-SVD bars at 16K and 32K are truncated.}
\label{fig:latency_breakdown}
\end{figure}

\subsection{Additional ablation studies}\label{app: ablation}

\paragraph{Attention without RoPE} To test \ours{} without rotary embeddings, we follow A3~\citep{wong2025a3} and use MPT-7B, which encodes position via ALiBi~\citep{press2022alibi}. We keep the main pipeline and data, setting all sequence lengths to 2048. Table~\ref{tab:nope} shows that \ours{} incurs accuracy losses close to those on LLaMA-2-7B, the other multi-head attention model, at both compression rates. \textbf{These results confirm that \ours{} generalizes to attention without RoPE.}

\begin{table*}[h]
\centering
\caption{\ours{} on MPT-7B, which encodes position with ALiBi instead of RoPE. Perplexity is measured on 2048-token windows.}
\label{tab:nope}
\resizebox{\textwidth}{!}{
\begin{tabular}{lcccccccccc}
\toprule
\multirow{2}{*}{\textbf{Model}} & \multirow{2}{*}{\textbf{Comp (\%)}} & \multicolumn{2}{c}{\textbf{Perplexity} $\downarrow$} & \multicolumn{7}{c}{\textbf{Zero-Shot Accuracy (\%)} $\uparrow$} \\
\cmidrule(lr){3-4} \cmidrule(lr){5-11}
 & & Wiki2 & C4 & OBQA & PIQA & ARC-e & ARC-c & Hella & Wino & Avg. \\
\midrule
MPT-7B & 0 & 7.70 & 9.99 & 42.60 & 79.22 & 74.66 & 41.81 & 76.15 & 69.14 & 63.93 \\
\cmidrule(lr){1-11}
\multirow{2}{*}{Ours} & 50 & 7.83 & 10.76 & 41.20 & 78.78 & 73.23 & 40.36 & 74.83 & 67.72 & 62.69 \\
 & 70 & 8.34 & 12.06 & 38.80 & 76.44 & 68.98 & 39.08 & 71.13 & 66.06 & 60.08 \\
\bottomrule
\end{tabular}
}
\end{table*}

\paragraph{Effect of Rank Allocation.} To isolate rank allocation, we fix the key and value bases of \ours{} on LLaMA-3.1-8B-Instruct and vary only the rank distribution under the same KV cache budget, which is reported in Table \ref{tab:rank_allocation}. Baselines include Static (uniform fractions), Layer-wise search (ASVD/A3 sensitivity-based), and STAR-KV (learned soft thresholds). \textbf{\ours{} achieves the lowest perplexity and highest average accuracy at both compression rates.} Layer-wise search improves over Static but still trails \ours{}, while the state-of-the-art STAR-KV falls below Static. We give an analysis of why START-KV fails in Appendix \ref{appendix:alpha}. A key pattern is that \textbf{learned allocation shifts ranks from keys to values}, retaining only 39--41\% in keys versus about half for layer-wise search, and \textbf{its benefit grows with compression}: at 70\%, it raises average accuracy by over 10 points over Static with identical bases.

\begin{table*}[h]
\centering
\caption{Compressing only the keys or only the values on LLaMA-3.1-8B-Instruct. Each variant takes the \ours{} checkpoint at 50\% or 70\% compression, keeps its low-rank projections for the compressed cache and the original projections for the other cache, and is not retrained. Comp is the resulting KV cache compression.}
\label{tab:kv_compression}
\resizebox{\textwidth}{!}{
\begin{tabular}{llcccccccccc}
\toprule
\multirow{2}{*}{\textbf{Method}} & \multirow{2}{*}{\textbf{Compressed cache}} & \multirow{2}{*}{\textbf{Comp (\%)}} & \multicolumn{2}{c}{\textbf{Perplexity} $\downarrow$} & \multicolumn{7}{c}{\textbf{Zero-Shot Accuracy (\%)} $\uparrow$} \\
\cmidrule(lr){4-5} \cmidrule(lr){6-12}
 & & & Wiki2 & C4 & OBQA & PIQA & ARC-e & ARC-c & Hella & Wino & Avg. \\
\midrule
LLaMA-3.1-8B-Inst & -- & 0 & 6.75 & 11.10 & 43.00 & 79.92 & 81.82 & 55.12 & 79.20 & 73.56 & 68.77 \\
\cmidrule(lr){1-12}
\multirow{3}{*}{Ours (50\%)} & Keys and values & 50.0 & 7.10 & 12.99 & 44.20 & 79.60 & 79.63 & 53.92 & 77.67 & 71.82 & 67.81 \\
 & Keys only & 30.3 & 6.90 & 11.99 & 44.20 & 80.30 & 81.40 & 54.78 & 78.37 & 73.24 & 68.72 \\
 & Values only & 19.7 & 6.97 & 12.17 & 44.80 & 79.22 & 80.89 & 54.52 & 78.10 & 73.56 & 68.52 \\
\cmidrule(lr){1-12}
\multirow{3}{*}{Ours (70\%)} & Keys and values & 70.0 & 8.54 & 17.51 & 41.80 & 76.66 & 73.32 & 47.01 & 72.70 & 68.75 & 63.37 \\
 & Keys only & 37.6 & 7.46 & 13.39 & 42.20 & 80.14 & 79.38 & 52.05 & 76.76 & 70.48 & 66.83 \\
 & Values only & 32.4 & 8.12 & 16.44 & 43.20 & 76.71 & 77.31 & 50.60 & 73.91 & 71.51 & 65.54 \\
\bottomrule
\end{tabular}
}
\end{table*}

\paragraph{Effect of Key and Value Compression.} To isolate each cache, we compress only the keys or only the values from our checkpoints at 50\% and 70\% compression, without retraining. The results of this experiment are included in Table \ref{tab:kv_compression}. First, we observe that \textbf{Values are substantially more sensitive to compression than keys.} At 70\%, compressing only values degrades accuracy and perplexity more than compressing only keys, even though the latter removes more of the cache. This matches our learned allocation, which assigns fewer ranks to keys (39--41\%). At high compression, the combined loss of both caches roughly equals the sum of individual losses, confirming that most degradation comes from the value cache and that keys tolerate much lower ranks.

\begin{table*}[t]
\centering
\caption{Rank allocation on LLaMA-3.1-8B-Instruct with the key and value bases of \ours{} fixed. All methods share the same KV cache budget; STAR-KV stops once its budgets are met and uses 1.5\% and 0.3\% fewer ranks at 50\% and 70\% compression. Bold marks the best result at each compression rate.}
\label{tab:rank_allocation}
\resizebox{\textwidth}{!}{
\begin{tabular}{lcccccccccc}
\toprule
\multirow{2}{*}{\textbf{Method}} & \multirow{2}{*}{\textbf{Comp (\%)}} & \multicolumn{2}{c}{\textbf{Perplexity} $\downarrow$} & \multicolumn{7}{c}{\textbf{Zero-Shot Accuracy (\%)} $\uparrow$} \\
\cmidrule(lr){3-4} \cmidrule(lr){5-11}
 & & Wiki2 & C4 & OBQA & PIQA & ARC-e & ARC-c & Hella & Wino & Avg. \\
\midrule
LLaMA-3.1-8B-Inst & 0 & 6.75 & 11.10 & 43.00 & 79.92 & 81.82 & 55.12 & 79.20 & 73.56 & 68.77 \\
\cmidrule(lr){1-11}
\multirow{2}{*}{Static} & 50 & 7.66 & 14.92 & 42.20 & 78.35 & 76.73 & 50.77 & 74.44 & 72.53 & 65.84 \\
 & 70 & 14.33 & 38.64 & 34.80 & 71.06 & 62.12 & 34.56 & 55.05 & 61.64 & 53.20 \\
\cmidrule(lr){1-11}
\multirow{2}{*}{Layer-wise search} & 50 & 7.52 & 14.65 & 42.40 & 78.51 & 78.58 & 53.33 & 75.62 & \textbf{73.01} & 66.91 \\
 & 70 & 11.57 & 32.02 & 37.20 & 72.80 & 68.77 & 41.04 & 61.84 & 65.19 & 57.81 \\
\cmidrule(lr){1-11}
\multirow{2}{*}{STAR-KV} & 50 & 8.18 & 16.21 & 40.60 & 77.31 & 75.51 & 46.59 & 71.07 & 68.90 & 63.33 \\
 & 70 & 15.42 & 40.21 & 35.00 & 71.33 & 60.61 & 31.40 & 54.45 & 58.64 & 51.90 \\
\cmidrule(lr){1-11}
\multirow{2}{*}{Ours} & 50 & \textbf{7.10} & \textbf{12.99} & \textbf{44.20} & \textbf{79.60} & \textbf{79.63} & \textbf{53.92} & \textbf{77.67} & 71.82 & \textbf{67.81} \\
 & 70 & \textbf{8.54} & \textbf{17.51} & \textbf{41.80} & \textbf{76.66} & \textbf{73.32} & \textbf{47.01} & \textbf{72.70} & \textbf{68.75} & \textbf{63.37} \\
\bottomrule
\end{tabular}
}
\end{table*}

\paragraph{Effect of Calibration Data.} We evaluate the sensitivity of \ours\ on the calibration data, which is performed on WikiText-2 and reported in Table \ref{tab:dataset}. For this experiment, we rebuild \ours{} for Qwen3-32B at 50\% compression using FineWeb-Edu~\citep{penedo2024fineweb} and the C4~\citep{c4} training split instead, keeping all else fixed. Table \ref{tab:dataset} suggests that \textbf{accuracy is nearly invariant, varying by only 0.26 points across corpora and consistently exceeding the uncompressed model.} Perplexity follows the calibration domain: WikiText-2 yields the lowest WikiText-2 perplexity, while web corpora lower C4 perplexity. The learned rank allocation also preserves its structure. \textbf{These results show that \ours{} is robust to the choice of calibration data.}

\begin{table*}[h]
\centering
\caption{Sensitivity of \ours{} to the calibration corpus on Qwen3-32B at 50\% KV cache compression. Each corpus supplies both the calibration set of the key and value bases and the rank-allocation training data. WikiText-2 is the corpus of the main experiments (Table~\ref{tab:main_results}).}
\label{tab:dataset}
\resizebox{\textwidth}{!}{
\begin{tabular}{llccccccccc}
\toprule
\multirow{2}{*}{\textbf{Model}} & \multirow{2}{*}{\textbf{Calibration data}} & \multicolumn{2}{c}{\textbf{Perplexity} $\downarrow$} & \multicolumn{7}{c}{\textbf{Zero-Shot Accuracy (\%)} $\uparrow$} \\
\cmidrule(lr){3-4} \cmidrule(lr){5-11}
 & & Wiki2 & C4 & OBQA & PIQA & ARC-e & ARC-c & Hella & Wino & Avg. \\
\midrule
Qwen3-32B & -- & 7.02 & 11.98 & 45.80 & 80.90 & 84.43 & 60.84 & 82.61 & 73.16 & 71.29 \\
\cmidrule(lr){1-11}
\multirow{3}{*}{Ours (50\%)} & WikiText-2 & 7.01 & 12.35 & 45.40 & 80.25 & 84.18 & 64.08 & 82.50 & 75.06 & 71.91 \\
 & FineWeb-Edu & 7.11 & 12.10 & 46.00 & 80.58 & 85.10 & 63.57 & 82.23 & 74.19 & 71.94 \\
 & C4 & 7.10 & 12.07 & 47.20 & 80.09 & 84.72 & 62.29 & 82.06 & 73.72 & 71.68 \\
\bottomrule
\end{tabular}
}
\end{table*}

\subsection{Empirical verification for theoretical claims}\label{app? emp_verify}

\paragraph{Evaluation of the bounds for approximating key-side objectives}

We empirically evaluate the error terms in the bound used for bounding the approximation error in Theorem \ref{thm:lk_kron_rank1} when $\widehat{W_K}=\widehat{W_K}^*$. The evaluation results are included in Table \ref{tab:key_approx}, which suggests that these error terms are all quite small, and the sum of Kronecker and Taylor approximation error is also very close to the true gap between $L_K^{\approx}$ and $L_K$, thus empirically justifying the theoretical analysis in Theorem \ref{thm:lk_kron_rank1}.

\begin{table}[]
    \centering
    \begin{tabular}{c|c}\toprule
        Kronecker truncation error & 0.015933 \\ 
        Taylor approximation error & -0.004157\\
        $\epsilon$ & 6.6e-5 \\
        $|\mathcal L_K(\widehat W_K 
-
\mathcal L_K^{\approx}(\widehat W_K)|$ & 0.011775\\ \bottomrule
    \end{tabular}
    \caption{Evaluation of the bounds used in Theorem \ref{thm:lk_kron_rank1} at $\widehat{W_K}^*$}
    \label{tab:key_approx}
\end{table}



\subsection{Per-task main results}
\label{per-task-main}

\begin{table*}[t]
\centering
\small
\caption{Zero-shot evaluation on commonsense reasoning benchmarks using LM-Eval-Harness. We report accuracy (\%) under different KV cache compression rates, together with perplexity on WikiText-2 and C4. A3 does not apply to Qwen3-32B. Bold marks the best result at each compression rate.}
\label{tab:main_results}\label{tab:main_results_others}
\resizebox{\textwidth}{!}{
\begin{tabular}{lcccccccccc}
\toprule
\multirow{2}{*}{\textbf{Model}} & \multirow{2}{*}{\textbf{Comp (\%)}} & \multicolumn{2}{c}{\textbf{Perplexity} $\downarrow$} & \multicolumn{7}{c}{\textbf{Zero-Shot Accuracy (\%)} $\uparrow$} \\
\cmidrule(lr){3-4} \cmidrule(lr){5-11}
 & & Wiki2 & C4 & OBQA & PIQA & ARC-e & ARC-c & Hella & Wino & Avg. \\
\midrule
LLaMA-2-7B & 0 & 5.12 & 7.04 & 44.00 & 77.64 & 76.26 & 46.16 & 75.98 & 69.22 & 64.88 \\
\cmidrule(lr){1-11}
\multirow{2}{*}{ASVD} & 50 & 6.02 & 8.22 & \textbf{43.80} & 76.22 & 72.77 & 42.66 & 72.18 & 67.40 & 62.51 \\
 & 70 & 6.95 & 9.48 & 40.80 & 74.54 & 70.08 & 38.65 & 69.15 & 64.64 & 59.64 \\
\cmidrule(lr){1-11}
\multirow{2}{*}{ERC-SVD} & 50 & 5.78 & 8.27 & 43.60 & 75.95 & 71.76 & 42.24 & 73.17 & 68.51 & 62.54 \\
 & 70 & 6.62 & 9.46 & 38.20 & 72.14 & 62.50 & 36.86 & 67.78 & 62.51 & 56.66 \\
\cmidrule(lr){1-11}
\multirow{2}{*}{SVD-LLM} & 50 & 5.77 & 7.99 & 41.80 & 75.79 & 72.52 & 41.81 & 72.49 & 65.82 & 61.70 \\
 & 70 & 6.77 & 9.90 & 37.60 & 72.47 & 63.93 & 36.52 & 67.35 & 60.77 & 56.44 \\
\cmidrule(lr){1-11}
\multirow{2}{*}{ZS-SVD} & 50 & 5.39 & 7.71 & 42.60 & 76.88 & 75.04 & \textbf{44.54} & 74.39 & \textbf{69.30} & 63.80 \\
 & 70 & 6.62 & 10.36 & 41.40 & 73.34 & 65.24 & 39.16 & 70.50 & 67.56 & 59.53 \\
\cmidrule(lr){1-11}
\multirow{2}{*}{A3} & 50 & 12.83 & 32.71 & 29.60 & 63.22 & 43.94 & 25.60 & 46.00 & 59.51 & 44.64 \\
 & 70 & 142.15 & 1581.78 & 26.80 & 54.35 & 29.76 & 20.99 & 28.80 & 51.70 & 35.40 \\
\cmidrule(lr){1-11}
\multirow{2}{*}{Palu} & 50 & 5.74 & 8.36 & 42.20 & 76.44 & 73.27 & 41.98 & 73.31 & 66.14 & 62.22 \\
 & 70 & 8.85 & 14.89 & 33.00 & 69.64 & 57.83 & 31.48 & 59.32 & 60.30 & 51.93 \\
\cmidrule(lr){1-11}
\multirow{2}{*}{STAR-KV} & 50 & 6.38 & 9.15 & 42.00 & 75.46 & 66.79 & 38.65 & 70.79 & 64.64 & 59.72 \\
 & 70 & 7.58 & 11.26 & 35.60 & 72.36 & 58.50 & 33.87 & 62.07 & 60.30 & 53.78 \\
\cmidrule(lr){1-11}
\multirow{2}{*}{Ours} & 50 & \textbf{5.23} & \textbf{7.37} & 43.60 & \textbf{77.64} & \textbf{75.21} & 43.86 & \textbf{74.77} & 68.19 & \textbf{63.88} \\
 & 70 & \textbf{5.68} & \textbf{8.46} & \textbf{41.80} & \textbf{76.55} & \textbf{71.09} & \textbf{40.10} & \textbf{70.97} & \textbf{68.75} & \textbf{61.54} \\
\midrule
LLaMA-3.1-8B-Inst & 0 & 6.75 & 11.10 & 43.00 & 79.92 & 81.82 & 55.12 & 79.20 & 73.56 & 68.77 \\
\cmidrule(lr){1-11}
\multirow{2}{*}{ASVD} & 50 & 8.29 & 14.03 & \textbf{43.60} & 79.49 & \textbf{80.22} & \textbf{54.52} & 76.93 & \textbf{73.48} & \textbf{68.04} \\
 & 70 & 19.57 & 26.68 & 38.00 & 74.81 & 70.96 & 42.92 & 67.84 & 68.67 & 60.53 \\
\cmidrule(lr){1-11}
\multirow{2}{*}{ERC-SVD} & 50 & 8.05 & 15.93 & 40.00 & 76.12 & 75.42 & 49.23 & 73.50 & 72.38 & 64.44 \\
 & 70 & 13.65 & 37.88 & 32.40 & 65.34 & 54.34 & 29.35 & 49.74 & 64.09 & 49.21 \\
\cmidrule(lr){1-11}
\multirow{2}{*}{SVD-LLM} & 50 & 8.31 & 17.87 & 42.00 & 76.28 & 73.91 & 47.35 & 72.06 & 71.27 & 63.81 \\
 & 70 & 14.75 & 50.91 & 32.60 & 62.51 & 53.96 & 29.44 & 43.94 & 62.59 & 47.51 \\
\cmidrule(lr){1-11}
\multirow{2}{*}{ZS-SVD} & 50 & 9.03 & 17.02 & 43.20 & 77.20 & 76.98 & 49.40 & 72.37 & 72.06 & 65.20 \\
 & 70 & 22.90 & 185.78 & 33.20 & 69.70 & 61.78 & 32.59 & 53.18 & 60.93 & 51.90 \\
\cmidrule(lr){1-11}
\multirow{2}{*}{A3} & 50 & 44.68 & 135.68 & 32.00 & 61.53 & 47.90 & 26.11 & 37.18 & 54.22 & 43.16 \\
 & 70 & 240.31 & 460.40 & 28.20 & 58.27 & 33.59 & 21.50 & 29.11 & 50.75 & 36.90 \\
\cmidrule(lr){1-11}
\multirow{2}{*}{Palu} & 50 & 9.79 & 37.56 & 42.40 & 77.86 & 77.74 & 49.66 & 70.89 & 71.43 & 65.00 \\
 & 70 & 27.78 & 119.11 & 33.40 & 66.43 & 49.41 & 30.80 & 50.10 & 56.67 & 47.80 \\
\cmidrule(lr){1-11}
\multirow{2}{*}{STAR-KV} & 50 & 8.95 & 15.87 & 39.40 & 76.06 & 74.87 & 45.65 & 71.05 & 65.82 & 62.14 \\
 & 70 & 10.54 & 18.72 & 38.00 & 75.19 & 69.19 & 40.44 & 66.03 & 62.43 & 58.55 \\
\cmidrule(lr){1-11}
\multirow{2}{*}{Ours} & 50 & \textbf{7.12} & \textbf{13.02} & \textbf{43.60} & \textbf{79.71} & 79.67 & 54.44 & \textbf{77.77} & 71.90 & 67.85 \\
 & 70 & \textbf{8.54} & \textbf{17.51} & \textbf{41.80} & \textbf{76.66} & \textbf{73.32} & \textbf{47.01} & \textbf{72.70} & \textbf{68.75} & \textbf{63.37} \\
\midrule
Qwen3-32B & 0 & 7.02 & 11.98 & 45.80 & 80.90 & 84.43 & 60.84 & 82.61 & 73.16 & 71.29 \\
\cmidrule(lr){1-11}
\multirow{2}{*}{ASVD} & 50 & 9.86 & 15.96 & 42.40 & 77.91 & 78.91 & 54.18 & 72.02 & 63.77 & 64.87 \\
 & 70 & 16.95 & 25.64 & 38.60 & 72.58 & 69.36 & 42.32 & 60.04 & 55.33 & 56.37 \\
\cmidrule(lr){1-11}
\multirow{2}{*}{ERC-SVD} & 50 & 8.29 & 15.27 & 43.80 & 76.93 & 77.53 & 57.25 & 77.13 & 69.14 & 66.96 \\
 & 70 & 18.62 & 45.32 & 34.00 & 65.51 & 55.93 & 34.39 & 49.70 & 55.80 & 49.22 \\
\cmidrule(lr){1-11}
\multirow{2}{*}{SVD-LLM} & 50 & 8.43 & 16.10 & 43.80 & 77.15 & 77.40 & 55.72 & 76.08 & 67.96 & 66.35 \\
 & 70 & 17.48 & 43.04 & 32.20 & 66.65 & 58.04 & 35.49 & 50.44 & 57.70 & 50.09 \\
\cmidrule(lr){1-11}
\multirow{2}{*}{ZS-SVD} & 50 & 8.47 & 16.64 & 42.00 & 75.03 & 78.32 & 52.30 & 74.85 & 69.22 & 65.29 \\
 & 70 & 15.99 & 38.33 & 28.40 & 59.68 & 46.59 & 27.56 & 43.47 & 56.20 & 43.65 \\
\cmidrule(lr){1-11}
\multirow{2}{*}{Palu} & 50 & 20.15 & 43.55 & 32.80 & 66.87 & 64.94 & 41.04 & 55.58 & 60.14 & 53.56 \\
 & 70 & 63.34 & 142.59 & 25.80 & 57.18 & 36.28 & 24.57 & 36.13 & 51.54 & 38.58 \\
\cmidrule(lr){1-11}
\multirow{2}{*}{STAR-KV} & 50 & 7.61 & 14.42 & 40.80 & 76.01 & 77.40 & 51.19 & 72.54 & 67.09 & 64.17 \\
 & 70 & 8.19 & 16.45 & 38.40 & 74.70 & 74.12 & 48.29 & 71.53 & 63.93 & 61.83 \\
\cmidrule(lr){1-11}
\multirow{2}{*}{Ours} & 50 & \textbf{7.01} & \textbf{12.35} & \textbf{45.40} & \textbf{80.25} & \textbf{84.18} & \textbf{64.08} & \textbf{82.50} & \textbf{75.06} & \textbf{71.91} \\
 & 70 & \textbf{7.38} & \textbf{14.16} & \textbf{46.80} & \textbf{78.67} & \textbf{81.86} & \textbf{62.71} & \textbf{79.78} & \textbf{72.85} & \textbf{70.45} \\
\bottomrule
\end{tabular}
}
\end{table*}

\begin{table*}[t]
\centering
\small
\caption{LongBench evaluation under different KV cache compression rates. A3 does not apply to Qwen3-32B. Bold marks the best result at each compression rate.}
\label{tab:longbench}\label{tab:longbench_others}
\resizebox{\textwidth}{!}{
\begin{tabular}{lccccccccc}
\toprule
{\textbf{Model}} & {\textbf{Comp (\%)}} & Qasper & QMSum & TriviaQA & MultiQA & TREC & MultiNews & VCSum & Avg. \\
\midrule
LLaMA-2-7B & 0 & 7.37 & 20.28 & 89.02 & 20.68 & 68.00 & 11.55 & 8.14 & 32.15 \\
\cmidrule(lr){1-10}
\multirow{2}{*}{ASVD} & 50 & 1.84 & 16.51 & 79.32 & 22.62 & 58.00 & 4.51 & 4.50 & 26.76 \\
 & 70 & 2.54 & 12.62 & 67.37 & 15.69 & 44.50 & 5.24 & 4.96 & 21.85 \\
\cmidrule(lr){1-10}
\multirow{2}{*}{ERC-SVD} & 50 & 4.86 & 17.49 & 84.46 & 20.73 & 57.00 & \textbf{6.75} & 2.81 & 27.73 \\
 & 70 & 1.00 & 14.47 & 65.76 & 11.44 & 40.50 & 6.95 & 1.60 & 20.25 \\
\cmidrule(lr){1-10}
\multirow{2}{*}{SVD-LLM} & 50 & 2.86 & 18.34 & 83.75 & 17.97 & 56.00 & 3.97 & 0.02 & 26.13 \\
 & 70 & 3.84 & 14.56 & 63.37 & 12.45 & 44.50 & \textbf{7.48} & 1.29 & 21.07 \\
\cmidrule(lr){1-10}
\multirow{2}{*}{ZS-SVD} & 50 & 7.24 & \textbf{20.02} & \textbf{88.18} & 20.70 & \textbf{65.50} & 5.24 & 0.15 & 29.58 \\
 & 70 & 4.67 & 15.87 & \textbf{77.83} & 17.28 & 49.00 & 3.60 & \textbf{6.49} & 24.96 \\
\cmidrule(lr){1-10}
\multirow{2}{*}{A3} & 50 & 6.46 & 5.80 & 6.61 & 13.82 & 25.50 & 0.08 & 1.67 & 8.56 \\
 & 70 & 0.16 & 1.66 & 0.32 & 0.61 & 0.50 & 1.29 & 0.02 & 0.65 \\
\cmidrule(lr){1-10}
\multirow{2}{*}{Palu} & 50 & 6.32 & 19.69 & 81.92 & \textbf{26.23} & 52.50 & 1.39 & 3.91 & 27.42 \\
 & 70 & 1.17 & 13.76 & 21.92 & 7.15 & 44.50 & 5.75 & 0.25 & 13.50 \\
\cmidrule(lr){1-10}
\multirow{2}{*}{STAR-KV} & 50 & 4.03 & 7.22 & 65.71 & 16.25 & 38.00 & 0.05 & 0.34 & 18.80 \\
 & 70 & 1.59 & 8.60 & 19.37 & 9.33 & 10.00 & 1.88 & 0.39 & 7.31 \\
\cmidrule(lr){1-10}
\multirow{2}{*}{Ours} & 50 & \textbf{7.66} & 19.63 & 84.58 & 25.68 & 64.50 & 4.51 & \textbf{7.47} & \textbf{30.57} \\
 & 70 & \textbf{6.35} & \textbf{18.89} & 70.30 & \textbf{20.40} & \textbf{59.50} & 2.34 & 5.13 & \textbf{26.13} \\
\midrule
LLaMA-3.1-8B-Inst & 0 & 34.46 & 22.96 & 92.75 & 39.48 & 75.00 & 26.74 & 15.93 & 43.90 \\
\cmidrule(lr){1-10}
\multirow{2}{*}{ASVD} & 50 & 11.88 & 23.09 & 90.58 & 25.65 & 54.00 & 24.67 & 15.20 & 35.01 \\
 & 70 & 6.32 & 16.63 & 78.06 & 13.70 & 50.00 & 17.51 & \textbf{5.48} & 26.81 \\
\cmidrule(lr){1-10}
\multirow{2}{*}{ERC-SVD} & 50 & 24.93 & 23.42 & 87.64 & 38.06 & 72.50 & 24.99 & 8.69 & 40.03 \\
 & 70 & 6.51 & 14.84 & 72.13 & 23.79 & 63.50 & 10.92 & 0.08 & 27.40 \\
\cmidrule(lr){1-10}
\multirow{2}{*}{SVD-LLM} & 50 & 23.50 & \textbf{24.61} & 88.23 & 37.61 & 70.00 & 24.68 & 10.92 & 39.93 \\
 & 70 & 2.73 & 13.62 & 60.88 & 18.66 & 62.00 & 6.18 & 0.03 & 23.44 \\
\cmidrule(lr){1-10}
\multirow{2}{*}{ZS-SVD} & 50 & 10.55 & 23.91 & 88.82 & 20.46 & 63.50 & 22.18 & 13.07 & 34.64 \\
 & 70 & 0.84 & 14.09 & 2.01 & 1.35 & 18.50 & 5.61 & 0.32 & 6.10 \\
\cmidrule(lr){1-10}
\multirow{2}{*}{A3} & 50 & 3.14 & 9.63 & 5.36 & 6.74 & 7.50 & 9.35 & 1.96 & 6.24 \\
 & 70 & 1.17 & 8.90 & 3.16 & 1.16 & 0.00 & 4.92 & 3.08 & 3.20 \\
\cmidrule(lr){1-10}
\multirow{2}{*}{Palu} & 50 & 9.39 & 22.80 & 28.43 & 19.86 & 65.00 & 19.15 & 3.10 & 23.96 \\
 & 70 & 1.95 & 12.77 & 2.85 & 2.90 & 0.50 & 6.97 & 0.14 & 4.01 \\
\cmidrule(lr){1-10}
\multirow{2}{*}{STAR-KV} & 50 & 10.50 & 18.82 & 77.25 & 20.13 & 55.50 & 18.59 & 9.46 & 30.03 \\
 & 70 & 5.02 & 16.22 & 38.57 & 11.14 & 23.00 & 16.17 & 0.99 & 15.87 \\
\cmidrule(lr){1-10}
\multirow{2}{*}{Ours} & 50 & \textbf{34.08} & 24.09 & \textbf{91.70} & \textbf{47.54} & \textbf{76.50} & \textbf{26.56} & \textbf{16.46} & \textbf{45.28} \\
 & 70 & \textbf{14.00} & \textbf{24.02} & \textbf{78.32} & \textbf{31.72} & \textbf{67.00} & \textbf{24.20} & 3.16 & \textbf{34.63} \\
\midrule
Qwen3-32B & 0 & 46.69 & 24.02 & 90.91 & 52.26 & 75.00 & 24.91 & 14.36 & 46.88 \\
\cmidrule(lr){1-10}
\multirow{2}{*}{ASVD} & 50 & 17.52 & 19.80 & 19.12 & 29.59 & 35.00 & 21.31 & 14.01 & 22.34 \\
 & 70 & 7.47 & 13.05 & 3.50 & 13.65 & 5.25 & 14.85 & 7.71 & 9.35 \\
\cmidrule(lr){1-10}
\multirow{2}{*}{ERC-SVD} & 50 & 42.64 & \textbf{24.21} & 87.77 & 47.64 & 67.00 & 25.76 & 8.19 & 43.32 \\
 & 70 & 18.86 & 17.77 & 21.70 & 26.82 & 1.50 & 14.93 & 0.06 & 14.52 \\
\cmidrule(lr){1-10}
\multirow{2}{*}{SVD-LLM} & 50 & 41.41 & 24.20 & 87.39 & 47.72 & 59.00 & \textbf{26.08} & 4.90 & 41.53 \\
 & 70 & 6.34 & 6.32 & 0.33 & 7.62 & 0.00 & 6.77 & 0.00 & 3.91 \\
\cmidrule(lr){1-10}
\multirow{2}{*}{ZS-SVD} & 50 & 43.74 & 21.71 & 86.85 & 47.17 & 69.00 & 24.30 & \textbf{15.24} & 44.00 \\
 & 70 & 2.27 & 12.22 & 2.13 & 23.08 & 0.00 & 9.99 & 0.20 & 7.13 \\
\cmidrule(lr){1-10}
\multirow{2}{*}{Palu} & 50 & 22.29 & 17.66 & 18.08 & 28.46 & 7.00 & 25.77 & 2.00 & 17.32 \\
 & 70 & 0.18 & 0.28 & 0.00 & 0.00 & 0.00 & 0.00 & 0.00 & 0.07 \\
\cmidrule(lr){1-10}
\multirow{2}{*}{STAR-KV} & 50 & 35.28 & 23.17 & 62.58 & 37.71 & 36.00 & 24.74 & 14.87 & 33.48 \\
 & 70 & 11.83 & 20.06 & 29.37 & 24.63 & 0.00 & 22.86 & \textbf{9.51} & 16.89 \\
\cmidrule(lr){1-10}
\multirow{2}{*}{Ours} & 50 & \textbf{47.28} & 23.71 & \textbf{90.67} & \textbf{52.28} & \textbf{77.50} & 25.80 & 13.21 & \textbf{47.21} \\
 & 70 & \textbf{40.09} & \textbf{23.37} & \textbf{79.87} & \textbf{48.17} & \textbf{64.25} & \textbf{25.51} & 0.37 & \textbf{40.23} \\
\bottomrule
\end{tabular}
}
\end{table*}

\begin{table*}[t]
\centering
\small
\caption{RULER evaluation at a context length of 16K under different KV cache compression rates. LLaMA-2-7B, whose context window is 4K tokens, is not evaluated, and A3 does not apply to Qwen3-32B. MK1/MK2: NIAH multi-key 1/2; MQ: NIAH multi-query; MV: NIAH multi-value; S1--S3: NIAH single 1--3; FWE: frequent words extraction; SQ: SQuAD QA. Bold marks the best result at each compression rate.}
\label{tab:ruler}
\resizebox{\textwidth}{!}{
\begin{tabular}{lccccccccccc}
\toprule
{\textbf{Model}} & {\textbf{Comp (\%)}} & MK1 & MK2 & MQ & MV & S1 & S2 & S3 & FWE & SQ & Avg. \\
\midrule
LLaMA-3.1-8B-Inst & 0 & 100.00 & 100.00 & 99.95 & 99.25 & 100.00 & 100.00 & 100.00 & 95.00 & 71.90 & 96.23 \\
\cmidrule(lr){1-12}
\multirow{2}{*}{ASVD} & 50 & 99.40 & 97.80 & 99.60 & 98.15 & \textbf{100.00} & 99.80 & 98.60 & 73.33 & 60.57 & 91.92 \\
 & 70 & 44.60 & 2.60 & 4.30 & 7.10 & 40.60 & 60.40 & 21.00 & 22.33 & 25.85 & 25.42 \\
\cmidrule(lr){1-12}
\multirow{2}{*}{ERC-SVD} & 50 & 97.80 & 94.80 & 97.85 & 95.95 & 99.80 & \textbf{100.00} & 99.40 & 87.40 & 62.52 & 92.84 \\
 & 70 & 79.80 & 36.80 & 70.70 & 73.85 & 97.40 & 88.00 & 18.80 & 6.93 & 34.98 & 56.36 \\
\cmidrule(lr){1-12}
\multirow{2}{*}{SVD-LLM} & 50 & 98.80 & 96.40 & 96.10 & 95.95 & 99.80 & 99.60 & 99.60 & 84.20 & 65.45 & 92.88 \\
 & 70 & 66.00 & \textbf{38.80} & 49.15 & 61.25 & 97.80 & 78.20 & 3.60 & 31.47 & 25.68 & 50.22 \\
\cmidrule(lr){1-12}
\multirow{2}{*}{ZS-SVD} & 50 & \textbf{99.80} & 96.20 & 93.50 & 86.15 & \textbf{100.00} & \textbf{100.00} & 99.20 & 82.40 & 60.45 & 90.86 \\
 & 70 & 0.00 & 0.00 & 0.00 & 0.00 & 22.60 & 0.00 & 0.00 & 18.27 & 0.13 & 4.56 \\
\cmidrule(lr){1-12}
\multirow{2}{*}{A3} & 50 & 1.60 & 0.00 & 0.00 & 1.10 & 0.00 & 0.00 & 0.00 & 13.67 & 2.42 & 2.09 \\
 & 70 & 0.00 & 0.00 & 0.00 & 0.00 & 0.00 & 0.00 & 0.00 & 0.80 & 0.40 & 0.13 \\
\cmidrule(lr){1-12}
\multirow{2}{*}{Palu} & 50 & 15.80 & 92.80 & 7.05 & 10.15 & 99.60 & 22.80 & 2.60 & 62.33 & 8.00 & 35.68 \\
 & 70 & 0.00 & 0.00 & 0.00 & 0.00 & 6.20 & 0.00 & 0.00 & 0.00 & 0.53 & 0.75 \\
\cmidrule(lr){1-12}
\multirow{2}{*}{STAR-KV} & 50 & 24.00 & 0.00 & 7.45 & 4.65 & 67.60 & 33.20 & 19.20 & 10.00 & 18.90 & 20.56 \\
 & 70 & 2.20 & 0.00 & 0.10 & 0.45 & 54.40 & 15.00 & 1.20 & 4.27 & 12.33 & 9.99 \\
\cmidrule(lr){1-12}
\multirow{2}{*}{Ours} & 50 & 99.00 & \textbf{98.40} & \textbf{99.65} & \textbf{99.90} & \textbf{100.00} & \textbf{100.00} & \textbf{99.80} & \textbf{94.33} & \textbf{65.70} & \textbf{95.20} \\
 & 70 & \textbf{88.80} & 8.60 & \textbf{97.00} & \textbf{96.55} & \textbf{100.00} & \textbf{99.80} & \textbf{73.80} & \textbf{58.13} & \textbf{44.72} & \textbf{74.16} \\
\midrule
Qwen3-32B & 0 & 100.00 & 100.00 & 100.00 & 100.00 & 100.00 & 100.00 & 100.00 & 94.93 & 77.77 & 96.97 \\
\cmidrule(lr){1-12}
\multirow{2}{*}{ASVD} & 50 & 70.40 & 9.60 & 67.80 & 66.65 & \textbf{100.00} & 98.00 & 98.20 & 57.13 & 2.50 & 63.36 \\
 & 70 & 31.20 & 0.00 & 14.05 & 34.65 & 94.20 & 92.20 & 21.60 & 18.13 & 7.77 & 34.87 \\
\cmidrule(lr){1-12}
\multirow{2}{*}{ERC-SVD} & 50 & 98.80 & 94.40 & \textbf{100.00} & 99.90 & 95.60 & 97.60 & \textbf{100.00} & 94.33 & 65.70 & 94.04 \\
 & 70 & 97.40 & 71.60 & 91.95 & 85.55 & \textbf{100.00} & \textbf{100.00} & 97.80 & 49.53 & 37.57 & 81.27 \\
\cmidrule(lr){1-12}
\multirow{2}{*}{SVD-LLM} & 50 & 92.00 & 95.20 & 99.80 & \textbf{100.00} & 98.60 & 95.80 & 99.40 & 94.20 & 65.28 & 93.36 \\
 & 70 & 87.80 & 49.40 & 87.85 & 75.20 & \textbf{100.00} & \textbf{100.00} & 52.20 & 45.93 & 35.55 & 70.44 \\
\cmidrule(lr){1-12}
\multirow{2}{*}{ZS-SVD} & 50 & 99.40 & 99.60 & \textbf{100.00} & 99.65 & \textbf{100.00} & 99.80 & \textbf{100.00} & \textbf{96.60} & 61.82 & 95.21 \\
 & 70 & 85.60 & 77.60 & 87.60 & 68.55 & 97.20 & 99.60 & 84.00 & 41.60 & 36.85 & 75.40 \\
\cmidrule(lr){1-12}
\multirow{2}{*}{Palu} & 50 & 99.00 & 79.60 & 80.15 & 72.10 & \textbf{100.00} & \textbf{100.00} & 99.40 & 65.07 & 44.28 & 82.18 \\
 & 70 & 56.00 & 0.80 & 16.30 & 25.70 & 99.20 & 55.00 & 23.20 & 0.20 & 10.25 & 31.85 \\
\cmidrule(lr){1-12}
\multirow{2}{*}{STAR-KV} & 50 & 89.00 & 25.00 & 89.20 & 92.05 & \textbf{100.00} & \textbf{100.00} & 84.20 & 5.73 & 49.98 & 70.57 \\
 & 70 & 36.40 & 0.00 & 42.00 & 41.10 & 99.40 & 94.80 & 65.80 & 1.80 & 28.47 & 45.53 \\
\cmidrule(lr){1-12}
\multirow{2}{*}{Ours} & 50 & \textbf{100.00} & \textbf{100.00} & \textbf{100.00} & \textbf{100.00} & \textbf{100.00} & \textbf{100.00} & 99.80 & 93.47 & \textbf{71.95} & \textbf{96.14} \\
 & 70 & \textbf{100.00} & \textbf{99.40} & \textbf{99.90} & \textbf{99.65} & \textbf{100.00} & 99.80 & \textbf{99.60} & \textbf{94.13} & \textbf{62.98} & \textbf{95.05} \\
\bottomrule
\end{tabular}
}
\end{table*}

\section{Why star-KV fails?}\label{appendix:alpha}

StarKV~\citep{bhatnagar2026starkv} proposes a differentiable low-rank compression strategy. It performs SVD-based decomposition and introduces a learnable threshold \(\alpha_i\) for each $i$, which aims to retain components whose singular values exceed \(\alpha_i\). Considering that this is non-differentiable, StarKV replaces it with a differentiable tanh-based soft-thresholding operator parameterized by \(\alpha_i\). This operator smoothly approximates the hard threshold while admitting gradients with respect to \(\alpha_i\), making \(\alpha_i\) fully learnable via backpropagation. The thresholds are jointly optimized with a hybrid objective combining a KL divergence between the original and compressed output distributions, \(\mathcal{L}_{KD}(\boldsymbol\alpha)\), and a compression regularizer \(\mathcal{L}_{\text{cmp}} = \sum_i e^{-\alpha_i}\) that drives \(\alpha_i\) upward:
\begin{small}
\begin{align}\label{eq:threshold-objective}
\mathcal{L}_{\text{StarKV}}
= \mathcal{L}_{KD}(\boldsymbol\alpha) + \lambda \mathcal{L}_{\text{cmp}},
\end{align}
\end{small}
where \(\boldsymbol\alpha = (\alpha_1,\dots,\alpha_N)\) and \(D_{\mathrm{KL}}\) denotes the sequence-level KL divergence.

However, the following analysis suggests that this strategy produces nearly identical thresholds across all heads, regardless of their eigenvalue distributions:



\begin{theorem}
\label{thm:synchronization}
At any finite stationary point $\boldsymbol\alpha^*$ of
\eqref{eq:threshold-objective}, define $g_i^*
= \partial_{\alpha_i}\mathcal L_{\mathrm{KD}}
(\boldsymbol\alpha^*), 
\bar g^*=\frac{1}{N}\sum\nolimits_i g_i^*, 
\rho^*=\frac{\max\nolimits_i|g_i^*-\bar g^*|}{\bar g^*}$.
Then $g_i^*>0$ and
\begin{small}
\begin{equation}
\max\nolimits_i\alpha_i^*-\min\nolimits_i\alpha_i^*
=
\log\frac{\max_i g_i^*}{\min_i g_i^*}
\le
\log\frac{1+\rho^*}{1-\rho^*},
\label{eq:threshold-concentration}
\end{equation}    
\end{small}

where the inequality holds whenever $\rho^*<1$.
\end{theorem}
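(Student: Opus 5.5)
The plan is to use the first-order stationarity condition of \eqref{eq:threshold-objective}, which decouples coordinatewise because the compression regularizer $\mathcal L_{\mathrm{cmp}}=\sum_i e^{-\alpha_i}$ is separable. At a finite stationary point $\boldsymbol\alpha^*$ (with $\lambda>0$) we have, for every $i$,
\[
0=\partial_{\alpha_i}\mathcal L_{\mathrm{StarKV}}(\boldsymbol\alpha^*)
= g_i^* - \lambda e^{-\alpha_i^*},
\qquad\text{so}\qquad
g_i^*=\lambda e^{-\alpha_i^*}.
\]
Since $\lambda>0$ and $e^{-\alpha_i^*}>0$ for finite $\alpha_i^*$, this immediately gives $g_i^*>0$. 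Solving for the threshold gives $\alpha_i^*=\log\lambda-\log g_i^*$.

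Next I would take differences. The term $\log\lambda$ cancels, and because $\alpha\mapsto-\log g$ is order-reversing, the index maximizing $\alpha_i^*$ is the one minimizing $g_i^*$, and vice versa. This yields
\[
\max_i\alpha_i^*-\min_i\alpha_i^*=\log\max_i g_i^*-\log\min_i g_i^*=\log\frac{\max_i g_i^*}{\min_i g_i^*},
\]
which is the equality in \eqref{eq:threshold-concentration}.

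For the inequality I would use the definition of $\rho^*$. Since all $g_i^*>0$, we have $\bar g^*>0$, so $\rho^*$ is well defined. Every $i$ satisfies $|g_i^*-\bar g^*|\le\rho^*\bar g^*$, which gives
\[
(1-\rho^*)\,\bar g^*\le g_i^*\le(1+\rho^*)\,\bar g^*.
\]
When $\rho^*<1$ the lower bound is strictly positive, so we may divide to get $\max_i g_i^*/\min_i g_i^*\le(1+\rho^*)/(1-\rho^*)$. Monotonicity of $\log$ then finishes the argument.

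The only real obstacle is making explicit the assumptions implicit in ``stationary point.'' First, $\mathcal L_{\mathrm{KD}}$ must be differentiable in $\boldsymbol\alpha$ at $\boldsymbol\alpha^*$, which holds because the tanh-based soft thresholding is smooth. Second, $\lambda>0$ must hold; otherwise positivity of $g_i^*$ fails. I would state both assumptions at the start of the proof. The rest is elementary.
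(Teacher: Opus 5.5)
Your proposal is correct and follows the paper's proof essentially step for step: the coordinatewise stationarity condition $g_i^*=\lambda e^{-\alpha_i^*}$ gives positivity and $\alpha_i^*=\log\lambda-\log g_i^*$, the $\log\lambda$ term cancels in the spread, and the two-sided bound $(1-\rho^*)\bar g^*\le g_i^*\le(1+\rho^*)\bar g^*$ gives the inequality when $\rho^*<1$. Your explicit assumptions, $\lambda>0$ and differentiability of $\mathcal L_{\mathrm{KD}}$ at $\boldsymbol\alpha^*$, match the paper's own remark that the positive sign of the exponential regularizer is essential.
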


Appendix~\ref{appendix:alpha} provides the proof and an extension to nonstationary checkpoints. Our empirical studies suggest that the value of $\rho^*$ is small at convergence.

We analyze the unconstrained objective
\[
\mathcal L(\boldsymbol\alpha)
=
\mathcal L_{\mathrm{KD}}(\boldsymbol\alpha)
+
\lambda\sum_{i=1}^{N}e^{-\alpha_i},
\qquad \lambda>0.
\]
The positive sign of the exponential regularizer is essential:
\[
\partial_{\alpha_i}\mathcal L
=
\partial_{\alpha_i}\mathcal L_{\mathrm{KD}}
-
\lambda e^{-\alpha_i}.
\]
Constrained thresholds or alternative parameterizations require
their corresponding stationarity conditions.

\subsection{Proof of Theorem~\ref{thm:synchronization}}

\begin{proof}
At a finite stationary point $\boldsymbol\alpha^*$,
\[
0
=
\partial_{\alpha_i}\mathcal L(\boldsymbol\alpha^*)
=
g_i^*-\lambda e^{-\alpha_i^*}.
\]
Therefore,
\[
g_i^*=\lambda e^{-\alpha_i^*}>0,
\qquad
\alpha_i^*=\log\lambda-\log g_i^*.
\]
In particular, $\bar g^*>0$, so $\rho^*$ is well defined.
Taking the maximum and minimum over $i$ gives
\[
\max_i\alpha_i^*
=
\log\lambda-\log\min_i g_i^*,
\qquad
\min_i\alpha_i^*
=
\log\lambda-\log\max_i g_i^*.
\]
Subtracting yields the exact identity
\[
\max_i\alpha_i^*-\min_i\alpha_i^*
=
\log\frac{\max_i g_i^*}{\min_i g_i^*}.
\]

By the definition of $\rho^*$,
\[
(1-\rho^*)\bar g^*
\le g_i^*
\le (1+\rho^*)\bar g^*.
\]
If $\rho^*<1$, the lower bound is positive, and hence
\[
\max_i\alpha_i^*-\min_i\alpha_i^*
\le
\log\frac{1+\rho^*}{1-\rho^*}.
\]
\end{proof}

The exact identity holds at every finite stationary point.
The condition $\rho^*<1$ is sufficient for the stated upper bound
but is not necessary for positivity of all gradients.
For small $\rho^*$,
\[
\log\frac{1+\rho^*}{1-\rho^*}
=
2\rho^*+O((\rho^*)^3).
\]

Our empirical studies suggest that $\rho^*$ lies between 0.5 and 0.72. Hence, the bound $\log(\frac{1+\rho^*}{1-\rho^*}) \approx 0.1$, which is a small value.

\subsection{A Measurable Bound Away from Stationarity}

At a finite checkpoint $\boldsymbol\alpha$, define the signed
distillation gradients and their mean by
\[
g_i
=
\partial_{\alpha_i}\mathcal L_{\mathrm{KD}}
(\boldsymbol\alpha),
\qquad
\bar g=\frac{1}{N}\sum_{i=1}^{N}g_i.
\]
Whenever $\bar g>0$, define
\begin{equation}
\rho
=
\frac{\max_i|g_i-\bar g|}{\bar g},
\qquad
\varepsilon
=
\frac{\|\nabla\mathcal L(\boldsymbol\alpha)\|_\infty}
     {\bar g}.
\label{eq:measurable-gradient-statistics}
\end{equation}
Thus $\rho$ measures relative gradient dispersion, while
$\varepsilon$ measures the stationarity residual relative to the
mean distillation gradient.

\begin{proposition}[Checkpoint-level concentration bound]
\label{prop:checkpoint-concentration}
If $\bar g>0$ and $\rho+\varepsilon<1$, then
\begin{equation}
D(\boldsymbol\alpha)
:=
\max_i\alpha_i-\min_i\alpha_i
\le
\log
\frac{1+\rho+\varepsilon}
     {1-\rho-\varepsilon}.
\label{eq:checkpoint-concentration}
\end{equation}
\end{proposition}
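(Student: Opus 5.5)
The plan mirrors the proof of Theorem~\ref{thm:synchronization}, with the exact stationarity equation replaced by an approximate one controlled by the residual $\varepsilon$. Write $r_i:=\partial_{\alpha_i}\mathcal L(\boldsymbol\alpha)=g_i-\lambda e^{-\alpha_i}$. By the definition of $\varepsilon$ in \eqref{eq:measurable-gradient-statistics}, $|r_i|\le\|\nabla\mathcal L(\boldsymbol\alpha)\|_\infty=\varepsilon\bar g$ for every $i$. This gives the two-sided sandwich
\[
g_i-\varepsilon\bar g\;\le\;\lambda e^{-\alpha_i}\;\le\;g_i+\varepsilon\bar g .
\]

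Next I combine this with the dispersion bound. The definition of $\rho$ gives $(1-\rho)\bar g\le g_i\le(1+\rho)\bar g$, so
\[
(1-\rho-\varepsilon)\bar g\;\le\;\lambda e^{-\alpha_i}\;\le\;(1+\rho+\varepsilon)\bar g
\qquad\text{for all } i .
\]
The hypotheses $\bar g>0$ and $\rho+\varepsilon<1$ make the lower bound strictly positive. This is the only delicate point, because it is what allows taking logarithms; the quantity $\lambda e^{-\alpha_i}$ is positive anyway since $\lambda>0$.

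Taking logarithms yields
\[
\log\lambda-\log\bigl((1+\rho+\varepsilon)\bar g\bigr)\;\le\;\alpha_i\;\le\;\log\lambda-\log\bigl((1-\rho-\varepsilon)\bar g\bigr)
\]
uniformly in $i$. Subtracting the lower bound, applied at the minimizing index, from the upper bound, applied at the maximizing index, cancels $\log\lambda$ and $\log\bar g$. The result is $D(\boldsymbol\alpha)\le\log\frac{1+\rho+\varepsilon}{1-\rho-\varepsilon}$, which is \eqref{eq:checkpoint-concentration}.

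I do not expect a real obstacle. The step most worth checking is the bound $|r_i|\le\varepsilon\bar g$: it uses the sup-norm of the full gradient including the regularizer term, with the positive sign convention noted before the proof of Theorem~\ref{thm:synchronization}. As a sanity check, setting $\varepsilon=0$ should recover Theorem~\ref{thm:synchronization}.
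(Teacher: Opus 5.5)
Your proposal is correct and follows the paper's proof essentially step for step. It sandwiches $\lambda e^{-\alpha_i}=g_i-r_i$ between $(1-\rho-\varepsilon)\bar g$ and $(1+\rho+\varepsilon)\bar g$ using $|g_i-\bar g|\le\rho\bar g$ and $|r_i|\le\varepsilon\bar g$, then takes logarithms (legitimate since $\rho+\varepsilon<1$) and subtracts the extreme indices.
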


\begin{proof}
Write
\[
r_i
=
\partial_{\alpha_i}\mathcal L(\boldsymbol\alpha)
=
g_i-\lambda e^{-\alpha_i}.
\]
By the definitions in
\eqref{eq:measurable-gradient-statistics},
\[
|g_i-\bar g|\le\rho\bar g,
\qquad
|r_i|\le\varepsilon\bar g.
\]
Consequently,
\[
(1-\rho-\varepsilon)\bar g
\le
g_i-r_i
=
\lambda e^{-\alpha_i}
\le
(1+\rho+\varepsilon)\bar g.
\]
The lower endpoint is positive because
$\rho+\varepsilon<1$.
Taking logarithms gives
\[
\log\lambda-\log\bigl((1+\rho+\varepsilon)\bar g\bigr)
\le\alpha_i
\le
\log\lambda-\log\bigl((1-\rho-\varepsilon)\bar g\bigr).
\]
Subtracting the endpoints proves
\eqref{eq:checkpoint-concentration}.
\end{proof}

At exact stationarity, $\varepsilon=0$, recovering the bound
in Theorem~\ref{thm:synchronization}.
If $\rho+\varepsilon\ge1$, the proposition provides no finite
certificate; this does not imply that the thresholds are dispersed.

\subsection{Empirical Evaluation}

All quantities in the bound can be evaluated at a saved checkpoint.
For a calibration objective
\[
\mathcal L_{\mathrm{KD}}(\boldsymbol\alpha)
=
\frac{1}{|\mathcal C|}
\sum_{s\in\mathcal C}
\ell_s(\boldsymbol\alpha),
\]
compute the signed gradients
\[
g_i
=
\frac{1}{|\mathcal C|}
\sum_{s\in\mathcal C}
\partial_{\alpha_i}\ell_s(\boldsymbol\alpha).
\]
The model and threshold parameters are held fixed while these
gradients are accumulated.
The averaging and weighting must match those used to define
the training objective.

Next, compute
\[
\bar g=\frac{1}{N}\sum_i g_i,
\qquad
\eta=\max_i|g_i-\lambda e^{-\alpha_i}|,
\qquad
D=\max_i\alpha_i-\min_i\alpha_i.
\]
When $\bar g>0$, report
\[
\rho=\frac{\max_i|g_i-\bar g|}{\bar g},
\qquad
\varepsilon=\frac{\eta}{\bar g}.
\]
When $\rho+\varepsilon<1$, also report the certified upper bound
\[
B=
\log\frac{1+\rho+\varepsilon}{1-\rho-\varepsilon}
\]
and compare it with the observed diameter $D$.

These statistics should be evaluated over training and across
regularization strengths and random seeds.
Gradients should be averaged before computing their dispersion;
averaging per-minibatch values of $\rho$ does not produce the same
quantity.
If only a subset of calibration data is used, the resulting
statistics estimate those of the full objective and are not a
rigorous certificate for it without controlling estimation error.
If $\bar g\le0$, the relative statistics above are not applicable.
If $\bar g$ is close to zero, numerical and sampling errors require
particular care.

\subsection{Scope and Interpretation}

Theorem~\ref{thm:synchronization} is a stationary-point
characterization, not a convergence theorem.
At stationarity, concentrated positive distillation gradients
and concentrated thresholds are linked by an exact identity.
Observing both at the same stationary checkpoint therefore does
not independently establish that gradient balancing caused
threshold concentration.

Although $\lambda$ cancels explicitly from the stationary spread
identity, it can affect the stationary point and its gradients.
The result consequently does not imply concentration independent
of the regularization strength.

Finally, concentrated thresholds need not imply uniform ranks:
the same threshold applied to different singular spectra can retain
different numbers of components.
Claims about rank-allocation quality must therefore be supported
separately.

\section{Other related works}
\paragraph{KV-specific low-rank compression.}
KV-specific methods include LoRC~\citep{zhang2024lorc} and PALU~\citep{chang2024palu} (factorized key/value projections), TALE~\citep{lee2025tale} (token-adaptive ranks), SALS~\citep{mu2026sals} (latent sparse attention), EchoKV~\citep{ji2026echokv} (on-demand reconstruction), and xKV~\citep{chang2025xkv} (shared subspaces across layers). A\textsuperscript{3}~\citep{wong2025a3} develops analytical low-rank approximations for attention components, while STAR-KV~\citep{bhatnagar2026starkv} learns soft thresholds for adaptive rank control. Unlike reconstruction-centered methods, \ours\ weights approximation error by attention behavior; unlike independent threshold learning, its rank allocator satisfies a hard model-wide budget.

\end{document}